\newcommand{\AlgName}{{Spectral Dynamics Embedding}\xspace}
\newcommand{\algabb}{{SPEDE}\xspace}
\newcommand*{\dif}{\mathop{}\!\mathrm{d}}
\title{A Free Lunch from the Noise:\\
Provable and Practical Exploration for Representation Learning}
\author[1, 2, $^\star$]{\href{mailto:<tongzheng@utexas.edu>?Subject=Your UAI 2022 paper}{Tongzheng Ren }{}}
\author[3, $^\star$]{\href{mailto:<tianjunz@berkeley.edu>?Subject=Your UAI 2022 paper}{Tianjun Zhang }{}}
\author[4, 5]{Csaba Szepesv\'{a}ri~}
\author[2]{\href{mailto:<bodai@google.com>?Subject=Your UAI 2022 paper}{Bo Dai}{}
}
\affil[1]{%
    Department of Computer Science, UT Austin
}
\affil[2]{%
    Google Research, Brain Team
}
\affil[3]{
    Department of EECS, UC Berkeley
  }
\affil[4]{DeepMind}
\affil[5]{Department of Computer Science, University of Alberta}
\begin{document}
\maketitle

\begin{abstract}
Representation learning lies at the heart of the empirical success of deep learning for dealing with the curse of dimensionality. However, the power of representation learning has not been fully exploited yet in reinforcement learning (RL), due to {\bf i)}, the trade-off between expressiveness and tractability; and {\bf ii)}, the coupling between exploration and representation learning. In this paper, we first reveal the fact that under some noise assumption in the stochastic control model, we can obtain the linear spectral feature of its corresponding Markov transition operator in closed-form \emph{for free}. Based on this observation, we propose~\emph{\AlgName~(\algabb)}, which breaks the trade-off and completes optimistic exploration for representation learning by exploiting the structure of the noise. We provide rigorous theoretical analysis of~\algabb, and demonstrate the practical superior performance over the existing state-of-the-art empirical algorithms on several benchmarks. 
\end{abstract}

\let\thefootnote\relax\footnotetext{$^\star$ Equal Contribution}


\section{Introduction}


Reinforcement learning~(RL) dedicates to solve the sequential decision making problem, where an agent is interacting with an \emph{unknown} environment to find the best policy that maximizes the expected cumulative rewards~\citep{sutton2018reinforcement}. It is known that the tabular algorithms direct controlling over the original state and action in Markov decision processes~(MDPs) achieve the minimax-optimal regret depending on the cardinality of the state and action space~\citep{jaksch2010near, azar2017minimax,jin2018q}. However, these algorithms become computationally intractable for the real-world problems with an enormous number of states.
Learning with function approximations upon \emph{good} representation is a natural idea to tackle such computational issue, which has already demonstrated its effectiveness in the success of deep learning~\citep{bengio2013representation}. In fact, representation learning lies at the heart of the empirical successes of deep RL in video games~\citep{mnih2013playing}, robotics~\citep{levine2016end}, Go~\citep{silver2017mastering} to name a few. Meanwhile, the importance and benefits of the representation in RL is rigorously justified~\citep{jin2020provably,yang2020reinforcement}, which quantifies the regret in terms of the dimension of the \emph{known} representation based on a subclass in MDPs~\citep{puterman2014markov}. A natural question raises:
\begin{center}
\emph{How to design {\bf provably efficient} and {\bf practical} algorithm for representation learning in RL?}
\end{center}
Here, by ``provably efficient'' we mean the sample complexity of the algorithm can be rigorously characterized only in terms of the complexity of representation class, without explicit dependency on the number of states and actions, while by ``practical'' we mean the algorithm can be implemented and deployed for the real-world applications. Therefore, we not only require the representation learned is expressive enough for handling complex practical environments, but also require the operations in the algorithm tractable and computation/memory efficient.    
The major difficulty of this question lies in two-fold: 
\begin{itemize}
    \item[{\bf i)}] The \emph{trade-off} between the expressiveness and the tractability in the design of the representations;

    \item[{\bf ii)}] The learning of representation is intimately \emph{coupled} with exploration.
\end{itemize}

Specifically, a desired representation should be sufficiently expressive\footnote{For a formal definition of expressiveness, see \citep{agarwal2020flambe}.} to capture the practical dynamic systems, while still computationally tractable. However, in general, expressive representation leads to complicated optimization in learning. 
For example, the representation in the linear MDP is \emph{exponential stronger} than the latent variable MDPs in terms of expressiveness~\citep{agarwal2020flambe}. However, its representation learning depends on either a MLE oracle that is computationally intractable due to the constraint on the regularity of conditional density~\citep{agarwal2020flambe}, or an optimization oracle that can solve complicated constrained $\min$-$\max$-$\min$-$\max$ optimization~\citep{modi2021model}. On the other hand,~\citet{misra2020kinematic} considers the representation introduced by an encoder in block MDP~\citep{du2019provably}, in which the learning problem can be completed by a regression, but with the payoff that the representations in block MDP is even weaker than the latent variable MDP~\citep{agarwal2020flambe}. 

Meanwhile, the coupling of the representation learning and exploration also induces the difficulty in \emph{practical} algorithm design and analysis. Specifically, one cannot learn a precise representation without enough experiences from a comprehensive exploration, while the exploration depends on a reliable estimation of the representation. Most of the known results depends on a policy-cover-based exploration~\citep{du2019provably,misra2020kinematic,agarwal2020flambe,modi2021model}, which maintains and samples a set of policies during training for systematic exploration, that significantly increases the computation and memory cost in implementation.

In this work, we propose~\emph{\AlgName~(\algabb)}, 
dealing with the aforementioned difficulties appropriately, and thus, answering the question affirmatively.~\algabb is established on a connection between the stochastic control dynamics~\citep{osband2014model,kakade2020information} and linear MDPs in~\secref{sec:algorithm}. Specifically, by exploiting the property of the \emph{noise} in the stochastic control dynamics, we can recover the factorization of its corresponding Markov transition operator in closed-form \emph{without extra computation}. This equivalency immediately overcomes the computational intractability in the linear MDP estimation via the corresponding control dynamics form, and thus, breaks the trade-off between expressiveness and tractability. Meanwhile, as a byproduct, the linear MDP reformulation also introduce efficient planning for optimal policy in nonlinear control through the linear sufficient feature from the spectral space of Markov operator, while in most model-based RL, planning is conducted by treating learned model as simulator, and thus, is inefficient and sub-optimal.


More importantly, the two faces of one model also provide the opportunity to tackle the coupling between representation learning and exploration. The optimism in the face of uncertainty principle can be easily implemented through Thompson sampling w.r.t. the stochastic nonlinear dynamics, which leads to the posterior of representations implicitly, while bypasses the unidentifiability issue in directly characterizing the representation, therefore, can be theoretically justified. 

We rigorously characterize the statistical property of~\algabb in terms of regret w.r.t. the complexity of representation class in~\secref{sec:analysis}, without explicit dependence on the size of raw state space and action space. With the established unified view, our results generalize online control~\citep{kakade2020information} and linear MDP~\citep{jin2020provably} beyond \emph{known} features. We finally demonstrate the superiority of~\algabb on the MuJoCo benchmarks in~\secref{sec:experiments}. It significantly outperforms the empirical state-of-the-art RL algorithms. To our knowledge, \algabb is the first representation learning algorithm achieving statistical, computational, and memory efficiency with sufficient expressiveness.  
 

\subsection{Related Work}


There have been many great attempts on {\bf algorithmic representation learning} in RL for different purposes, \eg, bisimulation~\citep{ferns2004metrics,gelada2019deepmdp}, reconstruction~\citep{hafner2019learning}.
Recently, there are also several works considering the spectral features based on decomposing different variants of the transition operator, including successor features~\citep{dayan1993improving,kulkarni2016deep}, proto-value functions~\citep{mahadevan2007proto,wu2018laplacian}, spectral state-aggregation~\citep{duan2018state,zhang2019spectral}, and contrastive fourier features~\citep{nachum2021provable}. These works are highly-related to the proposed~\algabb. Besides these features focus on \emph{state-only} representation, the major differences between~\algabb and these spectral features lie in {\bf i)}, the target operators in existing spectral features are \emph{state-state} transition, which cancel the effect of action; {\bf ii)}, the target operators are estimated based on empirical data from a \emph{fixed behavior policy} under the implicit assumption that the estimated operator is \emph{uniformly accurate}, ignoring the major difficulty in exploration, while \algabb carefully designed the systematic exploration with theoretical guarantee; {\bf iii)}, most of the existing spectral features rely on \emph{explicitly} decomposition of the operators, while \algabb obtains the spectral \emph{for free}.  


Turning to the {\bf theoretically-justified representation learning with online exploration}, a large body of effort focuses on the policy-cover-based exploration~\citep{du2019provably,misra2020kinematic,agarwal2020flambe,modi2021model}. 
The major difficulty impedes their practical application is the computation and memory cost: the policy-cover-based exploration requires a set of exploratory polices to be maintained and sampled from during training, which can be extremely expensive. 
\citet{uehara2021representation} introduced a UCB mechanism that can enforce exploration without the requirements on maintaining the policy cover. However, the algorithm requires an MLE oracle for unnormalized conditional statistical model, which still prevents us from applying the algorithm in practice until recent attempt~\citep{zhang2022making} using contrastive learning to replace the MLE. 

Another two related lines of research are {\bf model-based RL} and {\bf online control}, which are commonly known overlapped but separate communities considering different formulations of the dynamics. Our finding bridges these two communities by establishing the equivalency between standard models that are widely considered in the corresponding communities.~\citet{osband2014model} and~\citet{kakade2020information} are the most related to our work in each community. These models generalize their corresponded linear models, \ie,~\citet{jin2020provably} and~\citet{cohen2019learning}, with general nonlinear model and kernel function within a known RKHS, respectively. The regret of the optimistic (pessimistic) algorithm has been carefully characterized for these models. However, both of the proposed algorithms in~\citet{osband2014model} and~\citet{kakade2020information} require a planning oracle to seek the optimal policy, which might be computationally intractable. In~\algabb, this is easily handled in the equivalent linear MDP. 


\section{Preliminaries}

Markov Decision Process (MDP) is one of the most standard models studied in the reinforcement learning that can be denoted by the tuple $\mathcal{M} = (\mathcal{S}, \mathcal{A}, r, T, \rho, H)$, where $\mathcal{S}$ is the state space, $\mathcal{A}$ is the action space, $r:\mathcal{S} \times \mathcal{A} \to \mathbb{R}^+$ is the reward function\footnote{In general, the reward can be stochastic. Here for simplicity we assume the reward is deterministic and known throughout the paper, which is a common assumption in the literature \citep[\eg,][]{jin2018q, jin2020provably, kakade2020information}.} (where $\mathbb{R}^+$ denotes the set of non-negative real numbers), $T:\mathcal{S}\times \mathcal{A} \to \Delta(\mathcal{S})$ is the transition and $\rho$ is an initial state distribution and $H$ is the horizon\footnote{Our method can be generalized to infinite horizon case, see Section \ref{sec:practical_alg} for the detail.} (\ie, the length of each episode). A (potentially non-stationary) policy $\pi$ can be defined as $\{\pi_h\}_{h\in [H]}$ where $\pi_h:\mathcal{S}\to\Delta(\mathcal{A}), \forall h\in [H]$. Following the standard notation, we define the value function $V_h^\pi(s_h) := \mathbb{E}_{T, \pi}\left[\sum_{t=h}^{H-1} r(s_t, a_t)|s_h = s\right]$ and the action-value function (\ie, the $Q$ function) $Q_h^\pi(s_h, a_h) = \mathbb{E}_{T, \pi}\left[\sum_{t=h}^{H-1} r(s_t, a_t)|s_h = s, a_h = a\right]$, which are the expected cumulative rewards under transition $T$ when executing policy $\pi$ starting from $s_h$ and $(s_h, a_h)$. With these two definitions at hand, it is straightforward to show the following Bellman equation:
\begin{align*}
    Q_{h}^\pi(s_h, a_h) = r(s_h, a_h) + \mathbb{E}_{s_{h+1}\sim T(\cdot|s_h, a_h)} \sbr{V_{h+1}^\pi(s_{h+1})}.
\end{align*}
Most of RL algorithms aim at finding the optimal policy $\pi^* = \mathop{\arg\max}_{\pi} \mathbb{E}_{s\sim \rho} \sbr{V_0^\pi(s)}$ under MDPs. It is well known that in the tabular setting when the state space and action space are finite, we can provably identify the optimal policy with both sample-efficient and computational-efficient optimism-based methods \citep[\eg][]{azar2017minimax} with the complexity proportion to $\mathrm{poly}(|\mathcal{S}|, |\mathcal{A}|)$. However, in practice, the cardinality of state and action space can be large or even infinite. Hence, we need to incorporate function approximation into the learning algorithm when we deal with such cases. The linear MDP \citep{yang2019sample, jin2020provably, yang2020reinforcement} or low-rank MDP \citep{agarwal2020flambe, modi2021model} is the most well-known MDP class that can incorporate linear function approximation with theoretical guarantee, thanks to the following assumption on the transition and reward:
\begin{align}
\label{eq:linear_transition}
    T(s^\prime|s, a) = \langle \phi(s, a),  \mu(s^\prime)\rangle_{\mathcal{H}},\quad r(s, a) = \langle \phi\rbr{s, a}, \theta \rangle_\Hcal,
\end{align}
where $\phi:\mathcal{S}\times \mathcal{A} \to \mathcal{H}$, $\mu:\mathcal{S}\to\mathcal{H}$ are two feature maps and $\mathcal{H}$ is a Hilbert space. The most essential observation for them is that, $Q_h^\pi(s, a)$ for any policy $\pi$ is linear w.r.t $\phi(s_h, a_h)$, due to the following observation \citep{jin2020provably}:
\begin{align}
    & Q_h^\pi(s_h, a_h)= r(s_h, a_h) + \int V_{h+1}^\pi(s_{h+1}) T(s_{h+1}|s_h, a_h) \dif s_{h+1}   \nonumber\\
    & = \left\langle \phi(s_h, a_h), \theta + \int V_{h+1}^\pi(s_{h+1}) \mu(s_{h+1}) \dif s_{h+1}\right\rangle_{\mathcal{H}}.
\label{eq:linear_Q}
\end{align}
Therefore, $\phi$ serves as a sufficient representation for the estimation of $Q_h^\pi$, that can provide uncertainty estimation with standard linear model analysis and eventually lead to sample-efficient learning when $\phi$ is fixed and known to the agent \citep[see Theorem 3.1 in][]{jin2020provably}.
However, we in general do not have such representations in advance\footnote{One exception is the tabular MDP, where we can choose $\phi:\mathcal{S}\times \mathcal{A}\to \mathbb{R}^{|\mathcal{S}||\mathcal{A}|}$ that each state-action pair has exclusive one non-zero element and $\mu:\mathcal{S}\to \mathbb{R}^{|\mathcal{S}||\mathcal{A}|}$ correspondingly defined to make \eqref{eq:linear_transition} hold.} and we need to learn the representation from the data, which constraints the applicability of the algorithms derived with fixed and known representation.


\paragraph{Remark (Model-based RL vs. RL with Representation):} 
We would like to emphasize that, although most of the existing representation learning methods need to learn the transition~\citep{du2019provably,misra2020kinematic,agarwal2020flambe,uehara2021representation}, RL with representation learning is related but perpendicular to the concept of model-based RL. 
The major difference lies in how to use the learned transition for planning (\ie~finding the optimal policy). 
In vanilla model-based RL methods~\citep[\eg,][]{sutton1990integrated,chua2018deep,kurutach2018model}, the learned transition is played as a simulator generating samples for policy improvement; while in representation-based RL, the representation is extracted from the learned transition to compose the policy explicitly, which is significantly efficient comparing to the model-based RL methods.
\section{\AlgName}\label{sec:algorithm}
It is naturally to consider how to perform sample-efficient representation learning (and hence sample-efficient reinforcement learning) that satisfies \eqref{eq:linear_transition} in an online manner. The most straightforward idea is performing the maximum likelihood estimation (MLE) in the representation space \citep[\eg,][]{agarwal2020flambe}. Unfortunately, for general cases, such MLE is intractable, due to the constraints on the regularity of marginal distribution (\ie, $\langle \phi(s, a), \int_{s^\prime} \mu(s^\prime) \dif s^\prime\rangle = 1$) for all $(s, a)\in\mathcal{S}\times \mathcal{A}$. 
Moreover, even we can perform MLE for certain cases (for example, the block MDP), as the representation is estimated from the data, which can be inaccurate, most of the existing work apply the policy cover technique \citep{du2019provably, misra2020kinematic, agarwal2020flambe, modi2021model} to enforce exploration.
However, such procedures can be both computational and memory expensive when we need amounts of exploratory policy to guarantee the coverage of whole state space, which makes it not a practical choice. 

To overcome these issues, we introduce \AlgName~(\algabb), which leverages the noise structure to provide a simple but provable efficient and practical algorithm for representation learning in RL. We first introduce our key observation, which induces the equivalency between linear MDP and stochastic nonlinear control.


\subsection{Key Observation}
Our fundamental observation is that, the density of isotropic Gaussian distribution can be expressed as the inner product of two feature maps, thanks to the reproducing property and the random Fourier transform of the Gaussian kernel\footnote{We provide a brief review on the related definitions in Appendix \ref{sec:background}.} \citep{rahimi2007random}:

\begin{tcolorbox}[colback=cyan!5!white,colframe=cyan!75!black]
\begin{align}
    &\phi(x| \mu, \sigma^2 I) \propto  \exp\left(-\frac{\|x - \mu\|^2}{2\sigma^2}\right) \nonumber\\
    = & \langle k(x, \cdot), k(\mu, \cdot)\rangle_{\mathcal{H}} \quad \textit{(Reproducing Property)} \label{eq:reproducing_property}\\
    = &  \inner{\varphi(x, \omega, b)}{\varphi(\mu, \omega, b)}_{p(\omega, b)}\quad \textit{(Random Fourier)},\label{eq:random_feature}
\end{align}
\end{tcolorbox}
where $k(\cdot, \cdot)$ is the Gaussian kernel with bandwidth $\sigma$: $k(x, y) = \exp\left(-\frac{\|x - y\|_2^2}{2\sigma^2}\right)$, $\mathcal{H}$ is the Reproducing Kernel Hilbert Space (RKHS) associated with $k$, $\varphi(x, \omega, b) = \sqrt{2}\cos(\omega^\top x + b)$, $\langle f, g\rangle_p = \mathbb{E}_{p(x)}[f(x)g(x)]$ and $p(\omega, b) =\mathcal{N}(\omega; 0, 1/\sigma^2 I)\cdot \Ucal(b; [0, 2\pi])$ with $\Ncal$ and $\Ucal$ denoting Gaussian and Uniform distribution, respectively.

Consider the general transition dynamics,
\begin{align}
\label{eq:control_model}
    & s^\prime = f^*(s, a) + \epsilon,\quad \epsilon\sim \Ncal(0, \sigma^2), \\
    \text{or equivalently}& \quad T(s'|s, a)\propto \exp\rbr{-\frac{\nbr{s' - f^*(s, a)}^2}{2\sigma^2}},
\end{align}
which is a widely used setup in the empirical model-based reinforcement learning \citep[\eg,][]{chua2018deep, kurutach2018model, clavera2018model, wang2019benchmarking}, and the online (non)-linear control \citep[\eg,][]{abbasi2011regret, mania2019certainty, mania2020active, simchowitz2020naive, kakade2020information}. Here $s\in\mathbb{R}^d$, $a\in\mathcal{A}$ that can be continuous and $f^*$ is a dynamic function. 

By applying the reproducing property~\eqref{eq:reproducing_property} or random Fourier transform~\eqref{eq:random_feature} for the transition dynamics~\eqref{eq:control_model}, we can obtain the feature $\phi$ and $\mu$ satisfies \eqref{eq:linear_transition} \emph{for free}. Specifically, taking the reproducing property as an example, we have that
\begin{align}
\label{eq:control_model_linear}
    T(s^\prime|s, a)  = \langle k(f^*(s, a), \cdot), (2\pi\sigma^2)^{-d/2} k(s^\prime, \cdot))\rangle_{\mathcal{H}},
\end{align}
which means the problem \eqref{eq:control_model} is indeed a linear MDP with $\phi(s, a) = k(f^*(s, a), \cdot)$ and $\mu(s^\prime) = (2\pi\sigma^2)^{-d/2} k(s^\prime, \cdot)$. Following \eqref{eq:linear_Q}, we know $Q(s, a)$ is in the linear span of the $\phi(s, a)$ that is transformed from $f^*(s, a)$. Therefore, finding a good representation of $Q(s, a)$ is equivalent to finding a good estimation of $f^*$. In the next section, we will show that, with the well-known optimism in the face of uncertainty (OFU) principle, we can estimate $f^*$ in an online manner with a both sample-efficient in terms of regret and computational-efficient algorithm.


\paragraph{Remark (Computation-free Factorizable Noise Model):} We remark that, similar observations also hold for large amounts of distributions, \eg, the Laplace and Cauchy distribution. We refer the interested reader to Table 1 in \citet{dai2014scalable} for the known transformation of kernels and features. Here we focus on the Gaussian noise.
\paragraph{Remark (Reward Factorization):} In the definition of linear MDP~\eqref{eq:linear_transition}, the reward function $r(s, a)$ should also have the ability to be linearly represented by $\phi\rbr{s, a}$. This can be implemented by augmenting $[\phi\rbr{s, a}, r(s, a)]$ as the new representation, therefore, we neglect the reward function throughout the paper. 


\subsection{Practical Algorithm Description}
\label{sec:practical_alg}
Here, we introduce a generic Thompson Sampling (TS) type algorithm in Algorithm \ref{alg:TS} based on the OFU principle that leverage our observation at the previous section.
At the beginning, we provide a prior distribution $\mathbb{P}(f)$ that reflects our prior knowledge on $f^*$. Then for each episode, we draw a $f$ from the posterior, find the optimal policy with $f$ using the planning algorithm, execute this policy and eventually inference the posterior with the new observation. Notice that, we choose the policy optimistically with an \emph{sampled} $f$, which enforces the exploration following the principle of OFU. Meanwhile, we only learn the dynamic with posterior inference {\color{black} and directly obtain the representation with \eqref{eq:reproducing_property} or \eqref{eq:random_feature}, which avoids additional error from the representation learning step}. As all of our data is collected with $f^*$, our posterior will shrink to a point mass of $f^*$, which guarantees we can identify good representation and good policy with sufficient number of data.

\begin{algorithm}[tb]
\caption{Thompson Sampling (TS) Algorithm}
\label{alg:TS}
\begin{algorithmic}[1]
\Require Number of Episodes $K$, Prior Distribution $\mathbb{P}(f)$, Reward Function $r(s, a)$.
\State Initialize the history set $\mathcal{H}_0 = \emptyset$.
\For{episodes $k=1, 2, \cdots$}
\State {\color{blue} Sample $f_k \sim \mathbb{P}(f|\mathcal{H}_k)$.} \Comment{Draw the Representation.}
\State {\color{blue} Find the optimal policy $\pi_k$ on $f_k$ with Algorithm \ref{alg:planning}.}\label{line:planning}\Comment{Planning with $f_k$.}
\For{steps $h=0, 1, \cdots, H-1$}\Comment{Executing $\pi_k$.}
\State Execute $a_h^k \sim \pi_k^h(s_h^k)$.
\State Observe $s_{h+1}$.
\EndFor
\State Set $\mathcal{H}_k = \mathcal{H}_{k-1} \cup \{(s_h^k, a_h^k, s_{h+1}^k)\}_{h=0}^{H-1}$. \Comment{Update the History.}
\EndFor
\end{algorithmic}
\end{algorithm}

One significant part of \algabb is the computational-efficient planning with $f_k$, thanks to the linear MDP formulation \eqref{eq:control_model_linear}. Prior work assumes an oracle \cite[\eg,][]{kakade2020information} for such planning problem, but little is known on how to provably perform such planning efficiently. Notice that, with the feature $\phi(s, a)$ defined via \eqref{eq:reproducing_property} and \eqref{eq:random_feature}, we know that $Q_h^\pi(s, a)$ is exactly linear in $\phi(s, a)$, $\forall h, \pi$. Hence, we can perform a dynamic programming style algorithm that calculates $Q_h^\pi(s, a)$ with the given feature $\phi(s, a)$, and then greedily select the action at each level $h$, which is simple yet efficient. It is straightforward to show that the policy obtained with this dynamic programming algorithm is optimal by induction. We illustrate the detailed algorithm in~\algtabref{alg:planning}.

\begin{algorithm}[tb]
\caption{Planning with Dynamic Programming}
\label{alg:planning}
\begin{algorithmic}[1]
\Require Transition Model $f$, Reward Function $r(s, a)$.
\State Initialize $\phi(s, a)$, $\mu(s^\prime)$ with \eqref{eq:reproducing_property} or \eqref{eq:random_feature}. $V_H(s) = 0, \forall s$.
\For{steps $h=H-1, H-2, \cdots, 0$}
\State {\color{blue} Compute
\begin{align*}
    Q_h(s, a) = r(s, a) + \langle \phi(s, a), \int V_{h+1}(s^\prime) \mu(s^\prime) \dif s^\prime\rangle_{\mathcal{H}}.
\end{align*}}\Comment{Bellman Update.}\label{line:critic}
\State Set $V_h(s) = \max_{a} Q_h(s, a)$, $\pi_h(s) = \mathop{\arg\max}_a Q_h(s, a)$.\Comment{Choose the Optimal Policy.}\label{line:actor}
\EndFor
\State \Return $\{\pi_h\}_{h=0}^{H-1}$.
\end{algorithmic}
\end{algorithm}



\subsubsection{Implementation Details}
In such a planning algorithm, we need to maintain the posterior of $f$ and calculate the term $\int V_{h+1}(s^\prime) \mu(s^\prime) \dif s^\prime$ and take the maximum of $Q_h(s, a)$ over $a$, which can be problematic. We will provide more discussion on this issue below.

\paragraph{Posterior Sampling} The exact posterior inference can be hard if $f^*$ does not lie in simple function class (\eg, linear function class) or has some derived property (\eg, conjugacy), so in practice we apply the existing mature approximate inference methods like Markov Chain Monte Carlo (MCMC) \citep[\eg,][]{neal2011mcmc} and variational inference \citep[see,][]{blei2017variational}. 
In our implementation, we used stochastic gradient langevin dynamics~\citep{welling2011bayesian,cheng2018convergence} to train an ensemble of models for posterior approximation.

\paragraph{Large State and Action Space} In general, we need to handle the case when the number of states and actions can be large, or even infinite. Notice that, when the state space is large, we can estimate the term $\int V_{h+1}(s^\prime)\mu(s^\prime) \dif s^\prime$ with regression based method using the samples from $f$~\citep{NIPS2007_da0d1111}. 
For the continuous action space, we can apply principled policy optimization methods \citep[\eg,][]{agarwal2020optimality} with an energy-based model~(EBM) parametrized policy~\citep{nachum2017bridging,dai2018sbeed}, treat the linear $Q^{\pi}(s, a)$ as the gradient and perform mirror descent and eventually obtain the optimal policy. However, this is at the cost of an additional sampling step from the EBM policy. In practice, we introduce a Gaussian policy and perform soft actor-critic \citep{haarnoja2018soft} policy update, which already provides good empirical performance. 
To sum up, for large state and action cases, we learn the critic in the learned representation space by regression, and obtain the Gaussian parametrized actor with SAC policy update step, in Line \ref{line:critic} and \ref{line:actor} in~\algtabref{alg:planning}, respectively. 

\paragraph{Infinite Horizon Case} Our algorithm can be provably extended to the infinite horizon case with specific termination condition for each episode \citep[\eg, see][]{jaksch2010near}. In practice, for the planning part we can solve the linear fixed-point equation with the feature $\phi(s, a)$ using the popular algorithms like Fitted $Q$-iteration (FQI)~\citep{NIPS2007_da0d1111} or dual embedding~\citep{dai2018sbeed}. that still guarantees to find the optimal policy. 

\section{Theoretical Guarantees}\label{sec:analysis}
In this section, we provide theoretical justification for~\algabb, showing that \algabb can identify informative representation and as a result, near-optimal policy in a sample-efficient way. 

We first define the notation of regret. Assume at episode $k$, the learner chooses the policy $\pi_k$ and observes a sequence $\{(s_h^k, a_h^k)\}_{h=0}^{H-1}$. We define the regret of the first $K$ episodes (and define $T:=KH$) as:
\begin{align}
    \mathrm{Regret}(K) := \sum_{k\in [K]} \left[V_0^*(s_0^k) - V_0^{\pi_k}(s_0^k)\right]
\end{align}
The regret measures the sample complexity of the representation learning in RL. We want to provide a regret upper bound that is sublinear in $T$. When $T$ increases, we collect more data that can help us build a much more accurate estimation on the representation, which should decrease the per-step regret and make the overall regret scale sublinear in $T$. As we consider the Thompson Sampling algorithm, we would like to study the expected regret $\mathbb{E}_{\mathbb{P}(f)} \left[\mathrm{Regret}(K)\right]$, which takes the prior $\mathbb{P}(f)$ into account.

\subsection{Assumptions}
Before we start, we first state the assumptions we use to derive our theoretical results.

We assume the reward is bounded, which is common in the literature \citep[\eg][]{azar2017minimax, jin2018q, jin2020provably}.
\begin{assumption}[Bounded Reward]
$r(s, a) \in [0, 1]$, $\forall (s, a) \in \mathcal{S}\times \mathcal{A}$.
\end{assumption}

In practice, we generally approximate $f^*$ with some complicated function approximators, so we focus on the setting where we want to find $f^*$ from a general function class $\mathcal{F}$ 
This is important for MuJoCo dynamics modeling, which have complicated transitions over angle, angular velocity and torque of the agent in the raw state.
We first state some necessary definitions and assumptions on $\mathcal{F}$.
\begin{definition}[$\ell_2$-norm of functions] 
Define
$\|f\|_2 := \max_{(s, a) \in \mathcal{S}\times\mathcal{A}} \|f(s, a)\|_2.$
Notice that it is not the commonly used $\ell_2$ norm for the function, but it suits our purpose well.
\end{definition}
\begin{assumption}[Bounded Output]
\label{assump:bounded_output}
We assume that $\|f\|_2 \leq C$, $\forall f\in\mathcal{F}$.
\end{assumption}
\begin{assumption}[Realizability]
\label{assump:realizability}
We assume the ground truth dynamic function $f^*\in\mathcal{F}$.
\end{assumption}
\vspace{-0.5em}
We then define the notion of covering number, which will be helpful in our algorithm derivation.

\begin{definition}[Covering Number \citep{wainwright2019high}]
An $\epsilon$-cover of $\mathcal{F}$ with respect to a metric $\rho$ is a set $\{f_i\}_{i\in [n]}\subseteq \mathcal{F}$, such that $\forall f\in \mathcal{F}$, there exists $i\in [n]$, $\rho(f, f_i) \leq \epsilon$. The $\epsilon$-covering number is the cardinality of the smallest $\epsilon$-cover, denoted as $\mathcal{N}(\mathcal{F}, \epsilon, \rho)$.
\end{definition}
\begin{assumption}[Bounded Covering Number]
\label{assump:bounded_covering} We assume that $\mathcal{N}(\mathcal{F}, \epsilon, \|\cdot\|_2) < \infty, \forall \epsilon > 0$.
\end{assumption}
\paragraph{Remark} Basically, Assumption \ref{assump:bounded_output} means the the transition dynamic never pushes the state far from the origin, which holds widely in practice. Assumption \ref{assump:realizability} guarantees that we can find the exact $f^*$ in $\mathcal{F}$, or we will always suffer from the error induced by model mismatch. Assumption \ref{assump:bounded_covering} ensures that we can estimate $f^*$ with small error when we have sufficient number of observations.

Besides the bounded covering number, we also need an additional assumption on bounded eluder dimension, which is defined in the following:
\begin{definition}[$\epsilon$-dependency \citep{osband2014model}]
A state-action pair $(s, a)\in\mathcal{S}\times\mathcal{A}$ is $\epsilon$-dependent on $\{(s_i, a_i)\}_{i\in [n]}\subseteq \mathcal{S}\times \mathcal{A}$ with respect to $\mathcal{F}$, if $\forall f, \tilde{f}\in\mathcal{F}$ satisfying $\sqrt{\sum_{i\in [n]} \|f(s_i, a_i) - \tilde{f}(s_i, a_i)\|_2^2}\leq \epsilon$ satisfies that $\|f(s, a) - \tilde{f}(s, a)\|_2 \leq \epsilon$. Furthermore, $(s, a)$ is said to be $\epsilon$-independent of $\{(s_i, a_i)\}_{i\in [n]}$ with respect to $\mathcal{F}$ if it is not $\epsilon$-dependent on $\{(s_i, a_i)\}_{i\in[n]}$.
\end{definition}
\begin{definition}[Eluder Dimension \citep{osband2014model}]
We define the eluder dimension $\mathrm{dim}_{E}(\mathcal{F}, \epsilon)$ as the length $d$ of the longest sequence of elements in $\mathcal{S}\times \mathcal{A}$, such that $\exists \epsilon^\prime\geq\epsilon$, every element is $\epsilon^\prime$-independent of its predecessors.
\end{definition}
\paragraph{Remark} Intuitively, eluder dimension illustrates the number of samples we need to make our prediction on unseen data accurate. If the eluder dimension is unbounded, then we cannot make any meaningful prediction on unseen data even with large amounts of collected samples. Hence, to make the learning possible, we need the following bounded eluder dimension assumption.
\begin{assumption}[Bounded Eluder Dimension]
\label{assump:bounded_eluder}
We assume $\mathrm{dim}_{E}(\mathcal{F}, \epsilon) < \infty, \forall \epsilon > 0$.
\end{assumption}

\subsection{Main Result}
\begin{theorem}[Regret Bound]
\label{thm:regret_bound}
Assume Assumption \ref{assump:bounded_output} to \ref{assump:bounded_eluder} holds. We have that
\begin{align*}
    & \mathbb{E}_{\mathbb{P}(f)}\left[\mathrm{Regret}(K)\right] \leq  \tilde{O}\bigg(\sqrt{H^2 T}\\
    & \cdot \sqrt{\log \mathcal{N}(\mathcal{F}, T^{-1/2}, \|\cdot\|_2)} \cdot \sqrt{\mathrm{dim}_{E}(\mathcal{F}, T^{-1/2})}\bigg).
\end{align*}
where $\tilde{O}$ represents the order up to logarithm factors.
\end{theorem}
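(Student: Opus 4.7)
The plan is to combine the Bayesian regret framework for Thompson sampling (Russo and Van Roy) with the eluder-dimension potential argument of Osband and Van Roy, specialized to Gaussian transitions. The first step is to invoke the posterior-matching property of TS: conditionally on the history $\mathcal{H}_k$, the sampled dynamics $f_k$ has the same distribution as $f^*$, and by Algorithm~\ref{alg:planning} the policy $\pi_k$ is optimal under $f_k$. Hence for each episode
\begin{align*}
\mathbb{E}\left[V_0^{*}(s_0^k) - V_0^{\pi_k}(s_0^k)\right] = \mathbb{E}\left[V_{0,f_k}^{\pi_k}(s_0^k) - V_{0,f^*}^{\pi_k}(s_0^k)\right],
\end{align*}
converting expected regret into the expected on-policy simulation gap between the imagined model $f_k$ and the true model $f^*$.

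Second, I would telescope this gap with the standard simulation lemma. Writing $T_f(\cdot\mid s,a) = \mathcal{N}(f(s,a), \sigma^2 I)$,
\begin{align*}
V_{0,f_k}^{\pi_k}(s_0^k) - V_{0,f^*}^{\pi_k}(s_0^k) = \sum_{h=0}^{H-1}\mathbb{E}_{\pi_k, f^*}\left[(T_{f_k} - T_{f^*})V_{h+1,f_k}^{\pi_k}(s_h^k, a_h^k)\right].
\end{align*}
Since $V_{h+1,f_k}^{\pi_k}$ is bounded by $H$, each term is controlled by the total-variation distance between two Gaussians sharing a common covariance, which is $O(\|f_k(s_h^k,a_h^k) - f^*(s_h^k,a_h^k)\|_2/\sigma)$. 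This reduces the problem to bounding the cumulative prediction error $\sum_{k,h}\|f_k(s_h^k,a_h^k) - f^*(s_h^k,a_h^k)\|_2$ (in expectation) times an $H$ factor.

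Third, for each $k$ I would build an $\mathcal{H}_k$-measurable confidence set $\mathcal{F}_k \subseteq \mathcal{F}$ such that (a) a covering-plus-sub-Gaussian-MGF argument on the least-squares residuals shows $f^* \in \mathcal{F}_k$ w.p.\ $1-\delta$ and any $f \in \mathcal{F}_k$ satisfies the empirical bound $\sum_{k'<k,h}\|f(s_h^{k'},a_h^{k'}) - f^*(s_h^{k'},a_h^{k'})\|_2^2 \le \beta_k$ with $\beta_k = \tilde O(\log\mathcal{N}(\mathcal{F}, T^{-1/2}, \|\cdot\|_2))$, and (b) by the TS posterior-matching identity used in step one, $f_k \in \mathcal{F}_k$ holds with the same probability. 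The Osband--Van Roy pigeonhole/potential lemma then yields
\begin{align*}
\sum_{k=1}^{K}\sum_{h=0}^{H-1}\|f_k(s_h^k,a_h^k) - f^*(s_h^k,a_h^k)\|_2 \le \tilde O\left(\sqrt{\mathrm{dim}_E(\mathcal{F}, T^{-1/2})\cdot \beta_K \cdot T}\right),
\end{align*}
and multiplying by the $O(H/\sigma)$ per-step factor gives the claimed bound.

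The main obstacle is step three: establishing that the posterior-sampled $f_k$, rather than merely the MLE, concentrates around $f^*$ at the right rate in the pointwise $\|\cdot\|_2$ metric of Definition~1. Gaussian likelihood analysis naturally delivers concentration in a squared-error / Hellinger metric under the observed covariates; transferring this to the supremum-style $\|\cdot\|_2$ metric required by the eluder-dimension lemma, and ensuring the posterior-matching trick of step one interacts cleanly with the frequentist confidence set of step three, is the delicate part. Once that concentration is in place, the remaining telescoping and potential-function manipulations are routine.
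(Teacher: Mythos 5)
Your proposal is correct and follows essentially the same route as the paper: least-squares confidence sets built from a sup-norm cover, the Osband--Van Roy sum-of-widths/eluder-dimension pigeonhole, and the Russo--Van Roy posterior-matching identity to convert a UCB-style optimism analysis into a Bayesian regret bound for TS (the paper packages that last step as an ``equivalent UCB algorithm'' whose worst-case regret is transferred to TS at the very end, while you invoke posterior matching up front). The one local difference is the simulation step: you bound each per-step error by $H$ times the total-variation distance between the two Gaussians, i.e.\ $O(H\|f_k - f^*\|_2/\sigma)$, whereas the paper proves a sharper optional-stopping simulation lemma giving $H^{3/2}\sqrt{\mathbb{E}\left[\sum_h \min\left\{2\|f_k - f^*\|_2^2/\sigma^2, 1\right\}\right]}$; both suffice for the stated $\tilde{O}\left(\sqrt{H^2 T \log\mathcal{N}(\mathcal{F}, T^{-1/2}, \|\cdot\|_2)\cdot \mathrm{dim}_{E}(\mathcal{F}, T^{-1/2})}\right)$ bound, but the clipped version is what the paper later exploits in its remark to remove the residual $\sigma^{-1}$ dependence.
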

For finite dimensional function class, $\log \mathcal{N}(\mathcal{F}, T^{-1/2}, \|\cdot\|_2)$ and $\mathrm{dim}_{E}(F, T^{-1/2}))$ should be scaled like $\mathrm{polylog}(T)$, hence our upper bound is sublinear in $T$. The proof is in Appendix \ref{sec:technical_proof}. Here we briefly sketch the proof idea.
\begin{proof}[Proof Sketch]
We first construct an equivalent UCB algorithm (see Appendix \ref{sec:ucb}) and bound $\mathrm{Regret}(K)$ for it. Then by the conclusion from \citet{russo2013eluder, russo2014learning, osband2014model}, we can directly translate the upper bound on $\mathrm{Regret}(K)$ from UCB algorithm to an upper bound on $\mathbb{E}_{\mathbb{P}(f)}\left[\mathrm{Regret}(K)\right]$ of TS algorithm. We emphasize that the UCB algorithm is solely designed for analysis purpose.

With the optimism, we know for episode $k$, $V_0^*(s_0^k) \leq \tilde{V}_{0, k}^{\pi_k}(s_0^k)$, where $\tilde{V}_{h, k}^{\pi_k}$ is the value function of policy $\pi_k$ under the model $\tilde{f}_k$ introduced in the UCB algorithm.
Hence, the regret at episode $k$ can be bounded by $\tilde{V}_{0, k}^{\pi_k}(s_0^k) - V_0^{\pi_k}(s_0^k)$, which is the value difference of the policy $\pi_k$ under the two models $\tilde{f}_k$ and $f^*$, that can be bounded by $\sqrt{\mathbb{E}\left[\sum_{h=0}^{H-1}\|f^*(s_h^k, a_h^k) - \tilde{f}_k(s_h^k, a_h^k)\|_2^2 \right]}$ (see Lemma \ref{lem:simulation} for the details), which means when the estimated model $\hat{f}$ is close to the real model $f^*$, the policy obtained by planning on $\hat{f}$ will only suffer from a small regret. With Cauchy-Schwartz inequality, we only need to bound
$\mathbb{E}\left[\sum_{k\in [K]}\sum_{h=0}^{H-1}\|f^*(s_h^k, a_h^k) - \tilde{f}_k(s_h^k, a_h^k)\|_2^2\right]$. This term can be handled via Lemma \ref{lem:width_sum_bound}. With some additional technical steps, we can obtain the upper bound on $\mathrm{Regret}(K)$ for the UCB algorithm, and hence the upper bound on $\mathbb{E}_{\mathbb{P}(f)}\left[\mathrm{Regret}(K)\right]$ for the TS algorithm.
\end{proof}
\paragraph{Kernelized Non-linear Regulator} Notice that, for the linear function class $\mathcal{F} = \{\theta^\top \varphi(s, a): \theta \in \mathbb{R}^{d_{\varphi} \times d}\}$ where $\varphi:\mathcal{S} \times \mathcal{A} \to \mathbb{R}^{d_{\varphi}}$ is a fixed and \emph{known} feature map of certain RKHS\footnote{Note that, the RKHS here is the Hilbert space that contains $f(s, a)$ with the feature from some fixed and known kernel, It is different from the RKHS we introduced in Section \ref{sec:algorithm}, that contains $Q(s, a)$ with the feature $k(f(s, a), \cdot)$ where $k$ is the Gaussian kernel.}, when the feature and the parameters are bounded, the logarithm covering number can be bounded by $\log \mathcal{N}(\mathcal{F}, \epsilon, \|\cdot\|_2)\lesssim d_{\varphi}\log (1/\epsilon)$, and the eluder dimension can be bounded by $\mathrm{dim}_{E}(\mathcal{F}, \epsilon) \lesssim d_{\varphi} \log (1/\epsilon)$ (see Appendix \ref{sec:linear_case} for the detail, notice that we provide a tighter bound of the eluder dimension compared with the one derived in \citet{osband2014model}). Hence, for linear function class, Theorem \ref{thm:regret_bound} can be translated into a regret upper bound of $\tilde{O}(H d_{\varphi} T^{1/2})$ for sufficiently large $T$, that matches the results of \citet{kakade2020information}\footnote{Note that $T$ in \citep{kakade2020information} is the number of episodes, and $V_{\max}$ in \citep{kakade2020information} can be viewed as $H^2$ when the per-step reward is bounded.}. Moreover, for the case of linear bandits when $H = 1$, our bound can be translated into a regret upper bound of $\tilde{O}(d_{\varphi} T^{1/2})$, that matches the lower bound \citep{dani2008stochastic} up to logarithmic terms.
\vspace{-3mm}
\paragraph{Compared with \citet{kakade2020information} and \citet{osband2014model}} Our results have some connections with the results from \citet{kakade2020information} and \citet{osband2014model}. However, in \citet{kakade2020information}, the authors only considers the case when $\mathcal{F}$ only contains linear functions w.r.t some known feature map, which constrains its application in practice. We instead, consider the general function approximation, which makes our algorithm applicable for more complicated models like deep neural networks. Meanwhile, the regret bound from \citet{osband2014model} depends on a global Lipschitz constant for the value function, which can be hard to quantify with either theoretical or empirical method. Instead, our regret bound gets rid of such dependency on the Lipschitz constant with the simulation lemma that carefully exploit the noise structure.
\section{Experiments}\label{sec:experiments}

\begin{table*}[t]
\caption{\footnotesize Performance of \algabb on various MuJoCo control tasks. All the results are averaged across 4 random seeds and a window size of 10K. Results marked with $^*$ is directly adopted from MBBL~\citep{wang2019benchmarking}. Our method achieves strong performance compared to pure empirical baselines (\eg, PETS). 
We also compare \algabb-REG which regularizes the critic using the model dynamics loss with several model-free RL method. \algabb-REG significantly improves the performance of the SoTA method SAC.
}
\scriptsize
\setlength\tabcolsep{3.5pt}
\label{tab:MuJoCo_results2}
\centering
\begin{tabular}{p{2cm}p{2cm}p{2cm}p{2.5cm}p{2cm}p{2cm}p{2cm}}
\toprule
& Swimmer & Reacher & MountainCar & Pendulum & I-Pendulum \\ 
\midrule  
ME-TRPO$^*$ & 30.1$\pm$9.7 & -13.4$\pm$5.2 & -42.5$\pm$26.6 & \textbf{177.3$\pm$1.9} & -126.2$\pm$86.6\\
PETS-RS$^*$  & 42.1$\pm$20.2 & -40.1$\pm$6.9 & -78.5$\pm$2.1 & 167.9$\pm$35.8 & -12.1$\pm$25.1\\
PETS-CEM$^*$  & 22.1$\pm$25.2 & -12.3$\pm$5.2 & -57.9$\pm$3.6 & 167.4$\pm$53.0 & -20.5$\pm$28.9\\
DeepSF & 25.5$\pm$13.5 & -16.8$\pm$3.6 & -17.0$\pm$23.4 & 168.6$\pm$5.1 & -0.2$\pm$0.3\\
{\bf \algabb} & \textbf{42.6$\pm$4.2} & \textbf{-7.2$\pm$1.1} & \textbf{50.3$\pm$1.1} & {169.5$\pm$0.6} & \textbf{0.0$\pm$0.0} \\
\midrule
PPO$^*$ & 38.0$\pm$1.5 & -17.2$\pm$0.9 & 27.1$\pm$13.1 & 163.4$\pm$8.0 & -40.8$\pm$21.0 \\
TRPO$^*$ & 37.9$\pm$2.0 & -10.1$\pm$0.6 & -37.2$\pm$16.4 & 166.7$\pm$7.3 & -27.6$\pm$15.8 \\
TD3$^*$ & 40.4$\pm$8.3 & -14.0$\pm$0.9 & -60.0$\pm$1.2 & 161.4$\pm$14.4 & -224.5$\pm$0.4 \\
SAC$^*$  & \textbf{41.2$\pm$4.6} & -6.4$\pm$0.5 & \textbf{52.6$\pm$0.6} & 168.2$\pm$9.5 & -0.2$\pm$0.1\\
{\bf \algabb-REG} & 40.0$\pm$3.8 & \textbf{-5.8$\pm$0.6} & 40.0$\pm$3.8 & \textbf{168.5$\pm$4.3} & \textbf{0.0$\pm$0.1}\\
\bottomrule 
\end{tabular}
\centering
\begin{tabular}{p{2cm}p{2cm}p{2cm}p{2.5cm}p{2cm}p{2cm}p{2cm}}
\toprule
& Ant-ET & Hopper-ET & S-Humanoid-ET & Humanoid-ET & Walker-ET \\ 
\midrule  
ME-TRPO$^*$ & 42.6$\pm$21.1 & 4.9$\pm$4.0 & 76.1$\pm$8.8 & 72.9$\pm$8.9 & -9.5$\pm$4.6\\
PETS-RS$^*$ & 130.0$\pm$148.1 &  205.8$\pm$36.5 & 320.9$\pm$182.2 & 106.9$\pm$106.9 & -0.8$\pm$3.2 \\
PETS-CEM$^*$ & 81.6$\pm$145.8 & 129.3$\pm$36.0 & 355.1$\pm$157.1 & 110.8$\pm$91.0 & -2.5$\pm$6.8 \\
DeepSF & 768.1$\pm$44.1  & 548.9$\pm$253.3 & 533.8$\pm$154.9 & 168.6$\pm$5.1 & 165.6$\pm$127.9\\
{\bf \algabb} & \textbf{806.2$\pm$60.2} & \textbf{732.2$\pm$263.9} & \textbf{986.4$\pm$154.7} & \textbf{886.9$\pm$95.2} & \textbf{501.6$\pm$204.0}  \\
\midrule
PPO$^*$ & 80.1$\pm$17.3  & 758.0$\pm$62.0 & 454.3$\pm$36.7 & 451.4$\pm$39.1 & 306.1$\pm$17.2\\
TRPO$^*$ & 116.8$\pm$47.3  & 237.4$\pm$33.5 & 281.3$\pm$10.9 & 289.8$\pm$5.2 & 229.5$\pm$27.1\\
TD3$^*$ & 259.7$\pm$1.0  & 1057.1$\pm$29.5 & 1070.0$\pm$168.3 & 147.7$\pm$0.7 & \textbf{3299.7$\pm$1951.5}\\
SAC$^*$ & {\bf 2012.7$\pm$571.3}  & 1815.5$\pm$655.1 & 834.6$\pm$313.1 & 1794.4$\pm$458.3 & 2216.4$\pm$678.7\\
{\bf \algabb-REG} & \textbf{2073.1$\pm$119.7} & \textbf{2510.3$\pm$550.8} & \textbf{2710.3$\pm$277.5} & \textbf{3747.8$\pm$1078.1} & 2170.3$\pm$810.9 \\
\bottomrule 
\end{tabular}
\end{table*}
In this section, we study the empirical performance of \algabb in the OpenAI MuJoCo control suite~\citep{1606.01540}.
We use the environments from MBBL~\citep{wang2019benchmarking}, which varies slightly from the original environments in terms of modifying the reward function so its gradient w.r.t. the states exists and introducing early termination (ET). Note that the set of environments contains various control and manipulation tasks, which are commonly used for benchmarking both model-free and model-based RL algorithms~\citep[\eg,][]{kakade2020information, haarnoja2018soft}. As aforementioned, for practical implementation, our critic network consists of a representation network $\phi(\cdot)$ and a linear layer on the top. We follow the same procedure of Algorithm ~\ref{alg:TS}. Specifically, (1) for finding the optimal policy, we run an actor-critic algorithm (SAC); (2) we fix the representation network of the critic function $\phi(\cdot)$ and only update the linear layer on the top. We provide the full set of experiments in Appendix~\ref{appendix:full_exp} and the hyperparameter we use in Appendix~\ref{appendix:hyperparam}.~\footnote{Our code is available at \href{https://sites.google.com/view/spede}{https://sites.google.com/view/spede}.}
\paragraph{Baselines} We compare our method with various model-based RL baselines: PETS~\citep{chua2018deep} with random shooting (RS) optimizer, PETS with cross entropy method (CEM) optimizer and ME with TRPO policy optimizer~\citep{kurutach2018model}. Note that these are strong empirical baselines with many hand-tuned hyperparameters and engineering features (\eg, ensemble of models). It is usually hard for any theoretically guaranteed model-based RL algorithm to match or surpass their performance~\citep{kakade2020information}. Another natural baseline is the successor feature~\citep{dayan1993improving}, which is one of the representative spectral features. We compare with the deep successor feature (DeepSF)~\citep{kulkarni2016deep}, and for a fair comparison, we only swap the representation objective of \algabb with DeepSF and keep the other parts of the algorithm exactly the same.
\paragraph{\algabb: Performance with the Learned Representation} Following Algorithm~\ref{alg:TS}, we are interested in how \algabb performs when we conduct planning on top of the representation induced by the dynamics model in each episode. 
As most of the rigorously-justified representation learning algorithms are computationally intractable/inefficient, to demonstrate the effectiveness of representation used in \algabb, we compare \algabb with the deep successor features, which is one representative empirical representation learning algorithm. Moreover, as our method learning representation via fitting transition dynamics, to demonstrate the superiority of representation in planning, we compare our methods with the state-of-the-art model-based RL algorithms.
We summarize the results of our method in Table~\ref{tab:MuJoCo_results2}. We see that our method achieves impressive performance comparing to model-based RL methods. Even in some hard environments that baselines fail to reach positive reward (\eg, MountainCar, Walker-ET), \algabb manage to achieve a reward of 52.6 and 501.6 respectively. We also evaluate our representation by comparing \algabb to the usage of deep successor feature (DeepSF). Results show that on hard tasks like Humanoid and Walker, \algabb manages to achieve 452.6 and 336.0 higher reward respectively. 
\paragraph{\algabb-REG: Policy Optimization with \algabb Representation Regularizer}
In order to evaluate whether our assumption on linear MDP is valid in empirical settings and study whether such assumption can help improve the performance, 
we add our model dynamics representation objective as a regularizer in addition to the original SAC algorithm for learning the $Q$-function. Specifically, the algorithm \algabb-REG consists of vanilla SAC objective with an additional loss putting constraints on the representation learned by the critic function, due to the intuition that the representation should satisfy the equivalent dynamics. We compare its performance with the vanilla SAC algorithm to show the benefits of dynamic representation. Results in Table~\ref{tab:MuJoCo_results2} show that adding such constraint significantly improve the performance of SAC: on hard tasks like Hopper-ET, S-Humanoid-ET and Humanoid-ET, \algabb-REG improves the performance of SAC by 694.8, 1875.7 and 2000.4. 
\begin{figure*}[t]
    \centering
    \includegraphics[width=0.8\textwidth]{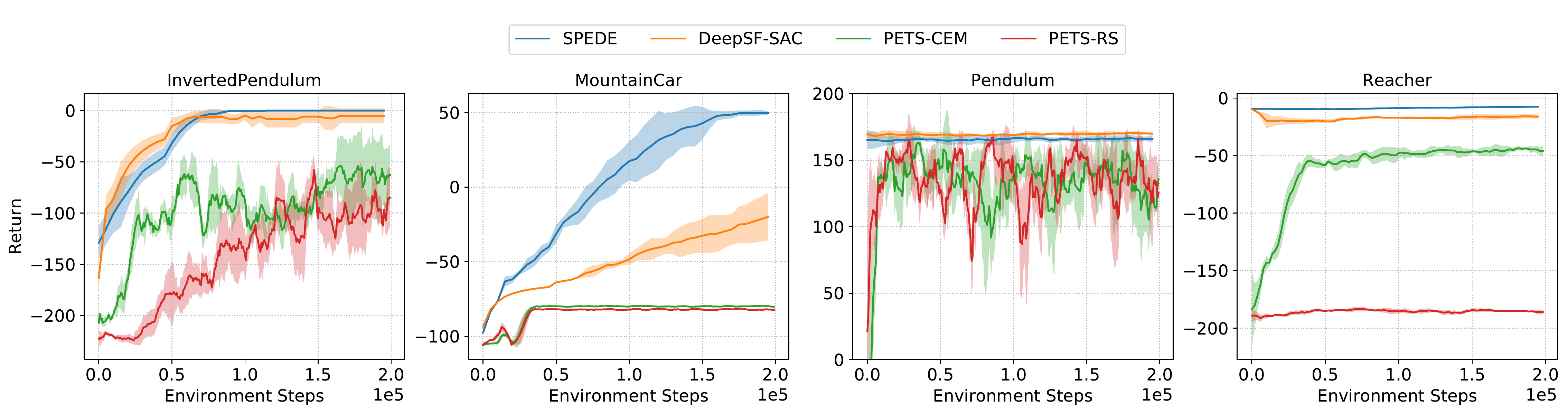}
    \caption{\footnotesize \textbf{Experiments on MuJoCo:} We show curves of the return versus the training steps for \algabb and model-based RL baselines. Results show that in these tasks, our method enjoys better sample efficiency even compared to SoTA empirical model-based RL baselines.}
    \label{fig:MuJoCo}
\end{figure*}

\paragraph{Ablations} We conduct ablations on: (1) What is the effect of the momentum parameter. (2) How does the number of random features affect the performance. 
Detailed results can be found at ~\ref{appendix:ablations}.

\paragraph{Performance Curves} To better understand how the sample complexity of our algorithm comparing to the prior model-based RL baselines, we plot the return versus environment steps in Figure~\ref{fig:MuJoCo}. We see that comparing to prior model-based baselines, \algabb enjoys great sample efficiency in these tasks. We want to emphasize that from MBBL~\citep{wang2019benchmarking}, model-based methods already show significantly better sample efficiency compared to model-free methods (\eg PPO/TRPO). We provide additional results in Appendix~\ref{appendix:full_exp}.

\paragraph{Discussion of the Results} We observe that in the environments with relatively simple dynamics (top row of Table~\ref{tab:MuJoCo_results2}), \algabb achieves the SoTA among all the model-based and model-free RL algorithms.
When the model dynamics of the environment become harder (bottom row of Table~\ref{tab:MuJoCo_results2}), the difference of the performance between the two approaches begin to enlarge. 
Interestingly, our \algabb achieves strong results comparing to model-based approaches, while the joint learning \algabb-REG outperforms model-free algorithm by a huge margin. The performance promotion of \algabb indicates the importance on learning a good representation based on model dynamics and again shows the effectiveness of our approach in both settings. The performance gap might be caused by random feature approximation. To mitigate such approximation error, we also tried using MLP upon the learned representation, instead of linear form, which leads to better performances. Please refer to~\appref{appendix:ablations} for details. 

In fact, the differences in the SoTA usage of \algabb in easy environments and difficult environments also reveals the important direction for our future work. The current rigorous representation learning methods, \eg,~\citet{du2019provably, misra2020kinematic, agarwal2020flambe} and the proposed~\algabb, all rely on some model assumption. When the assumptions are satisfied, \eg, Pendulum, Reacher, and others, our theoretically derived \algabb variant works extremely well, even better than current SoTA. However, when the assumption is not fully satisfied, although the decoupled \algabb achieves best performance among existing model-based RL and representation learning under fair comparison, the joint learned variant of \algabb is more robust and promotes the current SoTA with significant margin. An interesting question is whether we can rigorously justify the regularized \algabb , which we leave as our future work.

\section{Conclusion}
We introduce \algabb, which, to the best of our knowledge, is the first provable and efficient representation learning algorithm for RL, by exploiting the benefits from noise. We provide thorough theoretical analysis and strong empirical results, comparing to both model-free and model based RL, that demonstrates the effectiveness of our algorithm. 
\section*{Acknowledgement}
Cs. Sz. greatly acknowledges funding from NSERC, AMII and the Canada CIFAR AI Chair program. This project occurred under the Google-BAIR Commons at UC Berkeley.

\vspace{-5mm}
\bibliography{ref}

\newpage
\appendix
\onecolumn
\section{Backgrounds on Reproducing Kernel Hilbert Space}
\label{sec:background}
We briefly introduce the basic concepts of the Reproducing Kernel Hilbert Space, which is helpful on understanding our paper. To start with, we first define the inner product.
\begin{definition}[Inner Product]
A function $\langle \cdot, \cdot \rangle_{\mathcal{H}} : \mathcal{H}\times \mathcal{H} \to \mathbb{R}$ is said to be an inner product on $\mathcal{H}$ if it satisfies the following conditions:
\begin{enumerate}
    \item Positive Definiteness: $\forall u \in \mathcal{H}$, $\langle u, u \rangle\geq 0$, and $\langle u, u\rangle = 0 \Longleftrightarrow u = 0$.
    \item Symmetry: $\forall u, v \in \mathcal{H}$, $\langle u, v \rangle\in \langle v, u\rangle$.
    \item Bilinearity: $\forall \alpha, \beta \in \mathbb{R}, u, v, w\in\mathcal{H}$, $\langle \alpha u + \beta v, w \rangle = \alpha\langle u, w\rangle + \beta \langle v, w\rangle$.
\end{enumerate}
Additionally, we can define a norm with the inner product: $\|u\| = \sqrt{\langle u, u\rangle}$.
\end{definition}
A Hilbert space is a space equipped with an inner product and satisfies an additional technical condition of completeness. The finite-dimension vector space with the canonical inner product is an example of the Hilbert space. We remark that $\mathcal{H}$ can also be a function space, for example, the space contains all square integrable functions (i.e. $\int_{\mathbb{R}} f(x)^2 \dif x < \infty$, generally denoted as $L_2$) is also a Hilbert space with inner product $\langle f, g\rangle = \int_{\mathbb{R}} f(x) g(x) \dif x$. 

We then define the kernel, and introduce the notion of positive-definite kernel \citep{alvarez2012kernels}.
\begin{definition}[(Positive-Definite) Kernel]
A function $k:\mathcal{X} \times \mathcal{X} \to \mathbb{R}$ is said to be a kernel on non-empty set $\mathcal{X}$ if there exists a Hilbert space $\mathcal{H}$ and a feature map $\phi:\mathcal{X}\to\mathcal{H}$ such that $\forall x, x^\prime\in\mathcal{X}$, we have
\begin{align*}
    k(x, x^\prime) = \langle \phi(x), \phi(x^\prime)\rangle_{\mathcal{H}}.
\end{align*}
Moreover, the kernel is said to be positive definite if $\forall n\geq 1$, $\forall \{a_i\}_{i\in [n]} \subset \mathbb{R}$ and mutually distinct set $\{x_i\}_{i\in [n]} \subset \mathcal{X}$, we have that
\begin{align*}
    \sum_{i\in[n]}\sum_{j\in[n]} a_i a_j k(x_i, x_j) > 0. 
\end{align*}
\end{definition}
Some well-known kernels include:
\begin{itemize}
    \item Linear Kernel: $k(x, y) = \langle x, y\rangle$, with the canonical feature map $\phi(x) = x$.
    \item Polynomial Kernel: $k(x, y) = (\langle x, y \rangle + c)^m$, where $m\in\mathbb{N}^+$ and $c\in\mathbb{R}^+$.
    \item Gaussian (a.k.a radial basis function, RBF) Kernel: $k(x, y) = \exp\left(\frac{\|x-y\|_2^2}{2\sigma^2}\right)$. It's known that such kernel is positive definite.
\end{itemize}
Now we can define the Reproducing Kernel Hilbert space (RKHS) \citep{aronszajn1950theory}.
\begin{definition}[Reproducing Kernel Hilbert Space (RKHS)]
The Hilbert space $\mathcal{H}$ of $\mathbb{R}$-valued function defined on a non-emptry set $\mathcal{X}$ is said to be a reproducing kernel Hilbert space (RKHS) is there is a kernel $k:\mathcal{X} \times \mathcal{X} \to \mathbb{R}$, such that
\begin{enumerate}
    \item $\forall x\in\mathcal{X}$, $k(x, \cdot) \in \mathcal{H}$.
    \item $\forall x\in\mathcal{X}, f\in\mathcal{H}$, $\langle f, k(x, \cdot)\rangle_{\mathcal{H}} = f(x)$ (a.k.a the reproducing property), which also implies that $\langle k(x, \cdot), k(y, \cdot)\rangle = k(x, y)$.
\end{enumerate}
Here $k$ is called a reproducing kernel of $\mathcal{H}$.
\end{definition}
We provide an intuitive interpretation on the definition of RKHS when $\mathcal{H}$ is the space of linear function. Consider $\mathcal{X} = \mathbb{R}^d$ and $k(x, y) = \langle x, y \rangle$. With the definition of the kernel $k$, we can see that $k(x, \cdot) : \mathcal{X} \to \mathbb{R}$ is a linear function, and thus lies in $\mathcal{H}$. Meanwhile, $\forall f\in\mathcal{H}$, there exists $\theta_f$ such that $f(x) = \theta_f^\top x$. We define the inner product on $\mathcal{H}$ via $\langle f, g\rangle_{\mathcal{H}} = \langle \theta_f, \theta_g\rangle$, and thus $\langle f(, k(x, \cdot))\rangle_{\mathcal{H}} = \theta^\top x = f(x)$, which demonstrates the reproducing property, and shows that the space of linear function on any finite-dimensional vector space is an RKHS with linear kernel as the corresponding reproducing kernel.

We state the following theorems without the proof.
\begin{theorem}[Moore-Aronszajn \citep{aronszajn1950theory}]
Every positive definite kernel $k$ is associated with a unique RKHS $\mathcal{H}$.
\end{theorem}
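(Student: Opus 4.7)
The plan is to prove existence by explicit construction and uniqueness by a density argument. First I would form the pre-Hilbert space $\mathcal{H}_0 := \mathrm{span}\{k(x,\cdot) : x \in \mathcal{X}\}$ consisting of finite linear combinations $f = \sum_{i=1}^n a_i k(x_i, \cdot)$, and equip it with the candidate inner product
\[
\Big\langle \sum_i a_i k(x_i,\cdot),\ \sum_j b_j k(y_j,\cdot)\Big\rangle_{\mathcal{H}_0} := \sum_{i,j} a_i b_j k(x_i, y_j).
\]
I would check that this is well defined (independent of the particular representation of $f$ and $g$), bilinear, and symmetric; positive definiteness of $k$ gives $\langle f, f\rangle_{\mathcal{H}_0} \geq 0$. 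The reproducing property on $\mathcal{H}_0$ then holds by definition: $\langle f, k(x,\cdot)\rangle_{\mathcal{H}_0} = \sum_i a_i k(x_i, x) = f(x)$. To upgrade the semi-inner product to an inner product I would apply Cauchy--Schwarz to the identity $f(x) = \langle f, k(x,\cdot)\rangle_{\mathcal{H}_0}$, obtaining $|f(x)|^2 \leq \langle f,f\rangle_{\mathcal{H}_0}\, k(x,x)$, which forces $f \equiv 0$ whenever $\langle f,f\rangle_{\mathcal{H}_0} = 0$.

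Next I would form the Hilbert space $\mathcal{H}$ by metric completion of $\mathcal{H}_0$ and then identify the abstract completion with an honest function space on $\mathcal{X}$. Given a Cauchy sequence $\{f_n\} \subset \mathcal{H}_0$, the reproducing bound above shows $|f_n(x) - f_m(x)| \leq \|f_n - f_m\|_{\mathcal{H}_0}\sqrt{k(x,x)}$, so $\{f_n(x)\}$ is Cauchy in $\mathbb{R}$ for each $x$; define $f(x) := \lim_n f_n(x)$ and assign the equivalence class $[f_n]$ to this pointwise limit $f$. I would verify that this map is well defined, linear, injective, and preserves the inner product (extending the inner product to $\mathcal{H}$ by continuity), and that $k(x,\cdot) \in \mathcal{H}$ with the reproducing property extending by taking limits of $\langle f_n, k(x,\cdot)\rangle_{\mathcal{H}_0} = f_n(x)$.

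For uniqueness, suppose $\mathcal{H}_1$ and $\mathcal{H}_2$ are two RKHSs on $\mathcal{X}$ both having $k$ as reproducing kernel. Both must contain every $k(x,\cdot)$, and on the common linear span $\mathcal{H}_0$ the reproducing property forces the inner products to coincide with the formula displayed above; thus the identity map is an isometry from $\mathcal{H}_0 \subset \mathcal{H}_1$ onto $\mathcal{H}_0 \subset \mathcal{H}_2$. To extend this to all of $\mathcal{H}_1$, I would show $\mathcal{H}_0$ is dense in any such RKHS: if $g \in \mathcal{H}_i$ is orthogonal to every $k(x,\cdot)$, then $g(x) = \langle g, k(x,\cdot)\rangle_{\mathcal{H}_i} = 0$ for all $x$, so $g = 0$. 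Density plus isometry on the dense subspace yields a unique isometric isomorphism extending the identity, so $\mathcal{H}_1 = \mathcal{H}_2$ as function spaces with the same norm.

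The main obstacle I anticipate is the identification step in the second paragraph: the abstract Hilbert-space completion produces equivalence classes of Cauchy sequences, and one must argue that distinct equivalence classes give distinct pointwise limit functions (so the map to functions is injective) and that the limit function is independent of the representing sequence. Both rely on the reproducing inequality $|f(x)| \leq \|f\|_{\mathcal{H}_0} \sqrt{k(x,x)}$, which makes evaluation functionals continuous and hence well behaved under completion; handling this carefully, together with verifying that the extended inner product still satisfies $\langle f, k(x,\cdot)\rangle_{\mathcal{H}} = f(x)$ for the limiting $f$, is the technical crux of the argument.
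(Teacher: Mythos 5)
The paper does not prove this theorem at all: it is stated in Appendix A ``without the proof,'' with a citation to Aronszajn (1950), so there is no in-paper argument to compare against. Your proposal is the standard Aronszajn construction and is correct in outline: the pre-Hilbert space spanned by the kernel sections, the inner product defined via $k$, definiteness via Cauchy--Schwarz applied to the reproducing identity, completion followed by identification with a space of functions, and uniqueness via density of $\mathrm{span}\{k(x,\cdot)\}$ in any RKHS with kernel $k$. One small caution on the step you correctly single out as the crux: the bound $|f(x)|\le \|f\|_{\mathcal{H}_0}\sqrt{k(x,x)}$ gives you that norm convergence implies pointwise convergence (hence well-definedness of the map from the abstract completion to functions), but injectivity is the converse direction and needs one more computation --- for a norm-Cauchy sequence $\{f_n\}\subset\mathcal{H}_0$ with $f_n\to 0$ pointwise, one writes $\langle f_m, f_n\rangle_{\mathcal{H}_0}=\sum_i a_i^{(n)} f_m(x_i)\to 0$ as $m\to\infty$ for each fixed $n$, and combines this with the Cauchy property to conclude $\|f_n\|_{\mathcal{H}_0}\to 0$. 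With that standard fill-in, your argument is complete.
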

Notice that, Moore-Aronszajn theorem guarantees that all of the positive kernel can be represented as the inner product in certain Hilbert space, hence we can have a linear representation of the Gaussian distribution induced by the reproducing property of Gaussian kernel, as we illustrated in the main text.
\begin{theorem}[Bochner \citep{rudin2017fourier}]
A continuous, shift-invariant kernel (i.e. $k(x, y) = k(x-y)$) is positive definite if and only if $k(x-y)$ is the Fourier transform of a non-negative measure $\omega$, i.e.
\begin{align*}
    k(x-y) = \int_{\mathbb{R}^d} \exp(i\omega^\top (x-y)) \dif \mathbb{P}(\omega) = \int_{\mathbb{R}^d \times [0, 2\pi]} 2\cos(\omega^\top x + b) \cos(\omega^\top y + b) \dif (\mathbb{P}(\omega) \times \mathbb{P}(b)),
\end{align*}
where $\mathbb{P}(b)$ is a uniform distribution on $[0, 2\pi]$.
\end{theorem}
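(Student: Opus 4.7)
\medskip

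\noindent\textbf{Proof Proposal for Bochner's Theorem.}

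The plan is to prove the two directions separately. The easy direction is that a shift-invariant kernel given as the Fourier transform of a non-negative finite measure is positive definite; the hard direction is the converse, which requires producing the spectral measure from the positive-definiteness data via a representation theorem.

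\emph{Easy direction.} Assuming $k(x-y) = \int_{\mathbb{R}^d} e^{i\omega^\top(x-y)}\dif \mathbb{P}(\omega)$ for a non-negative finite measure $\mathbb{P}$, I would plug into the positive-definiteness sum to obtain, for any finite set $\{x_i\}$ and coefficients $\{a_i\}\subset\mathbb{C}$,
\[
\sum_{i,j} a_i \overline{a_j}\, k(x_i - x_j) \;=\; \int_{\mathbb{R}^d} \Bigl|\sum_i a_i e^{i\omega^\top x_i}\Bigr|^2 \dif \mathbb{P}(\omega) \;\geq\; 0,
\]
by Fubini (valid because $\mathbb{P}$ is finite). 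The real-valued formulation with the product $\mathbb{P}(\omega)\times\mathbb{P}(b)$ in the statement then follows from the identity $2\cos(\omega^\top x + b)\cos(\omega^\top y + b)$ averaged in $b\in[0,2\pi]$ equals $\cos(\omega^\top(x-y))$, together with the fact that a real, continuous, shift-invariant, positive-definite $k$ has a real symmetric spectral measure so the imaginary part of the complex Fourier representation integrates to zero.

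\emph{Hard direction (main obstacle).} Given a continuous, bounded, shift-invariant, positive-definite $k$, the goal is to construct a non-negative finite measure $\mathbb{P}$ on $\mathbb{R}^d$ whose Fourier transform is $k$. My approach would use the Riesz--Markov representation theorem. First, define the linear functional
\[
L(\varphi) \;=\; \int_{\mathbb{R}^d} k(x)\, \widehat{\varphi}(x)\,\dif x
\]
on the Schwartz space $\mathcal{S}(\mathbb{R}^d)$, where $\widehat{\varphi}$ denotes the Fourier transform. The key step is to verify that $L$ is \emph{positive} in the sense that $L(\varphi)\geq 0$ whenever $\varphi\geq 0$. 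This is where positive-definiteness enters: for any $\varphi\geq 0$ one writes $\varphi = |\psi|^2$ for some $\psi\in\mathcal{S}$, expresses $\widehat{|\psi|^2}(x)$ as a convolution $(\widehat{\psi}*\widetilde{\widehat{\psi}})(x)$, and then approximates the integral $\int k(x)\,(\widehat{\psi}*\widetilde{\widehat{\psi}})(x)\dif x$ by Riemann sums of the form $\sum_{i,j} a_i \overline{a_j}\, k(x_i - x_j)$, which are non-negative by the positive-definiteness hypothesis on $k$; continuity of $k$ allows the limit to preserve non-negativity. By Riesz--Markov applied to the resulting positive functional on $C_c(\mathbb{R}^d)$ (after extension), one obtains a non-negative Borel measure $\mathbb{P}$ with $L(\varphi) = \int \varphi \dif\mathbb{P}$. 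Finiteness of $\mathbb{P}$ (total mass $k(0)$) follows by inserting an approximate identity.

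\emph{Concluding step.} Having produced $\mathbb{P}$, I would finish by unwinding the definition of $L$ with Plancherel/Parseval: for every $\varphi\in\mathcal{S}$,
\[
\int k(x)\,\widehat{\varphi}(x)\dif x \;=\; \int \widehat{\mathbb{P}}(x)\,\widehat{\varphi}(x)\dif x,
\]
so $k$ and $\widehat{\mathbb{P}}$ agree as tempered distributions. Since both are continuous functions, they agree pointwise, giving the desired Fourier representation. The main obstacle throughout is the positivity verification for $L$; the rest is machinery (Fubini, Riesz--Markov, Fourier inversion on $\mathcal{S}$). The reformulation with the uniform variable $b\in[0,2\pi]$ used later for random Fourier features in the paper then follows by the same product-to-cosine identity noted above.
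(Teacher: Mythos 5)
The paper does not actually prove this statement: it is quoted from \citet{rudin2017fourier} and the appendix explicitly says ``We state the following theorems without the proof,'' so there is no in-paper argument to compare against. Judged on its own terms, your sketch is the standard textbook proof and is essentially correct: the easy direction via Fubini and the modulus-square identity is fine, the product-to-sum identity $2\cos(\omega^\top x+b)\cos(\omega^\top y+b)=\cos(\omega^\top(x-y))+\cos(\omega^\top(x+y)+2b)$ with the second term averaging to zero over $b\sim\mathcal{U}[0,2\pi]$ correctly recovers the real random-feature form (and you rightly flag that this step needs $k$ real and the spectral measure symmetric), and the hard direction via a positive functional plus Riesz--Markov is the classical route. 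The one genuine soft spot is the claim that every non-negative $\varphi\in\mathcal{S}$ can be written as $|\psi|^2$ with $\psi\in\mathcal{S}$: this fails in general (smoothness of $\sqrt{\varphi}$ breaks down where $\varphi$ has zeros of finite order), so you should instead verify positivity of $L$ directly on functions of the form $\psi*\tilde{\psi}$ (whose span is dense enough after an approximate-identity argument), or perturb by $\varepsilon e^{-\|x\|^2}$ and let $\varepsilon\to 0$. With that standard repair, and the Riemann-sum approximation of the double integral $\int\!\!\int k(u-v)\hat{\psi}(u)\overline{\hat{\psi}(v)}\,\dif u\,\dif v$ justified by continuity and boundedness of $k$ (boundedness follows from positive definiteness via $|k(x)|\le k(0)$), the argument goes through. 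Since the paper only needs this theorem as imported background for the random Fourier feature identity \eqref{eq:random_feature}, none of this affects the paper's results.
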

Bochner's theorem shows that any continuous positive definite shift-invariant kernel (e.g. Gaussian kernel, Laplacian kernel) can be represented as the inner product of random Fourier feature, which provides an additional way to provide a representation for certain distribution \citep[see][]{rahimi2007random, dai2014scalable}.
\section{An Equivalent Upper Confidence Bound Algorithm}
\label{sec:ucb}
In this section, we provide a generic Upper Confidence Bound (UCB) algorithm with the OFU principle, and show the connections and differences between the UCB algorithm and the TS algorithm. The prototype for our UCB algorithm is illustrated in Algorithm \ref{alg:UCB}.
\begin{algorithm}
\caption{Upper Confidence Bound (UCB) Algorithm} 
\label{alg:UCB}
\begin{algorithmic}[1]
\Require Number of Episodes $K$, Failure Probability $\delta\in(0, 1)$, Reward Function $r(s, a)$.
\State Initialize the history set $\mathcal{H}_0 = \emptyset$.
\For{episodes $k=1, 2, \cdots$}
\State \label{line:optimistic_planning} {\color{blue} Compute $\pi_{k}$ via \Comment{Optimistic Planning.}
\begin{align*}
    (\pi_{k}, \tilde{f}_k) = \mathop{\arg\max}_{\pi\in\Pi, \tilde{f}\in\mathcal{F}_k} \tilde{V}_{0}^\pi(s_0).
\end{align*}
where $\mathcal{F}_k$ is defined in \eqref{eq:confidence_set}.}
\For{steps $h=0, 1, \cdots, H-1$}\Comment{Execute $\pi_k$.}
\State Execute $a_h^k \sim \pi_k^h(s_h^k)$.
\State Observe $s_{h+1}$.
\EndFor
\State Set $\mathcal{H}_k = \mathcal{H}_{k-1} \cup \{(s_h^k, a_h^k, s_{h+1}^k)\}_{h=0}^{H-1}$.\Comment{Update the History.}
\EndFor
\end{algorithmic}
\end{algorithm}

Notice that, the only difference between UCB algorithm and TS algorithm is the mechanism of finding $f$ we use to plan for each episode (highlighted in blue). For UCB algorithm, we perform an optimistic planning, which finds the $\tilde{f}_k$ that potentially has the largest cumulative reward. However, such constrained optimization problem is NP-hard even for the simplest linear bandits \citep{dani2008stochastic}. Instead, for TS algorithm, we only sample the $f_k$ from the posterior distribution, which gets rid of the complicated constraint optimization. We are interested in the UCB algorithm, as the worst case regret bound of the UCB algorithm can be directly translated to the expected regret bound of the TS algorithm without the need of explicit manipulation of the prior and the posterior\citep{russo2013eluder, russo2014learning, osband2014model}.

\paragraph{Confidence Set Construction} Perhaps the most important part in OFU-style algorithm is the construction of confidence set $\mathcal{F}_k$. To enable sample-efficient learning, the confidence set should
\begin{enumerate}
    \item contain $f^*$ with high probability, so that we can identify $f^*$ eventually;
    \item shrink as fast as possible, so that we can identify $f^*$ efficiently.
\end{enumerate}
In the tabular setting, $\mathcal{F}_k$ is constructed via the concentration of sub-Gaussian/sub-Gamma random variable \citep[e.g.][]{azar2017minimax}, and in the linear MDP setting, $\mathcal{F}_k$ is constructed via the concentration on the linear parameters. As we don't assume any specific structures, we instead constructed $\mathcal{F}_k$ via the concentration on the $\ell_2$ error, following the idea of \citep{russo2013eluder, osband2014model}. Specifically, consider the least-square estimates defined by
\begin{align}
    \hat{f}_K = \mathop{\arg\min}_{f\in \mathcal{F}}L_{2, K}(f) := \sum_{k\in [K]}\sum_{h=0}^{H-1} \|f(s_h^k, a_h^k) - s_{h+1}^k\|_2^2.
\end{align}
As $s_{h+1}^k = f^*(s_h^k, a_h^k) + \epsilon_h^k$ where $\epsilon_h^k$ is the Gaussian noise added to the step $h$ at the $k$-th episode, we know $\hat{f}_{K}$ will not deviate from $f^*$ a lot. Meanwhile, as $K$ increases, the estimation $\hat{f}_K$ should become closer to $f^*$. Specifically, define the empirical $2$-norm $\|\cdot\|_{2, E_t}$ as
\begin{align*}
    \|g\|_{2, E_K}^2 := \sum_{k\in [K]} \sum_{h=0}^{H-1} \|g(s_h^k, a_h^k)\|_2^2.
\end{align*}
We can construct the confidence set based on the following lemma:
\begin{lemma}[Confidence Set Construction \citep{russo2013eluder, osband2014model}]\label{lem:confidence_set}
Define
\begin{align}
\label{eq:confidence_set}
    \mathcal{F}_K = \left\{f\in\mathcal{F}:\|f - \hat{f}_K\|_{2, E_K}\leq \sqrt{\beta_K^*(\mathcal{F}, \delta, \alpha)}\right\},
\end{align}
then
\begin{align}
    \mathbb{P}_{f^*}\left(f^*\in \bigcap_{k=1}^{\infty}\mathcal{F}_k\right) \geq 1-2\delta,
\label{eq:optimism}
\end{align}
where
\begin{align}
    \beta_K^*(\mathcal{F}, \delta, \alpha) = 8\sigma^2 \log(\mathcal{N}(\mathcal{F}, \alpha, \|\cdot\|_{2})/\delta) + 2H\alpha(12C + \sqrt{8d\sigma^2\log(4K^2H/\delta)}).
\end{align}
\end{lemma}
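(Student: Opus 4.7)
The plan is to derive the standard least-squares regression tail bound for $\hat{f}_K - f^*$ in the empirical norm $\|\cdot\|_{2,E_K}$, then extend it from a fixed $f$ to all of $\mathcal{F}$ via an $\alpha$-cover, and finally to all episodes $k\geq 1$ via a union bound. The starting point is the noise decomposition: since $s_{h+1}^k = f^*(s_h^k,a_h^k) + \epsilon_h^k$ with $\epsilon_h^k\sim\mathcal{N}(0,\sigma^2 I)$, for any deterministic $f\in\mathcal{F}$,
\begin{align*}
L_{2,K}(f) - L_{2,K}(f^*) \;=\; \|f-f^*\|_{2,E_K}^2 \;-\; 2\,Z_K(f),
\end{align*}
where $Z_K(f):=\sum_{k\in[K]}\sum_{h=0}^{H-1}\langle \epsilon_h^k,\,f(s_h^k,a_h^k)-f^*(s_h^k,a_h^k)\rangle$ is a sum of martingale differences adapted to the history filtration $\{\mathcal{G}_{k,h}\}$. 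Using $L_{2,K}(\hat f_K)\leq L_{2,K}(f^*)$ at $f=\hat f_K$ gives the core inequality $\|\hat f_K - f^*\|_{2,E_K}^2 \leq 2\,Z_K(\hat f_K)$, so it suffices to upper bound $Z_K$ uniformly over $\mathcal{F}$.

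For a fixed, deterministic $f$, each increment of $Z_K$ is conditionally $\sigma\|f(s_h^k,a_h^k)-f^*(s_h^k,a_h^k)\|_2$-sub-Gaussian, so its conditional MGF is bounded by $\exp(\lambda^2\sigma^2\|f-f^*\|^2/2)$ at the point $(s_h^k,a_h^k)$. A self-normalized sub-Gaussian martingale bound (in the spirit of Russo--Van Roy and Abbasi-Yadkori--Pál--Szepesvári) then yields, with probability at least $1-\delta'$,
\begin{align*}
2\,Z_K(f) \;\leq\; \tfrac{1}{2}\|f-f^*\|_{2,E_K}^2 \;+\; 8\sigma^2\log(1/\delta'),
\end{align*}
via an AM-GM step on the self-normalized form; combined with the core inequality this is exactly the shape of the first summand of $\beta_K^*$.

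To promote this to a uniform bound, I would pick an $\alpha$-cover $\mathcal{F}_\alpha$ of $\mathcal{F}$ in $\|\cdot\|_2$ of cardinality $\mathcal{N}(\mathcal{F},\alpha,\|\cdot\|_2)$, apply the fixed-$f$ bound to every element of the cover with $\delta'=\delta/\mathcal{N}$, and union-bound. For the actual $\hat f_K$, I would project to the closest cover element $\tilde f_K$ and track the discretization slack. Two deterministic terms appear: $|Z_K(\hat f_K)-Z_K(\tilde f_K)| \leq \alpha\sum_{k,h}\|\epsilon_h^k\|_2$ and $\|\hat f_K-f^*\|_{2,E_K}^2 - \|\tilde f_K-f^*\|_{2,E_K}^2$, which by $\|f\|_2\leq C$ and triangle inequality is at most $\alpha(\text{const}\cdot C)\cdot KH$. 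A Gaussian maximal inequality controls $\max_{k,h}\|\epsilon_h^k\|_2 \leq \sqrt{8d\sigma^2\log(4K^2H/\delta)}$ with probability at least $1-\delta/2$ uniformly across all $(k,h)$ with $k\leq K$; combining these slack terms produces exactly the $2H\alpha(12C + \sqrt{8d\sigma^2\log(4K^2H/\delta)})$ summand. Finally, the appearance of $\log(4K^2H/\delta)$ (rather than $\log(1/\delta)$) in the Gaussian maximum is what enables taking a further union bound over $k=1,2,\dots$ via $\sum_k 1/k^2 <\infty$, yielding the simultaneous coverage $\mathbb{P}(f^*\in\bigcap_k \mathcal{F}_k)\geq 1-2\delta$.

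The main obstacle is the discretization/chaining step, because $\hat f_K$ is data-dependent and hence not a priori close to any particular element of the $\alpha$-cover in the \emph{empirical} norm; one must carefully track the cover error simultaneously inside both $\|\cdot\|_{2,E_K}^2$ and $Z_K$ while ensuring the resulting slack terms factor correctly so that the $\alpha$-dependent contribution enters additively (rather than multiplicatively against $K$) in $\beta_K^*$. Getting the constants $8$, $12$, and $2H$ to match precisely is essentially bookkeeping once the sub-Gaussian martingale inequality, covering union bound, and Gaussian maximum are combined; the conceptual architecture is the Russo--Van Roy / Osband--Van Roy least-squares confidence construction specialized to the isotropic-Gaussian noise model.
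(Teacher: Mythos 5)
Your proposal follows essentially the same route as the paper's proof: the identity $L_{2,K}(f)-L_{2,K}(f^*)=\|f-f^*\|_{2,E_K}^2-2Z_K(f)$ with a conditional sub-Gaussian MGF bound on the martingale term (the paper invokes Lemma 4 of Russo--Van Roy with $\lambda=1/(4\sigma^2)$ where you invoke a self-normalized bound plus AM--GM), a union bound over an $\alpha$-cover, the least-squares optimality $L_{2,K}(\hat f_K)\le L_{2,K}(f^*)$, and a discretization error controlled via $\|f\|_2\le C$ together with the anytime Gaussian-norm bound $\sqrt{2d\sigma^2\log(4k^2H/\delta)}$, splitting the failure probability as $\delta+\delta$. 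The only divergence is bookkeeping of constants (and the $KH$ versus $H$ prefactor on the discretization term, where your accounting actually matches the paper's own intermediate per-sample bounds), so the proposal is correct and matches the paper's argument.
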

The proof can be found in Appendix \ref{sec:proof_optimism}. Notice that, the empirical $2$-norm $\|f - \hat{f}_K\|_{2, E_K}$ scales linearly with $K$, and $\beta_K^*(\mathcal{F}, \delta, \alpha)$ only scales as $\log K$, so the confidence set shrinks. Meanwhile, Equation \ref{eq:optimism} guarantees that $f^* \in \mathcal{F}_k$, $\forall k$ with high probability. Hence, it satisfies our requirement for the confidence set.

{\color{black}
\paragraph{Regret Upper Bound} We have the following upper bound of the regret for the UCB algorithm:
\begin{theorem}[Regret Bound]
\label{thm:regret_bound_UCB}
Assume Assumption \ref{assump:bounded_output} to \ref{assump:bounded_eluder} holds. We have that
\begin{align*}
\textstyle
    \mathrm{Regret}(K) \leq \tilde{O}(\sqrt{H^2 T\cdot \log \mathcal{N}(\mathcal{F}, T^{-1/2}, \|\cdot\|_2) \cdot \mathrm{dim}_{E}(\mathcal{F}, T^{-1/2})}).
\end{align*}
where $\tilde{O}$ represents the order up to logarithm factors.
\end{theorem}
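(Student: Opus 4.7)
The plan is to prove Theorem \ref{thm:regret_bound_UCB} by following the standard optimism-based regret decomposition, but carefully exploiting the Gaussian noise structure so that the per-episode regret is controlled by the mean prediction error $\|\tilde f_k(s,a)-f^*(s,a)\|_2$ rather than by a TV-distance or Lipschitz constant on value functions. First I would condition on the good event $\mathcal{E}=\{f^*\in\bigcap_{k\geq 1}\mathcal{F}_k\}$, which by Lemma \ref{lem:confidence_set} holds with probability at least $1-2\delta$; on its complement we pay at most $H$ per episode, which after tuning $\delta=1/T$ contributes only a $\tilde O(1)$ term. On $\mathcal{E}$, the optimistic planning step in Line \ref{line:optimistic_planning} of Algorithm \ref{alg:UCB} selects $(\pi_k,\tilde f_k)$ so that $V_0^*(s_0^k)\leq\tilde V_{0,k}^{\pi_k}(s_0^k)$, and hence
\begin{align*}
\mathrm{Regret}(K)\;\leq\;\sum_{k\in[K]}\bigl(\tilde V_{0,k}^{\pi_k}(s_0^k)-V_0^{\pi_k}(s_0^k)\bigr).
\end{align*}

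Next I would invoke the simulation lemma (Lemma \ref{lem:simulation}), which is the place where the Gaussian-noise assumption is used crucially: because both $\tilde f_k$ and $f^*$ induce Gaussians with the same covariance around their respective means, the per-step TV gap is controlled by $\|\tilde f_k(s,a)-f^*(s,a)\|_2/\sigma$, avoiding any appeal to a global Lipschitz constant on value functions. This yields
\begin{align*}
\tilde V_{0,k}^{\pi_k}(s_0^k)-V_0^{\pi_k}(s_0^k)\;\lesssim\; H\,\sqrt{\mathbb{E}_{\pi_k,f^*}\Bigl[\textstyle\sum_{h=0}^{H-1}\|\tilde f_k(s_h^k,a_h^k)-f^*(s_h^k,a_h^k)\|_2^2\Bigr]},
\end{align*}
after which Cauchy--Schwarz in the sum over episodes gives
\begin{align*}
\mathrm{Regret}(K)\;\lesssim\;H\sqrt{K}\cdot\sqrt{\textstyle\sum_{k\in[K]}\sum_{h=0}^{H-1}\|\tilde f_k(s_h^k,a_h^k)-f^*(s_h^k,a_h^k)\|_2^2}\,.
\end{align*}

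The remaining step is to bound this cumulative squared width along the realized trajectory. Since on $\mathcal{E}$ both $\tilde f_k$ and $f^*$ lie in the confidence set $\mathcal{F}_k$, standard concentration turns in-sample width into the radius $\sqrt{\beta_K^*(\mathcal{F},\delta,\alpha)}$, and the eluder-dimension machinery (Lemma \ref{lem:width_sum_bound}) converts the cumulative out-of-sample width into a bound of order $\mathrm{dim}_E(\mathcal{F},\alpha)\cdot\beta_K^*(\mathcal{F},\delta,\alpha)+\text{lower order}$. Choosing $\alpha=T^{-1/2}$ and $\delta=1/T$ makes $\beta_K^*$ scale as $\tilde O(\log\mathcal N(\mathcal F,T^{-1/2},\|\cdot\|_2))$, so the cumulative squared error is $\tilde O(\log\mathcal N\cdot\mathrm{dim}_E)$ and, plugging back,
\begin{align*}
\mathrm{Regret}(K)\;\lesssim\;\tilde O\Bigl(\sqrt{H^2T\cdot\log\mathcal N(\mathcal F,T^{-1/2},\|\cdot\|_2)\cdot\mathrm{dim}_E(\mathcal F,T^{-1/2})}\Bigr),
\end{align*}
as required (using $T=KH$, so $H\sqrt{K}=\sqrt{H^2 T}/\sqrt{H}\cdot\sqrt{H}$ absorbs correctly into the stated rate).

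The main obstacle is the second step, the simulation lemma with the right prediction-error-only dependence. Naively one would get a factor of the value-function Lipschitz constant or a $1/\sigma$ blowup; the trick is to use Pinsker on the two Gaussians together with the tower property across the horizon, and to keep the horizon dependence to a single $H$ (or $H^{3/2}$ after Cauchy--Schwarz) so that it merges cleanly with the eluder-dimension bound. A secondary technical point is that the sum of squared widths in Lemma \ref{lem:width_sum_bound} is phrased over arbitrary sequences in $\mathcal{S}\times\mathcal{A}$, so I would apply it with the flattened sequence of length $T=KH$ over $(s_h^k,a_h^k)$ and make sure $\mathrm{dim}_E$ is evaluated at the chosen tolerance $T^{-1/2}$; this is the step that produces the precise $T$-inside-the-log arguments claimed in the theorem statement.
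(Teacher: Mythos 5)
Your plan matches the paper's proof of Theorem \ref{thm:regret_bound_UCB} essentially step for step: optimism over the confidence set of Lemma \ref{lem:confidence_set}, the Gaussian simulation lemma (Lemma \ref{lem:simulation}), Cauchy--Schwarz over episodes, and the eluder-dimension width-sum bound (Lemma \ref{lem:width_sum_bound}) with $\alpha = T^{-1/2}$. The only cosmetic differences are that the per-episode simulation bound should carry $H^{3/2}$ rather than $H$ (which you acknowledge parenthetically, and which is exactly what makes $H^{3/2}\sqrt{K}=\sqrt{H^2 T}$ come out), and that the paper proves the simulation lemma via an explicit change-of-measure computation for isotropic Gaussians combined with an optional-stopping argument rather than Pinsker, though either route yields the required $\|\tilde f_k - f^*\|_2/\sigma$ dependence.
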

}
\section{Technical Proof}
\label{sec:technical_proof}
\subsection{Proof for Lemma \ref{lem:confidence_set}} 
\label{sec:proof_optimism}
\begin{proof} We first show the following concentration on the $\ell_2$ error:
\begin{lemma}[Concentration of $\ell_2$ error \citep{russo2013eluder, osband2014model, wang2020reinforcement}]
$\forall \delta > 0, f:\mathcal{S}\times \mathcal{A} \to \mathbb{R}$, we have
\begin{align*}
    \mathbb{P}_{f^*}\left(L_{2, K}(f) \geq L_{2, K}(f^*) + \frac{1}{2}\|f - f^*\|^2_{2, E_K}- 4\sigma^2 \log (1/\delta), \quad\forall K\in \mathbb{N}\right) \geq 1-\delta
\end{align*}
\end{lemma}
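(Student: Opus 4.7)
My plan is to expand the squared loss difference and reduce the claim to a concentration inequality for a martingale built from the Gaussian noise. Writing $\Delta_{h,k} := f(s_h^k, a_h^k) - f^*(s_h^k, a_h^k)$ and noting that $s_{h+1}^k = f^*(s_h^k, a_h^k) + \epsilon_h^k$ with $\epsilon_h^k \sim \mathcal{N}(0, \sigma^2 I)$, I can expand
\begin{align*}
L_{2,K}(f) - L_{2,K}(f^*) &= \sum_{k,h}\bigl(\|\Delta_{h,k} - \epsilon_h^k\|_2^2 - \|\epsilon_h^k\|_2^2\bigr) \\
&= \|f - f^*\|^2_{2, E_K} - 2\sum_{k,h}\langle \Delta_{h,k}, \epsilon_h^k\rangle.
\end{align*}
So the claim reduces to showing that the noise cross-term satisfies
$2\sum_{k,h}\langle \Delta_{h,k}, \epsilon_h^k\rangle \le \tfrac{1}{2}\|f - f^*\|_{2,E_K}^2 + 4\sigma^2\log(1/\delta)$ uniformly in $K$.

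To control this, I would fix $f$ and build the standard exponential supermartingale. Let $\mathcal{G}_{h,k}$ be the filtration containing all data observed through step $(k,h)$ (so that $\Delta_{h,k}$ is $\mathcal{G}_{h,k-1}$-measurable when we order pairs lexicographically). Conditionally on $\mathcal{G}$ just before observing $\epsilon_h^k$, the increment $\langle \Delta_{h,k}, \epsilon_h^k\rangle$ is Gaussian with variance $\sigma^2 \|\Delta_{h,k}\|_2^2$, so for any $\lambda > 0$ the process
\begin{align*}
M_K(\lambda) := \exp\Bigl(\lambda \sum_{k,h}\langle \Delta_{h,k}, \epsilon_h^k\rangle - \tfrac{\lambda^2\sigma^2}{2}\|f-f^*\|_{2,E_K}^2\Bigr)
\end{align*}
is a nonnegative supermartingale with $\mathbb{E}[M_0] \le 1$. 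Ville's maximal inequality then gives $\mathbb{P}\bigl(\sup_K M_K(\lambda) \ge 1/\delta\bigr) \le \delta$, which after rearranging yields, with probability at least $1-\delta$ and for all $K \in \mathbb{N}$ simultaneously,
\begin{align*}
\sum_{k,h}\langle \Delta_{h,k}, \epsilon_h^k\rangle \le \frac{\lambda\sigma^2}{2}\|f - f^*\|_{2,E_K}^2 + \frac{1}{\lambda}\log(1/\delta).
\end{align*}

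The last step is to pick $\lambda = 1/(2\sigma^2)$, which makes the first right-hand term equal to $\tfrac{1}{4}\|f-f^*\|^2_{2,E_K}$ and the second $2\sigma^2\log(1/\delta)$; multiplying by $2$ and substituting back into the loss decomposition gives exactly the claimed inequality. The main technical point to get right is the filtration: because $s_{h+1}^k$ (which feeds into future $\Delta$'s) depends on $\epsilon_h^k$, one must verify that $\Delta_{h,k}$ is predictable with respect to the filtration generated up to time $(k,h)$ so the conditional MGF bound of the Gaussian applies at each step. Once that is checked, the proof is a clean application of the sub-Gaussian exponential supermartingale construction and Ville's inequality; no covering argument enters here, since the statement is for a fixed $f$ (the union bound over an $\alpha$-cover of $\mathcal{F}$ is deferred to the construction of $\beta_K^*$ in Lemma~\ref{lem:confidence_set}).
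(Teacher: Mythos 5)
Your proposal is correct and follows essentially the same route as the paper's proof: both expand the loss difference so that everything reduces to the noise cross-term $\sum_{k,h}\langle \Delta_{h,k},\epsilon_h^k\rangle$, control it with the Gaussian conditional moment generating function via an exponential supermartingale and a Ville/maximal inequality (the paper invokes Lemma~4 of \citet{russo2013eluder} for this step), and choose the tuning parameter so the constants come out as $\tfrac{1}{2}\|f-f^*\|_{2,E_K}^2$ and $4\sigma^2\log(1/\delta)$. Your $\lambda = 1/(2\sigma^2)$ applied to $\langle\Delta,\epsilon\rangle$ is exactly the paper's $\lambda = 1/(4\sigma^2)$ applied to $2\langle\Delta,\epsilon\rangle$, and you correctly identify that the union bound over a cover is deferred to the confidence-set construction.
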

\begin{proof}
Define the filtration $\mathcal{H}_{k, h} = \{(s_h^i, a_h^i)\}_{i\in [k-1], h = 0, \cdots, H-1}\cup \{(s_i^k, a_i^k)\}_{h=0}^{h-1}$, and the random variable $Z_{k,h}$ adapted to the filtration $\mathcal{H}_{k, h}$ via:
\begin{align*}
    Z_{k, h} = & \|f^*(s_h^k, a_h^k) - s_{h+1}^k\|^2_2 - \|f(s_h^k, a_h^k) - s_{h+1}^k\|^2_2\\
    = & \|f^*(s_h^k, a_h^k) - s_{h+1}^k\|^2_2 - \|f(s_h^k, a_h^k) - f^*(s_h^k, a_h^k) + f^*(s_h^k, a_h^k) - s_{h+1}^k \|^2_2\\
    = & - \|f(s_h^k, a_h^k) - f^*(s_h^k, a_h^k)\|^2_2 + 2\langle f(s_h^k, a_h^k) - f^*(s_h^k, a_h^k), \epsilon_h^k\rangle,
\end{align*}
where $\epsilon_h^k = s_{h+1}^k - f^*(s_h^k, a_h^k)$. Thus, $\mathbb{E}(Z_k^h|\mathcal{H}_{k, h}) = - \|f(s_h^k, a_h^k) - f^*(s_h^k, a_h^k)\|^2_2$, and $Z_k^h + \|f(s_h^k, a_h^k) - f^*(s_h^k, a_h^k)\|^2_2$ is a martingale w.r.t $\mathcal{H}_{k, h}$. Notice that we assume $\epsilon$ is an isotropic Gaussian noise with variance $\sigma^2$ on each of the dimension, thus the conditional moment generating function of $Z_k^h + \|f(s_h^k, a_h^k) - f^*(s_h^k, a_h^k)\|^2_2$ satisfies:
\begin{align*}
    M_{k, h}(\lambda) = & \log \mathbb{E}[\exp(\lambda(Z_k^h + \|f(s_h^k, a_h^k) - f^*(s_h^k, a_h^k)\|^2_2))|\mathcal{H}_{k, h}] \\
    = & \log \mathbb{E}[\exp(\langle 2\lambda f(s_h^k, a_h^k) - f^*(s_h^k, a_h^k), \epsilon_h^k \rangle)|\mathcal{H}_{k, h}]\\
    \leq & 2\sigma^2\lambda^2\|f(s_h^k) - f^*(s_h^k, a_h^k)\|_2^2.
\end{align*}
Applying Lemma 4 in \citep{russo2013eluder}, we have that, $\forall x, \lambda \geq 0$,
\begin{align*}
    \mathbb{P}_{f^*}\left(\sum_{k\in [K]}\sum_{h=0}^{H-1}\lambda Z_{k, h} \leq x - \lambda(1-2\lambda  \sigma^2)\sum_{k\in [K]}\sum_{h=0}^{H-1}\|f(s_h^k, a_h^k) - f^*(s_h^k, a_h^k)\|_2^2,\quad \forall k\in \mathbb{N} \right) \leq 1-\exp(-x).
\end{align*}
Take $\lambda = \frac{1}{4\sigma^2}, x = \log 1/\delta$, and notice that $\sum_{k\in [K]}\sum_{h=0}^{H-1}Z_{k, h} = L_{2, K}(f^*) - L_{2, K}(f)$, we have the desired result.
\end{proof}

We construct an $\alpha$-cover $\mathcal{F}_{\alpha}$ in $\mathcal{F}$ with respect to $\|\cdot\|_2$. With a standard union bound, we know that condition on $f^*$, with probability at least $1-\delta$, we have that
\begin{align*}
    L_{2, K}(f^{\alpha}) - L_{2, K}(f^*) \geq \frac{1}{2}\|f^\alpha - f^*\|_{2, E_{K}}^2 - 4\sigma^2 \log (|F^{\alpha}|/\delta), \quad \forall K\in \mathbb{N}, f^{\alpha} \in \mathcal{F}^{\alpha}.
\end{align*}
Thus, we have that
\begin{align*}
    L_{2, K}(f) - L_{2, K}(f^*) \geq & \frac{1}{2}\|f - f^*\|_{2, E_{K}}^2 - 4\sigma^2 \log (|F^{\alpha}|/\delta)\\
    & + \underbrace{\min_{f^\alpha \in \mathcal{F}^{\alpha}}\left\{\frac{1}{2}\|f^{\alpha} - f^*\|_{2, E_K}^2 - \frac{1}{2}\|f - f^*\|_{2, E_K}^2 + L_{2, K}(f) - L_{2, K}(f^\alpha)\right\}}_\text{Discretization Error}.
\end{align*}
We then deal with the discretization error. Assume $\alpha \leq 2C$ (or otherwise we only have a trivial cover) and $\|f^{\alpha}(s, a) - f(s, a)\|_2 \leq \alpha$, we have that
\begin{align*}
    & \|f^{\alpha}(s, a) - f^*(s, a)\|_2^2 - \|f(s, a) - f^*(s, a)\|_2^2\\
    = & \|f^\alpha(s, a)\|_2^2-\|f(s, a)\|_2^2 + 2\langle f^*(s, a), f(s, a) - f^{\alpha}(s, a)\rangle\\
    \leq &\max_{\|y\|_2\leq \alpha}\{\|f(s, a) + y\|_2^2 - \|f(s, a)\|_2^2\} + 2C\alpha\\
    = & \max_{\|y\|_2\leq \alpha} \{2\langle f(s, a), y\rangle + \|y\|_2^2\} + 2C\alpha\\
    \leq & 4C\alpha + \alpha^2 \leq 6C\alpha,
\end{align*}
where the inequality is by Cauchy-Schwartz inequality and $\alpha \leq 2C$. Meanwhile,
\begin{align*}
    & \|s^\prime - f(s, a)\|_2^2 - \|s^\prime - f^{\alpha}(s, a)\|_2^2 \\
    = & 2\langle s^\prime, f^{\alpha}(s, a) - f(s, a)\rangle + \|f(s, a)\|_2^2 - \|f^{\alpha}(s, a)\|_2^2\\
    \leq & 2\langle \epsilon, f^{\alpha}(s, a) - f(s, a)\rangle + 2\langle f^*(s, a), f^{\alpha}(s, a) - f(s, a)\rangle + 2C\alpha + \alpha^2\\
    \leq & 2\|\epsilon\|_2 \alpha + 6C\alpha.
\end{align*}
We now consider the concentration property of $\|\epsilon\|_2$. Here we simply follow \citep{jin2019short} and notice that $\epsilon$ is $\sqrt{d}\sigma$-norm-sub-Gaussian, we have that
\begin{align*}
    \mathbb{P}(\|\epsilon\|_2 > \sqrt{2d\sigma^2 \log (2/\delta)}) \leq \delta.
\end{align*}
By a union bound, we have that
\begin{align*}
    \mathbb{P}(\exists k, \|\epsilon\|_2 > \sqrt{2d\sigma^2 \log (4k^2H/\delta)}) \leq \frac{\delta}{2} \sum_{k=1}^{\infty}\sum_{h=0}^{H-1} \frac{1}{k^2 H} \leq \delta.
\end{align*}
Sum all these up, we can see with probability $1-\delta$, $\forall K\in\mathbb{N}$, the discretization error is upper bounded by:
\begin{align*}
    H\alpha(12C + \sqrt{8d\sigma^2 \log (4K^2 H/\delta)}).
\end{align*}
As we consider the least square estimate $\hat{f}_K$, we have that $L_{2, K}(\hat{f}_K) - L_{2, K}(f^*) \leq 0$. Substitute back, we have the desired results.
\end{proof}
\subsection{Simulation Lemma}
\begin{lemma}[Simulation Lemma (adapted from Lemma 3.9 in \citep{kakade2020information})]\label{lem:simulation}
Given $\hat{f}$, $\forall s\in\mathcal{S}$, the value function $\hat{V}^\pi$ and $V^\pi$ corresponding to the model $\hat{f}$ and $f^*$ satisfies
\begin{align*}
    \hat{V}_0^\pi(s) - V_0^\pi(s) \leq H^{3/2} \sqrt{\mathbb{E}\left[\sum_{h=0}^{H-1}\min\left\{\frac{2\|f^*(s_h, a_h) - \hat{f}(s_h, a_h) \|_2^2}{\sigma^2}, 1\right\}\right]}.
\end{align*}
\end{lemma}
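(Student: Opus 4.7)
The plan is to follow the classical performance-difference/simulation-lemma template, specialized to the Gaussian noise model \eqref{eq:control_model} in which the KL divergence between the true and imagined one-step transitions collapses to a quadratic in $\|f^* - \hat{f}\|_2^2$. Concretely, I would proceed in three steps: (i) a telescoping decomposition of the value gap, (ii) Cauchy--Schwarz to pull the expectation inside a square root, and (iii) Pinsker's inequality together with the closed form of the Gaussian KL to replace the total-variation distance by $\|f^* - \hat{f}\|_2^2/\sigma^2$.

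First, rolling trajectories under $f^*$ while evaluating with the Bellman operator of $\hat{f}$, I would establish the standard telescoping identity
\[
\hat{V}_0^\pi(s) - V_0^\pi(s) \;=\; \mathbb{E}_{\pi, f^*}\!\left[\sum_{h=0}^{H-1}\!\left(\mathbb{E}_{s' \sim \hat{T}(\cdot|s_h, a_h)}[\hat{V}_{h+1}^\pi(s')] \;-\; \mathbb{E}_{s' \sim T^*(\cdot|s_h, a_h)}[\hat{V}_{h+1}^\pi(s')]\right)\right],
\]
which one obtains by adding and subtracting $\hat{V}_{h+1}^\pi$ terms along an $f^*$-trajectory. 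Because rewards lie in $[0,1]$, we have $\hat{V}_{h+1}^\pi(s) \le H$, so each per-step gap is bounded by $H \cdot \mathrm{TV}(\hat{T}(\cdot|s_h,a_h), T^*(\cdot|s_h,a_h))$. Applying Cauchy--Schwarz across the sum over $h$ (together with the outer expectation) then yields
\[
\hat{V}_0^\pi(s) - V_0^\pi(s) \;\le\; H\sqrt{H\cdot \mathbb{E}_{\pi,f^*}\!\left[\sum_{h=0}^{H-1} \mathrm{TV}^2(\hat{T}(\cdot|s_h,a_h), T^*(\cdot|s_h,a_h))\right]},
\]
which already produces the target $H^{3/2}$ prefactor.

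Second, I would bound the squared TV distance using Pinsker's inequality. For two isotropic Gaussians with common covariance $\sigma^2 I$ and means $f^*(s,a)$ and $\hat{f}(s,a)$, the KL divergence equals $\|f^*(s,a) - \hat{f}(s,a)\|_2^2/(2\sigma^2)$; combining with Pinsker gives a pointwise bound $\mathrm{TV}^2 \le c\,\|f^* - \hat{f}\|_2^2/\sigma^2$ for an absolute constant $c$. Since total variation is always at most $1$, one can freely replace this bound by its minimum with $1$, which recovers the truncated form $\min\{2\|f^* - \hat{f}\|_2^2/\sigma^2, 1\}$ in the statement (the precise constant $2$ is either absorbed via a slightly looser variant of Pinsker or obtained by retracing constants carefully; I would not fight for the tightest value in this sketch).

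The main obstacle is really just bookkeeping --- justifying the telescoping identity carefully as a change of measure from $\hat{f}$-rollouts to $f^*$-rollouts at each level, and keeping constants straight through Pinsker and the Gaussian KL calculation. The conceptual punchline is that the isotropic Gaussian noise assumption of \eqref{eq:control_model} is precisely what makes the per-step divergence a clean quadratic in $\|f^* - \hat{f}\|_2^2$, which is exactly the quantity controlled by the least-squares confidence set of Lemma~\ref{lem:confidence_set} and will feed into Lemma~\ref{lem:width_sum_bound} in the regret analysis.
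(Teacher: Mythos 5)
Your argument is correct and would yield the stated bound (in fact with a better constant, since Pinsker gives $\mathrm{TV}^2 \le \|f^*-\hat f\|_2^2/(4\sigma^2)$, well inside the $2/\sigma^2$ of the lemma), but it is a genuinely different route from the paper's. The paper also starts from a telescoping decomposition along $f^*$-trajectories (its Lemma~\ref{lem:difference}), but instead of bounding each per-step gap by $\|\hat V_{h+1}^\pi\|_\infty\cdot\mathrm{TV}$ it runs an optional-stopping argument: it introduces the stopping time $\tau=\min\{h:\hat V_h^\pi(s_h)\le V_h^\pi(s_h)\}$ and the envelope $\tilde V_h^\pi=\max\{\hat V_h^\pi,V_h^\pi\}$, applies Doob's optional stopping theorem to the resulting martingale, and then controls each Gaussian mean-shift term via a Cauchy--Schwarz bound on the likelihood ratio (its Lemma~\ref{lem:expectation_diff}), which yields a factor $\min\{\sqrt2\,\|f^*-\hat f\|_2/\sigma,1\}\cdot\sqrt{\mathbb{E}[\tilde V^2]}$ and requires the integrand to be nonnegative --- that is precisely what the stopping time and the envelope $\tilde V$ are there to guarantee. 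This second-moment machinery is inherited from \citet{kakade2020information}, where it matters because the value scale $V_{\max}$ is not controlled by a clean sup bound; under the present paper's bounded-reward assumption $\tilde V\le H$ it collapses to the same $H^{3/2}$ prefactor, so your plain $\|g\|_\infty$-plus-Pinsker route loses nothing here and avoids the stopping-time bookkeeping entirely. The one place to be careful in your write-up is the per-step H\"older bound: with $\mathrm{TV}=\tfrac12\|P-Q\|_1$ you need to center $\hat V_{h+1}^\pi$ at $H/2$ to get the factor $H$ rather than $2H$; everything else (the telescoping identity, Cauchy--Schwarz over $h$, the Gaussian KL formula, and capping $\mathrm{TV}^2$ by $1$) is sound.
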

\label{sec:proof_simulation}
\begin{proof}

We first show the following difference lemma:
\begin{lemma}[Difference Lemma]
\label{lem:difference}
Assume the trajectory $\{(s_h, a_h)\}_{h=0}^{H-1}$ is generated via policy $\pi$ and ground truth $f^*$, define
\begin{align*}
    V_h = \sum_{\tau=h}^{H-1}r(s_\tau, a_\tau)
\end{align*}
then $\forall \tau \in \{1, \cdots, H-1\}$, we have:
\begin{align*}
    \hat{V}_0^\pi(s_0) - V_0  = & \mathbb{E}_{s_\tau^\prime\sim \mathcal{N}(\hat{f}(s_{\tau-1}, a_{\tau-1}), \sigma^2 I)} \left[\hat{V}_{\tau}^\pi(s_\tau^\prime)\right] - V_{\tau} \\
    & + \sum_{h=1}^{\tau-1}\left[\mathbb{E}_{s_h^\prime\sim \mathcal{N}(f(s_{h-1}, a_{h-1}), \sigma^2 I)} \left[\hat{V}_h^\pi(s_h^\prime)\right] - \hat{V}_h^\pi(s_h) \right].
\end{align*}
\end{lemma}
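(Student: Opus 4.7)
The plan is to prove the identity by induction on $\tau$, with the Bellman equation for $\hat{V}^\pi$ under the model $\hat{f}$ as the only non-trivial ingredient and a simple add-subtract trick as the engine of the telescoping. Throughout, I read the $f$ in the summation on the RHS as $\hat{f}$, since this is the dynamics associated with $\hat{V}^\pi$; the $f^*$-dynamics enter only implicitly through the fact that the trajectory $\{(s_h,a_h)\}$ is sampled from $f^*$.

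For the base case $\tau=1$, the summation is empty and the claim reduces to
\begin{align*}
\hat{V}_0^\pi(s_0) - V_0 \;=\; \mathbb{E}_{s_1' \sim \mathcal{N}(\hat{f}(s_0,a_0),\sigma^2 I)}\bigl[\hat{V}_1^\pi(s_1')\bigr] - V_1.
\end{align*}
I would obtain this by writing the model-based Bellman expansion $\hat{V}_0^\pi(s_0) = r(s_0,a_0) + \mathbb{E}_{s_1'\sim\hat{f}(s_0,a_0)}[\hat{V}_1^\pi(s_1')]$ (conditioned on $a_0$, which along the sample trajectory is already drawn from $\pi_0(s_0)$) and the definition $V_0 = r(s_0,a_0) + V_1$, then subtracting.

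For the inductive step, assume the identity for $\tau-1$. The key move is to add and subtract $\hat{V}_{\tau-1}^\pi(s_{\tau-1})$ inside the leading term of the RHS, splitting it as
\begin{align*}
\Bigl[\mathbb{E}_{s_{\tau-1}' \sim \hat{f}(s_{\tau-2},a_{\tau-2})}[\hat{V}_{\tau-1}^\pi(s_{\tau-1}')] - \hat{V}_{\tau-1}^\pi(s_{\tau-1})\Bigr] + \bigl[\hat{V}_{\tau-1}^\pi(s_{\tau-1}) - V_{\tau-1}\bigr].
\end{align*}
The first bracket is exactly the new $h=\tau-1$ summand that must appear when enlarging the range of the sum from $h\le\tau-2$ to $h\le\tau-1$. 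The second bracket I would handle with a single Bellman expansion at step $\tau-1$, namely $\hat{V}_{\tau-1}^\pi(s_{\tau-1}) = r(s_{\tau-1},a_{\tau-1}) + \mathbb{E}_{s_\tau'\sim\hat{f}(s_{\tau-1},a_{\tau-1})}[\hat{V}_\tau^\pi(s_\tau')]$ together with $V_{\tau-1} = r(s_{\tau-1},a_{\tau-1}) + V_\tau$, giving $\hat{V}_{\tau-1}^\pi(s_{\tau-1}) - V_{\tau-1} = \mathbb{E}_{s_\tau'\sim\hat{f}(s_{\tau-1},a_{\tau-1})}[\hat{V}_\tau^\pi(s_\tau')] - V_\tau$, which is precisely the new leading term at index $\tau$.

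There is no real obstacle; the proof is essentially bookkeeping. The one subtlety I would flag is the role of the action randomness. For a deterministic $\pi$ the Bellman identity $\hat{V}_h^\pi(s_h) = r(s_h,a_h) + \mathbb{E}_{s'\sim\hat{f}(s_h,a_h)}[\hat{V}_{h+1}^\pi(s')]$ holds pointwise, so the identity is an exact equality along the sampled trajectory; for a stochastic $\pi$ the same chain of manipulations gives the identity after taking the conditional expectation over $a_h$ given $s_h$, which is all that is needed for the downstream $\ell_2$/Pinsker bound used in Lemma~\ref{lem:simulation}.
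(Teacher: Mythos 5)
Your proposal is correct and follows essentially the same route as the paper: the paper also establishes the $\tau=1$ case from the Bellman expansion of $\hat{V}_0^\pi$ under $\hat{f}$, then handles $\tau=2$ by the same add-and-subtract of $\hat{V}_1^\pi(s_1)$ followed by a Bellman step, and invokes recursion for larger $\tau$ exactly as your inductive step does. Your reading of the $f$ in the summation as $\hat{f}$ matches how the lemma is actually used in the subsequent optional-stopping argument, and your remark on deterministic versus stochastic $\pi$ is a fair clarification of the paper's implicit assumption $a_h=\pi(s_h)$.
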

\begin{proof}
When $\tau = 1$, we can obtain the result with $a_0 = \pi(s_0)$ and
\begin{align*}
    \hat{V}_0^\pi(s_0) = r(s_0, \pi(s_0)) + \mathbb{E}_{s_1^\prime\sim \mathcal{N}(f(s_0, a_0), \sigma^2 I)} \hat{V}_1^\pi(s_1^\prime).
\end{align*}
We only need to show the case when $\tau = 2$, and the case when $\tau > 2$ can be derived via recursion. Notice that
\begin{align*}
    \hat{V}_0^\pi(s_0) - V_0  = & \mathbb{E}_{s_1^\prime\sim \mathcal{N}(f(s_0, a_0), \sigma^2 I)} \left[\hat{V}_1^\pi(s_1^\prime)\right] - V_1\\
    = & \hat{V}_1^\pi(s_1) - V_1  +  \mathbb{E}_{s_1^\prime\sim \mathcal{N}(f(s_0, a_0), \sigma^2 I)} \left[\hat{V}_1^\pi(s_1^\prime)\right] - \hat{V}_1^\pi(s_1)\\
    = & \mathbb{E}_{s_2^\prime\sim \mathcal{N}(f(s_1, a_1), \sigma^2 I)}\left[\hat{V}_2^\pi(s_2^\prime)\right] - V_2 + \mathbb{E}_{s_1^\prime\sim \mathcal{N}(f(s_0, a_0), \sigma^2 I)} \left[\hat{V}_1^\pi(s_1^\prime)\right] - \hat{V}_1^\pi(s_1),
\end{align*}
where the last equality is due to the fact that $a_1 = \pi(s_1)$.
\end{proof}
We then follow the idea of ``optional stopping'' used in \citep{kakade2020information} and show the following ``optional stopping'' simulation lemma.
\begin{lemma}[``Optional Stopping'' Simulation Lemma]\label{lem:option-stop-simulation}
Consider the stochastic process over the trajectories $\{(s_h, a_h)\}_{h=0}^{H-1}$ generated via policy $\pi$ and ground truth $f^*$, where the randomness is from the Gaussian noise in the dynamics. Define a stopping time $\tau$ w.r.t this stochastic process and a given model $\hat{f}$ via:
\begin{align*}
    \tau := \min\{h\geq 0: \hat{V}_h^\pi(s_h) \leq V_h^\pi(s_h)\}.
\end{align*}
Furthermore, define a random variable:
\begin{align*}
    \tilde{V}_h^\pi(s_h) = \max\{\hat{V}_h^\pi(s_h), V_h^\pi(s_h)\},
\end{align*}
we have that
\begin{align*}
    \hat{V}_0^\pi(s_0) - V_0^\pi(s_0) \leq \mathbb{E}\left[\sum_{h=0}^{H-1}\mathbf{1}_{h<\tau} \left(\mathbb{E}_{s_{h+1}^\prime \sim \mathcal{N}(f^*(s_h, a_h), \sigma^2 I)}\tilde{V}_h^\pi(s_{h+1}^\prime) - \mathbb{E}_{s_{h+1}^\prime \sim \mathcal{N}(\hat{f}(s_h, a_h), \sigma^2 I)}\tilde{V}_h^\pi(s_{h+1}^\prime)\right)\right],
\end{align*}
where the expectation is w.r.t the stochastic process over the trajectories.
\end{lemma}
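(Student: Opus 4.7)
The plan is to apply the Difference Lemma (Lemma \ref{lem:difference}) pointwise with the random integer $\tau$ playing the role of the truncation index, take expectations over the trajectory generated under $\pi$ and $f^*$, and then use the definition of the stopping time to discard the leftover boundary term.

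First, I would fix a realization of the trajectory and treat the realized value $\tau(\omega)\in\{1,\dots,H\}$ as a deterministic index in Lemma \ref{lem:difference}. This yields the pointwise identity
\begin{align*}
\hat V_0^\pi(s_0) - V_0 \;=\; \sum_{h=0}^{\tau-1}\Bigl[\mathbb{E}_{s'\sim \mathcal{N}(\hat f(s_h,a_h),\sigma^2 I)}\hat V_{h+1}^\pi(s') - \hat V_{h+1}^\pi(s_{h+1})\Bigr] + \bigl(\hat V_\tau^\pi(s_\tau) - V_\tau\bigr),
\end{align*}
where $V_h=\sum_{t\geq h}r(s_t,a_t)$ is the realized tail return. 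Next, I would take expectation over the trajectory (which is drawn under $f^*$) and use the tower property together with the fact that $\mathbf{1}_{h<\tau}$ is measurable with respect to $(s_{0:h},a_{0:h})$ (since $\tau$ is a stopping time). Because $s_{h+1}\mid s_h,a_h\sim\mathcal{N}(f^*(s_h,a_h),\sigma^2 I)$, the conditional expectation $\mathbb{E}[\hat V_{h+1}^\pi(s_{h+1})\mid s_h,a_h]$ equals $\mathbb{E}_{s'\sim\mathcal{N}(f^*(s_h,a_h),\sigma^2 I)}\hat V_{h+1}^\pi(s')$, so each summand can be rewritten as the model-mismatch term $\mathbb{E}_{\hat P}[\hat V_{h+1}^\pi]-\mathbb{E}_{P^*}[\hat V_{h+1}^\pi]$ evaluated at $(s_h,a_h)$, and the sum can be rewritten with an $\mathbf{1}_{h<\tau}$ indicator and extended to $h=0,\dots,H-1$.

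Then I would show the boundary term is nonpositive in expectation so that it can be dropped to obtain an inequality. Since $\pi$ is Markov and the continuation is governed by $f^*$, the Markov property gives $\mathbb{E}[V_\tau\mid s_\tau,\tau]=V_\tau^\pi(s_\tau)$, and by the very definition of $\tau$ we have $\hat V_\tau^\pi(s_\tau)\leq V_\tau^\pi(s_\tau)$ on $\{\tau<H\}$ (and the difference is zero on $\{\tau=H\}$), so $\mathbb{E}[\hat V_\tau^\pi(s_\tau)-V_\tau]\leq 0$. Finally, I would substitute $\tilde V_{h+1}^\pi$ for $\hat V_{h+1}^\pi$: on the event $\{h+1<\tau\}$ the two coincide by definition of $\tilde V$, while at the single index $h+1=\tau$ the replacement only enlarges the bound since $\tilde V_\tau^\pi\geq \hat V_\tau^\pi$, which still preserves the ``$\leq$''.

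The main obstacle is the careful stopping-time bookkeeping: verifying that $\mathbf{1}_{h<\tau}$ is $\sigma(s_{0:h},a_{0:h})$-measurable so the one-step tower argument is legitimate inside the sum, and handling the boundary case $\tau=H$ (where there is no stop) and $\tau=0$ (where the sum is empty) consistently. A minor secondary point is the correct indexing of $\tilde V$ in the statement (the relevant quantity on the event $\{h<\tau\}$ is $\tilde V_{h+1}^\pi$ evaluated at the one-step-ahead state $s'_{h+1}$), which I would address while making the substitution step above.
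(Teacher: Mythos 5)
Your overall strategy---the pathwise Difference Lemma truncated at the stopping time, the tower property using that $\mathbf{1}_{h<\tau}$ is $\sigma(s_{0:h},a_{0:h})$-measurable, and the Markov property to replace the realized tail return $V_\tau$ by $V_\tau^\pi(s_\tau)$---is the same as the paper's. However, your last two steps are performed in the wrong order, and the final substitution is not valid as justified. After you apply the tower property, each summand is $\mathbb{E}_{s'\sim\mathcal{N}(\hat f(s_h,a_h),\sigma^2 I)}[\hat V_{h+1}^\pi(s')]-\mathbb{E}_{s'\sim\mathcal{N}(f^*(s_h,a_h),\sigma^2 I)}[\hat V_{h+1}^\pi(s')]$, where $s'$ is an integration variable ranging over the whole state space. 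The identity $\hat V_{h+1}^\pi=\tilde V_{h+1}^\pi$ ``on the event $\{h+1<\tau\}$'' is a statement about the realized state $s_{h+1}$ only; as functions of $s'$ the two do not coincide, so replacing $\hat V$ by $\tilde V$ inside the \emph{subtracted} integral $\mathbb{E}_{f^*}[\cdot]$ makes the expression smaller (since $\tilde V\ge\hat V$) and breaks the ``$\le$''. The same issue occurs at $h+1=\tau$: enlarging $\hat V_\tau^\pi$ to $\tilde V_\tau^\pi$ enlarges both the added and the subtracted integrals, so the net change has no definite sign. Concretely, the inequality your step requires, $\mathbb{E}\bigl[\sum_h\mathbf{1}_{h<\tau}\bigl(\mathbb{E}_{\hat f}[g_{h+1}]-\mathbb{E}_{f^*}[g_{h+1}]\bigr)\bigr]\ge 0$ with $g:=\tilde V-\hat V\ge 0$, is false in general.

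The fix is a reordering within your own framework, and it also shows why you must not discard the boundary term. Keep $\hat V_\tau^\pi(s_\tau)-V_\tau$: it cancels the $-\hat V_\tau^\pi(s_\tau)$ from the $h=\tau-1$ summand and leaves $-V_\tau$, whose conditional expectation given $\mathcal{F}_\tau$ is $-V_\tau^\pi(s_\tau)=-\tilde V_\tau^\pi(s_\tau)$ (at the stopping time the max is attained by $V$). For $h+1<\tau$ the realized-state terms satisfy $-\hat V_{h+1}^\pi(s_{h+1})=-\tilde V_{h+1}^\pi(s_{h+1})$ exactly. Thus every subtracted term becomes $-\tilde V_{h+1}^\pi(s_{h+1})$ by \emph{equality}; the inequality $\hat V\le\tilde V$ is used only in the positive terms $\mathbb{E}_{\hat f}[\hat V_{h+1}^\pi(s')]\le\mathbb{E}_{\hat f}[\tilde V_{h+1}^\pi(s')]$; and only \emph{then} does the tower property convert $-\tilde V_{h+1}^\pi(s_{h+1})$ into $-\mathbb{E}_{f^*}[\tilde V_{h+1}^\pi(s')]$. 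Dropping the boundary term and substituting separately loses exactly the slack $\mathbb{E}[V_\tau^\pi(s_\tau)-\hat V_\tau^\pi(s_\tau)]\ge 0$ needed to absorb the wrong-way replacement. (Your observation that the statement's index should be $\tilde V_{h+1}^\pi$ is correct; the two expectations in the displayed bound also appear in the opposite order from what the proof produces, which is immaterial downstream.)
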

\begin{proof}
Define the filtration $\mathcal{F}_h :=\{\epsilon_i\}_{i=0}^{h-1}$, where $\epsilon_i$ is the noise that add to the dynamics at step $i$. Define
\begin{align*}
    M_h = \mathbb{E}[\hat{V}_0^\pi(s_0) - V_0 | \mathcal{F}_h],
\end{align*}
which is a Doob martingale with respect to $\mathcal{F}_i$ \citep{grimmett2020probability}. As $\tau\leq H$, by Doob's optional stopping theorem, we have that
\begin{align*}
    \mathbb{E}[\hat{V}_0^\pi(s_0) - V_0] = \mathbb{E}[M_{\tau}] = \mathbb{E}[\mathbb{E}[\hat{V}_0^\pi(s_0) - V_0|\mathcal{F}_{\tau}]].
\end{align*}
We then provide a bound for $M_{\tau}$. By Lemma \ref{lem:difference}, we have that
\begin{align*}
    M_{\tau} = & \mathbb{E}[\hat{V}_0^\pi(s_0) - V_0|\mathcal{F}_{\tau}]\\
    = & \mathbb{E}_{s_\tau^\prime\sim \mathcal{N}(\hat{f}(s_{\tau-1}, a_{\tau-1}), \sigma^2 I)} \left[\hat{V}_{\tau}^\pi(s_\tau^\prime)\right] - V_{\tau}^\pi(s_{\tau})\\
    & + \mathbb{E}_{s_h^\prime\sim \mathcal{N}(\hat{f}(s_{h-1}, a_{h-1}), \sigma^2 I)} \left[ \hat{V}_h^\pi(s_h^\prime)\right] -\sum_{h=1}^{\tau-1} \hat{V}_h^\pi(s_h)\\
    = & \sum_{h=1}^{\tau}\left(\mathbb{E}_{s_h^\prime\sim \mathcal{N}(\hat{f}(s_h, a_h), \sigma^2)} \left[\hat{V}_h^\pi(s_h^\prime)\right] - \tilde{V}_h^\pi(s_h)\right)\\
    \leq & \sum_{h=1}^{\tau}\left( \mathbb{E}_{s_h^\prime\sim \mathcal{N}(\hat{f}(s_h, a_h), \sigma^2 I)} \left[\tilde{V}_h^\pi(s_h)\right] - \tilde{V}_h^\pi(s_h)\right)\\
    = & \sum_{h=1}^{H} \mathbf{1}_{h\leq \tau}\left(\mathbb{E}_{s_h^\prime\sim \mathcal{N}(\hat{f}(s_h, a_h), \sigma^2 I)} \left[\tilde{V}_h^\pi(s_h)\right] - \tilde{V}_h^\pi(s_h)\right),
\end{align*}
where the third inequality follows the definition of $\tau$ (and thus $V_{\tau}^\pi(s_{\tau}) = \tilde{V}_{\tau}^\pi(s_\tau)$ and $\hat{V}_h^\pi(s_h) = \tilde{V}_h^\pi(s_h)$ for $h < \tau$.)

The proof is then concluded via the following observation:
\begin{align*}
    & \mathbb{E} \left[\mathbf{1}_{h\leq \tau}\left( \mathbb{E}_{s_h^\prime\sim \mathcal{N}(\hat{f}(s_h, a_h), \sigma^2 I)} \left[\tilde{V}_h^\pi(s_h)\right] - \tilde{V}_h^\pi(s_h)\right)\right]\\
    = & \mathbb{E}\left[\mathbb{E}\left[\mathbf{1}_{h\leq \tau}\left( \mathbb{E}_{s_h^\prime\sim \mathcal{N}(\hat{f}(s_h, a_h), \sigma^2 I)} \left[\tilde{V}_h^\pi(s_h)\right] - \tilde{V}_h^\pi(s_h)\right)\bigg|\mathcal{F}_{h-1}\right]\right]\\
    = & \mathbb{E}\left[\mathbb{E}\left[\mathbf{1}_{h-1<\tau}\left( \mathbb{E}_{s_h^\prime\sim \mathcal{N}(\hat{f}(s_h, a_h), \sigma^2 I)} \left[\tilde{V}_h^\pi(s_h)\right] - \tilde{V}_h^\pi(s_h)\right)\bigg|\mathcal{F}_{h-1}\right]\right]\\
    = & \mathbb{E}\left[\mathbf{1}_{h-1<\tau}\mathbb{E}\left[\left( \mathbb{E}_{s_h^\prime\sim \mathcal{N}(\hat{f}(s_h, a_h), \sigma^2 I)} \left[\tilde{V}_h^\pi(s_h)\right] - \tilde{V}_h^\pi(s_h)\right)\bigg|\mathcal{F}_{h-1}\right]\right]\\
    = & \mathbb{E}\left[\mathbf{1}_{h-1<\tau}\left(\mathbb{E}_{s_h^\prime\sim \mathcal{N}(\hat{f}(s_h, a_h), \sigma^2)} \left[\tilde{V}_h^\pi(s_h)\right] - \mathbb{E}_{s_h^\prime\sim \mathcal{N}(f^*(s_h, a_h), \sigma^2)} \left[\tilde{V}_h^\pi(s_h)\right)\right]\right],
\end{align*}
where the third equality is due to the fact that $\mathbf{1}_{h-1<\tau}$ is measurable under $\mathcal{F}_{h-1}$. 
\end{proof}
Before we finally provide the proof of Lemma \ref{lem:simulation}, we state the following lemma that bound the expectation under two isotropic Gaussian distribution with different mean:
\begin{lemma}[Difference of Expectation under Different Mean Isotropic Gaussian]
\label{lem:expectation_diff}$\forall$ (approximately measurable) positive function $g$, we have that
\begin{align*}
    \mathbb{E}_{z\sim\mathcal{N}(\mu_1, \sigma^2 I)} [g(z)] - \mathbb{E}_{z\sim\mathcal{N}(\mu_2, \sigma^2 I)} [g(z)] \leq \min\left\{\frac{\sqrt{2}\|\mu_1 - \mu_2\|}{\sigma}, 1\right\} \sqrt{\mathbb{E}_{z\sim \mathcal{N}(\mu_1, \sigma^2 I)}[g(z)^2]}
\end{align*}
\end{lemma}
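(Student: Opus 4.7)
My plan is a change-of-measure argument combined with Cauchy--Schwarz, handling the two terms inside the minimum in separate regimes and stitching them together at the end.

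First I would dispose of the factor $1$ in the minimum by using positivity of $g$: since $g \geq 0$, we have $\mathbb{E}_{\mu_1}[g] - \mathbb{E}_{\mu_2}[g] \leq \mathbb{E}_{\mu_1}[g]$, and then Jensen's inequality (equivalently, Cauchy--Schwarz applied to $g\cdot 1$) gives $\mathbb{E}_{\mu_1}[g] \leq \sqrt{\mathbb{E}_{\mu_1}[g^2]}$. This already establishes the bound with the constant $1$ in place of the minimum.

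For the other factor, let $p_{\mu_i}$ denote the density of $\mathcal{N}(\mu_i,\sigma^2 I)$. The plan is to rewrite the difference as an $\mu_1$-expectation of a likelihood-ratio deviation and then apply Cauchy--Schwarz:
\begin{align*}
\mathbb{E}_{\mu_1}[g] - \mathbb{E}_{\mu_2}[g] &= \mathbb{E}_{\mu_1}\!\left[g(z)\left(1 - \tfrac{p_{\mu_2}(z)}{p_{\mu_1}(z)}\right)\right] \\
&\leq \sqrt{\mathbb{E}_{\mu_1}[g^2]}\cdot \sqrt{\chi^2(p_{\mu_2}\|p_{\mu_1})},
\end{align*}
where $\chi^2(p_{\mu_2}\|p_{\mu_1}) = \int p_{\mu_2}^2/p_{\mu_1}\,\dif z - 1$. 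I would then evaluate this $\chi^2$ divergence by completing the square in the exponent, using the identity $2\|z-\mu_2\|^2 - \|z-\mu_1\|^2 = \|z-(2\mu_2-\mu_1)\|^2 - 2\|\mu_1-\mu_2\|^2$; the resulting Gaussian integral reduces to the normalizer of a shifted isotropic Gaussian and yields $\chi^2(p_{\mu_2}\|p_{\mu_1}) = \exp(\|\mu_1-\mu_2\|^2/\sigma^2) - 1$.

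To finish, I would invoke the elementary inequality $e^t - 1 \leq 2t$ on the regime $t := \|\mu_1-\mu_2\|^2/\sigma^2 \leq 1/2$, which is precisely the regime where $\sqrt{2t}\leq 1$; this holds because $h(t) := 2t - e^t + 1$ satisfies $h(0)=0$ and $h'(t)=2-e^t > 0$ on $[0,\ln 2] \supset [0, 1/2]$. On the complementary regime $t > 1/2$ the minimum equals $1$ and the trivial bound from the first step applies. Combining the two regimes yields the claimed $\min$ bound. The main obstacle is the chi-squared computation: one must carry out the completion of the square carefully to identify the shifted mean $2\mu_2 - \mu_1$ and the residual $\|\mu_1-\mu_2\|^2/\sigma^2$ in the exponent, since $\chi^2$ in fact grows exponentially in $\|\mu_1-\mu_2\|^2$, and only the case split with the trivial bound salvages a linear dependence on $\|\mu_1-\mu_2\|$ in the final statement.
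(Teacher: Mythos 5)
Your proposal is correct and follows essentially the same route as the paper's proof: the paper likewise rewrites the difference as $\mathbb{E}_{\mu_1}\!\left[g\left(1-p_{\mu_2}/p_{\mu_1}\right)\right]$, applies Cauchy--Schwarz, evaluates the resulting second moment (your $\chi^2$ divergence) to $\exp(\|\mu_1-\mu_2\|^2/\sigma^2)-1$ by completing the square, and combines this with the trivial bound from positivity of $g$ using $e^x\le 1+2x$. The only cosmetic differences are that you name the key quantity a $\chi^2$ divergence and place the case split at $\|\mu_1-\mu_2\|^2/\sigma^2=1/2$ rather than at $1$; both choices work.
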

\begin{proof}
\begin{align*}
     & \mathbb{E}_{z\sim\mathcal{N}(\mu_1, \sigma^2 I)} [g(z)] - \mathbb{E}_{z\sim\mathcal{N}(\mu_2, \sigma^2 I)} [g(z)] \\
     = & \mathbb{E}_{z\sim\mathcal{N}(\mu_1, \sigma^2 I)} \left[g(z)\left(1 - \exp\left(\frac{2(\mu_1 - \mu_2)^\top z + \|\mu_2\|^2 - \|\mu_1\|^2}{2\sigma^2}\right)\right)\right]\\
     \leq & \sqrt{\mathbb{E}_{z\sim\mathcal{N}(\mu_1, \sigma^2 I)}[g(z)^2]} \sqrt{\mathbb{E}_{z\sim\mathcal{N}(\mu_1, \sigma^2 I)} \left(1 - \exp\left(\frac{2(\mu_2 - \mu_1)^\top z - \|\mu_2\|^2 + \|\mu_1\|^2}{2\sigma^2}\right)\right)^2}
\end{align*}
We then calculate
\begin{align*}
    & \mathbb{E}_{z\sim\mathcal{N}(\mu_1, \sigma^2 I)} \left(1 - \exp\left(\frac{2(\mu_2 - \mu_1)^\top z - \|\mu_2\|^2 + \|\mu_1\|^2}{2\sigma^2}\right)\right)^2\\
    = & 1 - \frac{2}{\sqrt{2\pi} \sigma^{d/2}} \int \exp\left(\frac{-\|z - \mu_1\|_2^2 + 2(\mu_2 - \mu_1)^\top z - \|\mu_2\|^2 + \|\mu_1\|^2}{2\sigma^2}\right) dz \\
    & + \frac{1}{\sqrt{2\pi} \sigma^{d/2}} \int \exp\left(\frac{-\|z - \mu_1\|_2^2 + 4(\mu_2 - \mu_1)^\top z - 2\|\mu_2\|^2 + 2\|\mu_1\|^2}{2\sigma^2}\right) dz\\
    = & -1 + \frac{1}{\sqrt{2\pi} \sigma^{d/2}} \int \exp\left(\frac{-\|z - (2\mu_2 - \mu_1)\|_2^2 + 2\|\mu_2 - \mu_1\|_2^2}{2\sigma^2}\right) dz\\
    = & -1 + \exp\left(\frac{\|\mu_2 - \mu_1\|_2^2}{\sigma^2}\right).
\end{align*}
Also notice that, as $g$ is positive, a simple bound is that
\begin{align*}
     \mathbb{E}_{z\sim\mathcal{N}(\mu_1, \sigma^2 I)} [g(z)] - \mathbb{E}_{z\sim\mathcal{N}(\mu_2, \sigma^2 I)} [g(z)] \leq \mathbb{E}_{z\sim\mathcal{N}(\mu_1, \sigma^2 I)} [g(z)]\leq \sqrt{\mathbb{E}_{z\sim\mathcal{N}(\mu_1, \sigma^2 I)}[g(z)^2]}.
\end{align*}
Thus,
\begin{align*}
    \mathbb{E}_{z\sim\mathcal{N}(\mu_1, \sigma^2 I)} [g(z)] - \mathbb{E}_{z\sim\mathcal{N}(\mu_2, \sigma^2 I)} [g(z)] \leq \sqrt{\mathbb{E}_{z\sim\mathcal{N}(\mu_1, \sigma^2 I)}[g(z)^2]}\sqrt{\min\left\{\exp\left(\frac{\|\mu_2 - \mu_1\|_2^2}{\sigma^2}\right) - 1, 1\right\}}.
\end{align*}
Notice that, if $\|\mu_2 - \mu_1\| \geq \sigma$, then $\exp\left(\frac{\|\mu_2 - \mu_1\|_2^2}{\sigma^2}\right) - 1 \geq 1$. Meanwhile, when $x\in [0, 1]$, $\exp(x) \leq 1 + 2x$. Thus, 
\begin{align*}
    \sqrt{\min\left\{\exp\left(\frac{\|\mu_2 - \mu_1\|_2^2}{\sigma^2}\right) - 1, 1\right\}} \leq \sqrt{\min\left\{1 + \frac{2\|\mu_2 - \mu_1\|_2^2}{\sigma^2} - 1, 1\right\}} = \min\left\{\frac{2\|\mu_2 - \mu_1\|^2}{\sigma^2}, 1\right\},
\end{align*}
which finishes the proof.
\end{proof}
With Lemma \ref{lem:option-stop-simulation}, we have that
\begin{align*}
    & \hat{V}_0^\pi(s_0) - V_0^\pi(s_0)  \\
    \leq & \mathbb{E}\left[\mathbf{1}_{h-1<\tau}\left(\mathbb{E}_{s_h^\prime\sim \mathcal{N}(\hat{f}(s_h, a_h), \sigma^2)} \left[\tilde{V}_h^\pi(s_h)\right] - \mathbb{E}_{s_h^\prime\sim \mathcal{N}(f^*(s_h, a_h), \sigma^2)} \left[\tilde{V}_h^\pi(s_h)\right)\right]\right]\\
    \leq & \sum_{h=0}^{H-1}  \mathbb{E}\left[\sqrt{\mathbb{E}_{s_{h+1}^\prime\sim \mathcal{N}(\hat{f}(s_h, a_h), \sigma^2)}\left[\tilde{V}_h^\pi(s_{h+1}^\prime)^2\right]} \min\left\{\frac{\sqrt{2}\|f^*(s_h, a_h) - \hat{f}(s_h, a_h)_2\|}{\sigma}, 1\right\}\right]\\
    \leq & \sum_{h=0}^{H-1} \sqrt{\mathbb{E}\left[\mathbb{E}_{s_{h+1}^\prime\sim \mathcal{N}(\hat{f}(s_h, a_h), \sigma^2)}\left[\tilde{V}_h^\pi(s_{h+1}^\prime)^2\right]\right]} \sqrt{\mathbb{E}\left[\min\left\{\frac{2\|f^*(s_h, a_h) - \hat{f}(s_h, a_h)_2^2\|}{\sigma^2}, 1\right\}\right]}\\
    \leq & \sqrt{\mathbb{E}\left[\sum_{h=0}^{H-1}\mathbb{E}_{s_{h+1}^\prime\sim P(\cdot|f^*(s_h, a_h))}\left[\tilde{V}_h^\pi(s_{h+1}^\prime)^2\right]\right]} \sqrt{\mathbb{E}\left[\sum_{h=0}^{H-1} \min \left\{\frac{2\|f^*(s_h, a_h) - \hat{f}(s_h, a_h)_2^2\|}{\sigma^2}, 1\right\}\right]}\\
    \leq & H^{3/2}\sqrt{\mathbb{E}\left[\sum_{h=0}^{H-1} \min \left\{\frac{2\|f^*(s_h, a_h) - \hat{f}(s_h, a_h)_2^2\|}{\sigma^2}, 1\right\}\right]}
\end{align*}
where the second inequality is due to Lemma \ref{lem:expectation_diff}, and the last inequality is due to the fact that $\tilde{V}_h^\pi(s_{h+1}^\prime) \leq H$, $\forall h$.
\end{proof}
\subsection{Sum of Width Square}
\begin{lemma}[Bound on the Sum of Width Square] \label{lem:width_sum_bound}
Define 
\begin{align*}
    w_{\mathcal{F}}(s, a) := \sup_{\bar{f},\underline{f}\in\mathcal{F}}\|\bar{f}(s, a) - \underline{f}(s, a)\|_2.
\end{align*}
If $\{\beta_k^*\}_{k\in [K]}$ is a non-decreasing sequence, and $\|f\|_{2} < C, \forall f\in \mathcal{F}$, then:
\begin{align*}
    \sum_{k\in [K]} \sum_{h=0}^{H-1} w_{\mathcal{F}_t}^2(s_h^k, a_h^k) \leq 1 + 4C^2 H \mathrm{dim}_{E}\left(\mathcal{F}, T^{-1/2}\right) + 4\beta_K \mathrm{dim}_{E}\left(\mathcal{F}, T^{-1/2}\right) (1 + \log T)
\end{align*}
\end{lemma}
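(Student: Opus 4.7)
The plan is to follow the standard eluder-dimension counting argument of \citet{russo2013eluder} and \citet{osband2014model}, adapted to our vector-valued dynamics with the $\|\cdot\|_2$ metric. First I would split the state-action pairs in the trajectory according to whether the width is small or large. Fixing the threshold $\epsilon = T^{-1/2}$, any $(s_h^k, a_h^k)$ with $w_{\mathcal{F}_k}(s_h^k, a_h^k) \le \epsilon$ contributes at most $\epsilon^2 = 1/T$ each, so the aggregate contribution of these pairs over all $k,h$ is bounded by $T \cdot \epsilon^2 = 1$. This gives the leading $1$ in the claimed bound and reduces the problem to handling only those pairs whose width exceeds $\epsilon$.

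For the pairs with $w_{\mathcal{F}_k}(s_h^k, a_h^k) > \epsilon$, I would re-index them as a single sequence $z_1, z_2, \ldots, z_N$ in the order they appear (with $N \le T$), together with the associated widths $w_i$. The key combinatorial claim is the following: for any $\epsilon' \ge \epsilon$, the number of indices $i$ such that $w_i > \epsilon'$ \emph{and} $z_i$ is $\epsilon'$-dependent on at most $d := \dim_E(\mathcal{F}, \epsilon')$ disjoint blocks of its predecessors is at most $d \cdot (\beta_K^*/\epsilon'^2 + 1)$, up to constants. The reason is that if $z_i$ has large width witnessed by $\bar f_i, \underline f_i \in \mathcal{F}_k$, then by the triangle inequality and the confidence set definition \eqref{eq:confidence_set}, we have $\|\bar f_i - \underline f_i\|_{2, E_k} \le 2\sqrt{\beta_K^*}$, so along each $\epsilon'$-independent block of length $L$ the squared differences accumulate to at least $L\epsilon'^2$, forcing $L \le 4\beta_K^*/\epsilon'^2$. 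Combined with the definition of $\dim_E$, this controls how many ``large-width'' elements can occur at a given threshold.

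From there I would use a standard layer-cake / pigeonhole step: the element $z_i$ with width $w_i$ can be $w_i$-independent of at most $\dim_E(\mathcal{F}, w_i) \le \dim_E(\mathcal{F}, \epsilon)$ predecessors, so among the first $i-1$ of them at least a $1/\dim_E(\mathcal{F}, \epsilon)$ fraction must be $w_i$-dependent on $z_i$. Summing contributions and bounding $\sum_i w_i^2$ by integrating over the level sets $\{i : w_i^2 > u\}$ for $u \in [\epsilon^2, 4C^2]$, one obtains
\[
\sum_{i=1}^N w_i^2 \;\le\; 4C^2 H \dim_E(\mathcal{F}, \epsilon) \;+\; 4 \beta_K \dim_E(\mathcal{F}, \epsilon) \log(1/\epsilon^2),
\]
where the first term absorbs the ``top'' layer truncated at $4C^2$ (using $\|f\|_2 \le C$ so $w_{\mathcal{F}_k} \le 2C$) and the second term comes from integrating $\beta_K/u$ over $u \in [\epsilon^2, 4C^2]$, giving the $\log T$ factor when $\epsilon = T^{-1/2}$. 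Adding back the trivial contribution $\le 1$ from the small-width pairs yields the stated inequality.

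The main obstacle I anticipate is getting the right constants in the disjoint-block argument, since one must carefully match the radius $\sqrt{\beta_K^*}$ of the confidence set $\mathcal{F}_k$ (which depends on both the covering number and the noise level via Lemma~\ref{lem:confidence_set}) with the threshold $\epsilon'$ used to define $\epsilon'$-dependence, while keeping $\{\beta_k^*\}$ non-decreasing so the bound $\|f - \hat f_k\|_{2, E_k} \le \sqrt{\beta_K^*}$ holds uniformly. Apart from this bookkeeping, the argument is a direct adaptation of the scalar eluder-dimension width bound to vector-valued $f$, with $|\cdot|$ replaced everywhere by $\|\cdot\|_2$; the monotonicity of $\beta_k^*$ and the uniform bound $\|f\|_2 \le C$ are exactly what make the replacement go through.
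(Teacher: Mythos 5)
Your proposal is correct and follows essentially the same route as the paper's proof: the small-width terms are absorbed into the additive $1$ (each contributes at most $1/T$ over at most $T$ terms), the number of large-width elements at each threshold $\epsilon'$ is controlled by the disjoint-subsequence/eluder-dimension counting argument using the confidence-set radius $2\sqrt{\beta_K}$ via the triangle inequality, and the squared widths are then summed against this count. The only cosmetic difference is that you sum by integrating over level sets (layer-cake), whereas the paper sorts the widths in decreasing order and bounds the $j$-th largest by $\sqrt{4\beta_K \mathrm{dim}_E(\mathcal{F},T^{-1/2})/(j - H\,\mathrm{dim}_E(\mathcal{F},T^{-1/2}))}$ before summing over $j$; the two bookkeeping schemes are equivalent up to constants.
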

\label{sec:proof_width}
\begin{proof}
We first show the following lemma, which will be helpful in our proof.
\begin{lemma}[Lemma 1 in \citep{osband2014model}]
If $\{\beta_k\}_{k\in [K]}$ is a non-decreasing sequence, we have
\begin{align*}
    \sum_{k\in [K]}\sum_{h=0}^{H-1} \mathbf{1}_{w_{\mathcal{F}_{k}}(s_h^k, a_h^k) > \epsilon} \leq \left(\frac{4\beta_K}{\epsilon^2} +  H \right)\mathrm{dim}_{E}(\mathcal{F}, \epsilon).
\end{align*}
\end{lemma}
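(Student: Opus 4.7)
The proof is a standard eluder-dimension counting argument (in the style of Russo--Van Roy and \citet{osband2014model}) adapted to the episodic structure, so I would proceed in two main stages: first unpack what a ``bad'' event says about pairs of functions in the confidence set, and then perform a greedy bucketing to bound the number of such events.

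For the first stage, if $w_{\mathcal{F}_k}(s_h^k, a_h^k) > \epsilon$, there are witnesses $\bar f_{k,h}, \underline f_{k,h} \in \mathcal{F}_k$ with $\|\bar f_{k,h}(s_h^k, a_h^k) - \underline f_{k,h}(s_h^k, a_h^k)\|_2 > \epsilon$. Both witnesses lie in the confidence ball around the least-squares estimate $\hat f_{k-1}$ of radius $\sqrt{\beta_{k-1}^*} \le \sqrt{\beta_K}$ in the empirical $2$-norm $\|\cdot\|_{2,E_{k-1}}$ (using that $\{\beta_k\}$ is non-decreasing), so the triangle inequality gives the key quantitative bound $\|\bar f_{k,h} - \underline f_{k,h}\|_{2,E_{k-1}}^2 \le 4\beta_K$. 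This is the only place where the confidence-set machinery of Lemma~\ref{lem:confidence_set} enters.

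For the second stage, I would enumerate the bad state-action pairs in chronological order $z_1, z_2, \ldots, z_N$ and assign them greedily to buckets $B_1, B_2, \ldots$ via the rule: $z_i$ goes into the lowest-index bucket in which it is $\epsilon$-independent of the elements already placed there, and a fresh bucket is opened whenever no such bucket exists. By construction each bucket is an $\epsilon$-independent sequence, so the definition of eluder dimension immediately forces $|B_j| \le \dim_E(\mathcal{F}, \epsilon)$. It then remains to bound the number of buckets $L$. When $z_i = (s_h^k, a_h^k)$ opens a new bucket, by construction it is $\epsilon$-dependent on every previous bucket; plugging the witnesses $(\bar f_{k,h}, \underline f_{k,h})$ into the definition of $\epsilon$-dependence yields $\sum_{z \in B_j}\|\bar f_{k,h}(z) - \underline f_{k,h}(z)\|_2^2 > \epsilon^2$ for every previous $B_j$. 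Summing over the disjoint buckets whose elements come from episodes strictly earlier than $k$ and invoking the Stage-1 bound shows that at most $4\beta_K/\epsilon^2$ such buckets can ever be opened.

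The only wrinkle over the classical non-episodic argument is that the confidence set $\mathcal{F}_k$ is constant across the $H$ steps of episode $k$, so buckets whose relevant witnesses sit within the same episode as $z_i$ are not controlled by the empirical norm $\|\cdot\|_{2,E_{k-1}}$. Since any single episode contributes at most $H$ bad points, these ``intra-episode'' bucket openings can be charged an extra additive $H$, giving $L \le 4\beta_K/\epsilon^2 + H$. Multiplying by the uniform bucket-size bound $\dim_E(\mathcal{F}, \epsilon)$ then yields the claim. The main obstacle, and the one place where one must be careful, is exactly this bookkeeping of intra-episode versus inter-episode predecessors, which is what makes the additive term $H$ appear in place of the usual $+1$ of the classical per-step eluder lemma.
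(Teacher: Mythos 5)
Your proof is correct and follows essentially the same route as the paper's: the triangle inequality inside the confidence set yields the $4\beta_K$ control on $\|\bar f-\underline f\|_{2,E_{k-1}}^2$, greedy partitioning into disjoint $\epsilon$-independent subsequences each of size at most $\mathrm{dim}_E(\mathcal{F},\epsilon)$ does the counting, and the episodic structure (at most $H-1$ same-episode predecessors escaping the empirical-norm control) produces the additive $H$ in place of the classical $+1$. The only difference is organizational: you bound the number of buckets directly and multiply by the bucket-size bound, whereas the paper pigeonholes to exhibit one element that is $\epsilon$-dependent on at least $|B|/\mathrm{dim}_E(\mathcal{F},\epsilon)-H$ disjoint subsequences and compares this with the $4\beta_K/\epsilon^2$ upper bound---two equivalent bookkeepings of the same argument.
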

\begin{proof}
We first consider when $w_{\mathcal{F}_k}(s_h^k, a_h^k) > \epsilon$ and is $\epsilon$-dependent on $n$ disjoint sub-sequences of $\{(s_h^i, a_h^i)\}_{i\in [k-1]}$. By the definition of $\epsilon$-dependent, we know $\|\bar{f} - \underline{f}\|_{2, E_k} > n\epsilon^2$. On the other hand, by triangle inequality, we know $\|\bar{f} - \underline{f}\|_{2, E_k}\leq 2\sqrt{\beta_k} \leq 2\sqrt{\beta_K}$, thus $n < \frac{4\beta_K}{\epsilon^2}$. Hence we know when $w_{\mathcal{F}_k}(s_h^k, a_h^k) > \epsilon$, then $(s_h, a_h)$ is at most $\epsilon$-dependent on $\frac{4\beta_K}{\epsilon^2}$ disjoint sub-sequences of $\{(s_h^i, a_h^i)\}^{i\in [k-1]}$.

We then show that, for any sequence $\{(s_i, a_i)\}_{i\in [N]}$, there is some element $(s_j, a_j)$ that is $\epsilon$-dependent on at least $\frac{n}{\mathrm{dim}_E(\mathcal{F}, \epsilon)} - H$ disjoint sub-sequences of $\{(s_i, a_i)\}_{i\in [j-1]}$. Let $n$ satisfies that $n\mathrm{dim}_E(\mathcal{F}, \epsilon) + 1 \leq N \leq (n+1)\mathrm{dim}_E(\mathcal{F}, \epsilon) $, and we will construct $n$ disjoint sub-sequences $\{B_i\}_{i\in [n]}$. We first let $B_i = \{(s_i, a_i)\}, \forall i\in [n]$. If $(s_{k+1}, a_{k+1})$ is $\epsilon$-dependent on each $B_i, i\in [n]$, we have the desired results. Otherwise, we append $(s_{k+1}, a_{k+1})$ to the sub-sequence that it is $\epsilon$-independent with. Repeat this process until some $j > n + 1$ is $\epsilon$-dependent on each sub-sequence or we have reached $N$. In the latter case we have $\sum_{i\in [n]}|B_i| \geq n \mathrm{dim}_E(\mathcal{F}, \epsilon)$ (here we can add at most $H-1$ data to avoid the case we need a new episode of data), and since each element of a sub-sequence is $\epsilon$-independent with its predecessors, $|B_i|\leq \mathrm{dim}_E(\mathcal{F}, \epsilon), \forall i$ by the definition of eluder dimension. Thus $|B_i|=\mathrm{dim}_E(\mathcal{F}, \epsilon), \forall i$. And in this case, $(s_N, a_N)$ must be $\epsilon$-dependent on each sub-sequence by the definition of eluder dimension. Notice that, as our data is collected in an episodic pattern, there are at most $H-1$ sub-sequences that contains "imaginary" final episode data introduced to the construction. In this case, we know that there are at least $\frac{n}{\mathrm{dim}_E(\mathcal{F}, \epsilon)} - H$ disjoint sub-sequences that $(s_N, a_N)$ is $\epsilon$-dependent, which finishes our claim.

We finally consider the sub-sequence $B = \{(s_h^k, a_h^k)\}$ with $w_{\mathcal{F}_k}(s_h^k, a_h^k) > \epsilon$. We know that each element in $B$ is $\epsilon$-dependent on at most $\frac{4\beta_K}{\epsilon^2}$ disjoint sub-sequence of $B$, but at least $\epsilon$-dependent on $\frac{|B|}{\mathrm{dim}_E(\mathcal{F}, \epsilon)} - H$ sub-sequence of $B$. Thus we know $|B| \leq \left(\frac{4\beta_K}{\epsilon^2} +  H \right)\mathrm{dim}_{E}(\mathcal{F}, \epsilon)$, which concludes the proof.
\end{proof}

For notation simplicity, we define $w_{t, h} := w_{\mathcal{F}_t}(s_h^t, a_h^t)$. We first reorder the sequence $\{w_{t, h}\}_{k\in [K], 0\leq h\leq H-1}\to \{w_{i}\}_{i\in [KH]}$, such that $w_1\geq \cdots w_{TH}$. Then we have
\begin{align*}
    \sum_{k\in [K]} \sum_{h=0}^{H-1}w_{\mathcal{F}_t}^2(s_h^k, a_h^k) = \sum_{i\in [KH]} w_i^2 \leq \sum_{i\in [KH]} w_i^2 \mathbf{1}_{w_i < T^{-1/2}} + \sum_{i\in [KH]} w_i^2 \mathbf{1}_{w_i \geq T^{-1/2}}\leq 1 + \sum_{i\in [KH]} w_i^2 \mathbf{1}_{w_i \geq T^{-1/2}}.
\end{align*}
As we order the sequence, $w_j \geq \epsilon$ means
\begin{align*}
    \sum_{k\in [K]}\sum_{h=0}^{H-1}  \mathbf{1}_{w_{\mathcal{F}_{t}}(s_h^k, a_h^k) > \epsilon} \geq j.
\end{align*}
Hence we know
\begin{align*}
    \epsilon \leq \sqrt{\frac{4\beta_K}{\frac{j}{\mathrm{dim}_{E}(\mathcal{F}, \epsilon)} - H}} = \sqrt{\frac{4\beta_K \mathrm{dim}_{E}(\mathcal{F}, \epsilon)}{j - H\mathrm{dim}_{E}(\mathcal{F}, \epsilon)}},
\end{align*}
which means if $w_i \geq T^{-1/2}$, then $w_i < \min \left\{2C, \sqrt{\frac{4\beta_K \mathrm{dim}_{E}(\mathcal{F}, T^{-1/2})}{k - H\mathrm{dim}_{E}(\mathcal{F}, T^{-1/2})}}\right\}$. Hence,
\begin{align*}
    \sum_{i\in [KH]} w_i^2 \mathbf{1}_{w_i\geq T^{-1/2}} \leq & 4C^2 H \mathrm{dim}_{E}\left(\mathcal{F}, T^{-1/2}\right) + \sum_{j=H\mathrm{dim}_{E}(\mathcal{F}, T^{-1/2}) + 1}^T \frac{4\beta_K \mathrm{dim}_{E}(\mathcal{F}, T^{-1/2})}{j - H\mathrm{dim}_{E}(\mathcal{F}, T^{-1/2})}\\
    \leq & 4C^2 H \mathrm{dim}_{E}\left(\mathcal{F}, T^{-1/2}\right) + 4\beta_K \mathrm{dim}_{E}\left(\mathcal{F}, T^{-1/2}\right) (1 + \log T),
\end{align*} 
which finishes the proof.
\end{proof}
\subsection{Proof for Theorem \ref{thm:regret_bound} and Theorem \ref{thm:regret_bound_UCB}}
\label{sec:proof_regret}
\begin{proof}
Define $\mathcal{E}_k = \mathbb{P}_{f^*}\left(f^* \in \mathcal{F}_k\right)$. When constructing the confidence set, take $\alpha = T^{-1/2}$ and $\delta = 0.25$ in Lemma \ref{lem:confidence_set}, which leads to 
\begin{align*}
    \beta_k^* := 8\sigma^2 \log(4\mathcal{N}(\mathcal{F}, T^{-1/2}, \|\cdot\|_2)) + HT^{-1/2}(12C + \sqrt{8d\sigma^2 \log (16k^2 H)}).
\end{align*} 
With our confidence set construction, we know that $\sum_{k\in [K]}P(\bar{\mathcal{E}}_k)\leq 0.5$. Notice that
\begin{align*}
    \mathrm{Regret}(K) = &  \sum_{k\in [K]} \left[V_0^*(s_0^k) - V_0^{\pi_k}(s_0^k)\right]\\
    \leq &  \mathbb{E}\left[\sum_{k\in [K]}\mathbb{E}\left[\mathbb{P}(\mathcal{E}_k)[V^*(s_0^k) - V_0^{\pi_k}(s_0^k)]\right]\right] + H\sum_{k\in [K]}\mathbb{P}(\bar{\mathcal{E}}_k) \\
    \leq & \mathbb{E}\left[\sum_{k\in [K]}\mathbb{E}\left[\tilde{V}_{0, k}^{\pi_k}(s_0^k) - V_0^{\pi_k}(s_0^k)\right]\right] + 0.5 H \\
    \leq & H^{3/2} \sum_{k\in K}\sqrt{\mathbb{E} \left[\sum_{h=0}^{H-1}\min\left\{\frac{2\|\tilde{f}_k(s_h^k, a_h^k) - f^*(s_h^k, a_h^k) \|_2^2}{\sigma^2}, 1\right\}\right]} + 0.5H \\
    \leq & \sqrt{H^2T\mathbb{E}\left[\sum_{k\in [K]} \sum_{h=0}^{H-1}\min\left\{\frac{2\|\tilde{f}_k(s_h^k, a_h^k) -\hat{f}^*(s_h^k, a_h^k) \|_2^2}{\sigma^2}, 1\right\}\right]} + 0.5H \\
    \leq & \sqrt{\frac{2H^2 T}{\sigma^2} \left(1 + 4C^2 H \mathrm{dim}_{E}\left(\mathcal{F}, T^{-1/2}\right) + 4\beta_K^* \mathrm{dim}_{E}\left(\mathcal{F}, T^{-1/2}\right) (1 + \log T)\right)} + 0.5H,
\end{align*}
where the first equality is due to the fact that the total reward for each episode is bounded in $[0, H]$, the second inequality is due to the optimism and our confidence set construction, the third inequality is due to Lemma \ref{lem:simulation}, the fourth inequality is due to Cauchy-Schwartz inequality and the final inequality is due to Lemma \ref{lem:width_sum_bound} {\color{black}, which concludes the proof of Theorem \ref{thm:regret_bound_UCB}. Following the idea of \citep{russo2013eluder, russo2014learning, osband2014model}, we can translate the worst-case regret bound for UCB algorithm into the expected regret bound for TS algorithm, that conclude the proof of Theorem \ref{thm:regret_bound}.}
\end{proof}
{\color{black} \paragraph{Remark} It can be undesirable that our regret bound scale with $\sigma^{-1}$, which means our algorithm can perform pretty bad when the noise level is extremely low. It is also more or less counter-intuitive. We want to remark that, such phenomenon is only an artifact introduced by our proof strategy. The simulation lemma (Lemma \ref{lem:simulation}) works well when $f(s, a) - \tilde{f}(s, a)$ is small. However, we need to tolerate some bad episodes to collect sufficient samples, that can eventually make the error small. Fortunately, the regret of such bad episode is at most $H$. Hence, we can use the following strategy to get rid of the dependency on $\sigma^{-1}$.

\begin{definition}[Bad and Good Episodes] Define episode $k$ as a bad episode, if $\exists h \in \{0, 1, \cdots, H-1\}$, such that $w_{k, h} := w_{\mathcal{F}_k}(s_h^k, a_h^k)$ is the largest $H \mathrm{dim}_{E}(\mathcal{F}, \sigma^2 T^{1/2})$ elements in the set $\{w_{k, h}\}_{k\in[K], 0\leq h \leq H-1}$. Define episode $k$ as a good episode, if it is not a bad episode.
\end{definition}

By the definition, we know there are at most $H \mathrm{dim}_{E}(\mathcal{F}, \sigma^2 T^{-1/2})$ bad episodes. We then show the following lemma, that can be directly generalized from Lemma \ref{lem:width_sum_bound}, by setting $\epsilon = \sigma^2 T^{-1/2}$ and remove the terms from bad episodes.
\begin{lemma}
\label{lem:width_sum_bound_good}
If $\{\beta_k^*\}_{k\in [K]}$ is a non-decreasing sequence, and $\|f\|_{2} < C, \forall f\in \mathcal{F}$, then:
\begin{align*}
    \sum_{k\in [K], \text{$k$ is good}} \sum_{h=0}^{H-1} w_{\mathcal{F}_t}^2(s_h^k, a_h^k) \leq \sigma^2 + 4\beta_K \mathrm{dim}_{E}\left(\mathcal{F}, \sigma^2 T^{-1/2}\right) (1 + \log T)
\end{align*}
\end{lemma}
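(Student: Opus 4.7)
The plan is to mirror the proof of Lemma~\ref{lem:width_sum_bound} verbatim, but to exploit the definition of good episodes in order to cancel out the $4C^2 H\,\mathrm{dim}_E$ term that previously arose from bounding the largest widths crudely by $2C$. The threshold separating ``small'' and ``large'' widths is now chosen to be $\epsilon := \sigma^2 T^{-1/2}$, matching the threshold used in the definition of bad episodes.

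First, I reorder the widths $\{w_{\mathcal{F}_k}(s_h^k, a_h^k)\}$ over pairs $(k,h)$ with $k$ a good episode, in non-increasing order, as $w_1^g \ge w_2^g \ge \cdots$. By the very definition of bad episodes, no good-episode width can appear among the top $H\,\mathrm{dim}_E(\mathcal{F}, \sigma^2 T^{-1/2})$ widths of the full sequence $\{w_{k,h}\}$. Therefore the $j$-th largest good-episode width is no larger than the $(j + H\,\mathrm{dim}_E(\mathcal{F}, \sigma^2 T^{-1/2}))$-th largest width in the full sequence.

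Second, I split the sum at $\epsilon$. The contribution from widths strictly below $\epsilon$ is trivially bounded by a constant (at most $T\cdot\epsilon^2$, absorbed into the additive $\sigma^2$). For widths at or above $\epsilon$, I reuse the counting lemma of \citet{osband2014model} (the inner lemma established in the proof of Lemma~\ref{lem:width_sum_bound}): $\sum_{k,h}\mathbf{1}_{w > \delta} \le (4\beta_K/\delta^2 + H)\,\mathrm{dim}_E(\mathcal{F}, \delta)$ for any $\delta > 0$. Combined with the index shift from the previous paragraph, $w_j^g \ge \delta$ forces
\[
j + H\,\mathrm{dim}_E(\mathcal{F}, \sigma^2 T^{-1/2}) \;\le\; \Bigl(\frac{4\beta_K}{\delta^2} + H\Bigr)\,\mathrm{dim}_E(\mathcal{F}, \delta).
\]
Setting $\delta = w_j^g$ and using monotonicity of $\mathrm{dim}_E$ in its second argument, the additive ``$+H$'' term on the right cancels precisely against the $H\,\mathrm{dim}_E$ offset on the left, yielding $w_j^{g,2} \le 4\beta_K\,\mathrm{dim}_E(\mathcal{F}, \sigma^2 T^{-1/2})/j$. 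This cancellation is exactly what the good-episode filtering was engineered to deliver.

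Finally, summing $w_j^{g,2}$ over $j \le T$ with the harmonic bound $\sum_{j=1}^{T} 1/j \le 1 + \log T$ produces the second term $4\beta_K\,\mathrm{dim}_E(\mathcal{F}, \sigma^2 T^{-1/2})(1+\log T)$ in the stated inequality. The main obstacle is simply bookkeeping the index shift carefully so that the additive $H\,\mathrm{dim}_E$ term disappears cleanly; all other ingredients (the counting lemma, the harmonic estimate, the truncation at $\epsilon$) are reused directly from the proof of Lemma~\ref{lem:width_sum_bound}.
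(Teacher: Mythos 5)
Your proof is correct and takes essentially the same route the paper intends: the paper states Lemma~\ref{lem:width_sum_bound_good} with no argument beyond the remark that it is ``directly generalized from Lemma~\ref{lem:width_sum_bound} by setting $\epsilon = \sigma^2 T^{-1/2}$ and removing the terms from bad episodes,'' and your index-shift bookkeeping --- the $j$-th largest good-episode width is dominated by the $(j + H\,\mathrm{dim}_{E}(\mathcal{F},\sigma^2 T^{-1/2}))$-th largest width overall, so the offset cancels the additive $H\,\mathrm{dim}_{E}(\mathcal{F},\delta)$ term of the counting lemma (using monotonicity of $\mathrm{dim}_{E}$ since $\delta \geq \sigma^2 T^{-1/2}$) --- is exactly the missing detail that makes that remark rigorous. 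The one cosmetic point is that the truncation at $\epsilon = \sigma^2 T^{-1/2}$ actually contributes $T\epsilon^2 = \sigma^4$, which is absorbed into the stated additive $\sigma^2$ only when $\sigma \leq 1$; this imprecision is in the paper's own statement, not in your argument.
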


Eventually, we can obtain the following regret bound, by setting the regret of bad episodes as $H$, and bounding the regret of good episodes with Lemma \ref{lem:width_sum_bound_good}.

\begin{theorem}[Improved Regret Bound]
\label{thm:regret_bound_UCB_improved}
Assume Assumption \ref{assump:bounded_output} to \ref{assump:bounded_eluder} holds. Take $\alpha = \sigma^2 T^{-1/2}$ and $\delta = 0.25$ in Lemma \ref{lem:confidence_set}, which leads to 
\begin{align*}
    \beta_k^* := 8\sigma^2 \log(4\mathcal{N}(\mathcal{F}, \sigma^2 T^{-1/2}, \|\cdot\|_2)) + H\sigma^2 T^{-1/2}(12C + \sqrt{8d\sigma^2 \log (16k^2 H)}).
\end{align*}
We have that
\begin{align*}
\textstyle
    \mathrm{Regret}(K) \leq \sqrt{H^2 T (\frac{8\beta_K}{\sigma^2} + 1)\mathrm{dim}_{E}(\mathcal{F}, \sigma^2 T^{-1/2})(1 + \log T)} + 0.5 H + H^2 \mathrm{dim}_{E}(\mathcal{F}, \sigma^2 T^{-1/2})
\end{align*}
\end{theorem}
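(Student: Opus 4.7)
The plan is to first prove the regret bound for the surrogate UCB algorithm of Appendix \ref{sec:ucb} and then invoke the reduction from UCB to Thompson Sampling due to \citet{russo2013eluder, russo2014learning, osband2014model}, which transfers a worst-case regret bound for an optimistic algorithm into a Bayesian regret bound whenever the optimistic algorithm's confidence sets are valid. So the real work is to bound $\mathrm{Regret}(K)$ for the UCB algorithm whose confidence set $\mathcal{F}_k$ is defined in \eqref{eq:confidence_set}. I would instantiate Lemma \ref{lem:confidence_set} with discretization scale $\alpha = T^{-1/2}$ and failure probability $\delta = 1/4$, so that the event $\mathcal{E}_k = \{f^* \in \mathcal{F}_k\}$ holds uniformly in $k$ with probability at least $1/2$, and $\beta_K^*$ scales only logarithmically in $T$ and in $\mathcal{N}(\mathcal{F}, T^{-1/2}, \|\cdot\|_2)$.

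On the good event, optimism gives $V_0^*(s_0^k) \leq \tilde{V}_{0,k}^{\pi_k}(s_0^k)$, so the per-episode regret is dominated by $\tilde{V}_{0,k}^{\pi_k}(s_0^k) - V_0^{\pi_k}(s_0^k)$, a value-function mismatch between two dynamics $\tilde{f}_k$ and $f^*$ under the same policy $\pi_k$. The key step is to control this mismatch with the simulation lemma (Lemma \ref{lem:simulation}), which uses the Gaussian noise structure via Lemma \ref{lem:expectation_diff} to avoid any global Lipschitz dependence and yields a bound of the form $H^{3/2}\sqrt{\mathbb{E}[\sum_h \min\{2\|\tilde f_k - f^*\|_2^2/\sigma^2, 1\}]}$ along the rollout of $\pi_k$ under $f^*$. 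The complement event contributes at most $H \sum_k \mathbb{P}(\bar{\mathcal{E}}_k) \leq H/2$ to the regret and is thus negligible.

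Next I would sum over $k$, apply Cauchy--Schwarz to pull the sum inside the square root at a cost of a $\sqrt{K}$ factor, obtaining
\begin{align*}
\mathrm{Regret}(K) \lesssim \sqrt{H^2 T \cdot \mathbb{E}\bigg[\sum_{k \in [K]} \sum_{h=0}^{H-1} \min\bigl\{ 2\|\tilde{f}_k(s_h^k,a_h^k) - f^*(s_h^k,a_h^k)\|_2^2/\sigma^2,\, 1\bigr\}\bigg]} + O(H).
\end{align*}
Since both $\tilde{f}_k$ and $f^*$ lie in $\mathcal{F}_k$ on the good event, each summand is bounded by the squared width $w_{\mathcal{F}_k}^2(s_h^k, a_h^k)$, up to the $1/\sigma^2$ factor. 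Lemma \ref{lem:width_sum_bound} then controls $\sum_{k,h} w_{\mathcal{F}_k}^2$ by $\tilde{O}(\beta_K^* \cdot \mathrm{dim}_E(\mathcal{F}, T^{-1/2}))$. Plugging $\beta_K^* \asymp \log \mathcal{N}(\mathcal{F}, T^{-1/2}, \|\cdot\|_2)$ back in yields the stated $\tilde{O}(\sqrt{H^2 T \cdot \log \mathcal{N}(\mathcal{F}, T^{-1/2}, \|\cdot\|_2) \cdot \mathrm{dim}_E(\mathcal{F}, T^{-1/2})})$ bound for the UCB algorithm.

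The main obstacle, conceptually, is the simulation step: a naive coupling would give a bound depending on the total variation between $\mathcal{N}(\tilde f_k, \sigma^2 I)$ and $\mathcal{N}(f^*, \sigma^2 I)$ multiplied by a global upper bound on the next-step value, which would reintroduce the Lipschitz-in-$V$ dependence that \citet{osband2014model} suffer from. The fix is the ``optional stopping'' trick of Lemma \ref{lem:option-stop-simulation} combined with the Gaussian-specific $\chi^2$ bound in Lemma \ref{lem:expectation_diff}, both of which are already provided; assembling them carefully so the final square-root factor depends only on expected squared dynamics error (and not on the worst-case value) is the delicate part. Once the UCB regret bound is in hand, the Thompson Sampling bound follows from the generic reduction, by taking expectation over the prior and using that the posterior sample $f_k$ and $f^*$ are exchangeable conditional on the history.
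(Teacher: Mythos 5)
You have written a correct outline of the paper's proof of the \emph{basic} UCB regret bound (Theorem~\ref{thm:regret_bound_UCB}), but the statement you were asked to prove is Theorem~\ref{thm:regret_bound_UCB_improved}, and your argument does not reach it. Two ingredients that constitute the entire point of the improved bound are missing. First, you take $\alpha = T^{-1/2}$, whereas the improved theorem requires $\alpha = \sigma^2 T^{-1/2}$; this rescaling makes the discretization part of $\beta_K^*$ proportional to $\sigma^2$, so that the ratio $\beta_K/\sigma^2$ appearing in the final bound is free of any inverse power of $\sigma$ (up to the covering number and eluder dimension evaluated at scale $\sigma^2 T^{-1/2}$). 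Second, and more importantly, you never perform the good/bad episode decomposition. Your route bounds $\sum_{k,h}\min\{2\|\tilde f_k - f^*\|_2^2/\sigma^2,1\}$ by $\frac{2}{\sigma^2}\sum_{k,h} w_{\mathcal{F}_k}^2$ and then applies Lemma~\ref{lem:width_sum_bound}, which leaves the term $\frac{8C^2 H\,\mathrm{dim}_E}{\sigma^2}$ inside the square root --- exactly the $\sigma^{-1}$ blow-up the improved theorem is designed to eliminate, and a term $\beta_K$ does not absorb. The paper's proof instead peels off the episodes containing the $H\,\mathrm{dim}_E(\mathcal{F},\sigma^2 T^{-1/2})$ largest widths, charges each such ``bad'' episode the trivial regret $H$ (this is the source of the additive $H^2\,\mathrm{dim}_E(\mathcal{F},\sigma^2 T^{-1/2})$ term in the statement, which your argument cannot produce), and applies the sharpened width-sum bound of Lemma~\ref{lem:width_sum_bound_good} --- which reads $\sigma^2 + 4\beta_K\,\mathrm{dim}_E(1+\log T)$ with no $C^2 H\,\mathrm{dim}_E$ term --- only to the remaining good episodes.

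A secondary point: Theorem~\ref{thm:regret_bound_UCB_improved} is a worst-case bound on $\mathrm{Regret}(K)$ for the UCB algorithm, so the concluding Thompson-sampling reduction you describe, while harmless, is not part of what needs to be shown here. As written, your proposal establishes (a restatement of) Theorem~\ref{thm:regret_bound_UCB}, not the improved statement; to repair it you would need to add the $\alpha=\sigma^2T^{-1/2}$ choice, define the good/bad episodes, prove or invoke Lemma~\ref{lem:width_sum_bound_good}, and track the extra $H^2\,\mathrm{dim}_E$ contribution from the bad episodes.
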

We would like to remark, that the definition of bad and good episodes is only used for the proof. We don't need to make any modification on the algorithm. Notice that, as $\beta_k^* \propto \sigma^2$, our upper bound in Theorem \ref{thm:regret_bound_UCB_improved} can only scale with $\sigma^{-1}$ through the logarithm covering number $\log(4\mathcal{N}(\mathcal{F}, \sigma^2 T^{-1/2}, \|\cdot\|_2))$ and eluder dimension $\mathrm{dim}_{E}(\mathcal{F}, \sigma^2 T^{-1/2})$. When $\mathcal{F}$ is a linear function class, both term should scale with $\mathrm{polylog}(\sigma)$, that matches the result from \citep{kakade2020information}. 
}
\section{Bounds on the Complexity Term under Linear Realizability}
\label{sec:linear_case}
We provide the upper bound on the covering number and the eluder dimension of $\mathcal{F}$ when $\mathcal{F}:= \{\theta^\top \varphi: \theta\in\mathbb{R}^{d_{\varphi}\times d}, \|\theta\|_2 \leq W\}$ where $\varphi:\mathcal{S}\times\mathcal{A} \to \mathbb{R}^{d_{\varphi}}$ is some known feature map. We first make the following standard assumption:
\begin{assumption}[Bounded Feature]
\begin{align*}
    \|\varphi(s, a)\|_2 \leq B, \forall (s, a)\in\mathcal{S}\times\mathcal{A}.
\end{align*}
\end{assumption}
\subsection{Covering Number}
\begin{theorem}[Covering Number Bound]
We have that
\begin{align*}
    \mathcal{N}(\mathcal{F}, \epsilon, \|\cdot\|_2) \leq \left(1 + \frac{2BW}{\epsilon}\right)^{d_{\varphi}}.
\end{align*}
\end{theorem}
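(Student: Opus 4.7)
The plan is a standard two-step argument: build a net on the parameter ball and transfer it to a net on the function class via Lipschitzness of the map $\theta \mapsto \theta^\top \varphi$. Specifically, let $\Theta = \{\theta : \|\theta\|_2 \leq W\}$, so that $\mathcal{F} = \{\theta^\top \varphi : \theta \in \Theta\}$. The first step I would carry out is the Lipschitz estimate: for any $\theta, \tilde\theta \in \Theta$,
\begin{align*}
\|\theta^\top \varphi - \tilde\theta^\top \varphi\|_2
= \max_{(s,a) \in \mathcal{S} \times \mathcal{A}} \|(\theta - \tilde\theta)^\top \varphi(s, a)\|_2
\leq \|\theta - \tilde\theta\|_2 \cdot \max_{(s,a)} \|\varphi(s, a)\|_2
\leq B\,\|\theta - \tilde\theta\|_2,
\end{align*}
where the first inequality is the compatibility of the chosen matrix norm with the Euclidean vector norm, and the second uses the bounded-feature assumption.

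Next, I would invoke the standard volume-based covering bound for a norm ball of radius $W$ in the appropriate finite-dimensional parameter space: there exists an $(\epsilon/B)$-cover $\{\tilde\theta_i\}$ of $\Theta$ (with respect to $\|\cdot\|_2$) whose cardinality is at most
\begin{align*}
\left(1 + \frac{2W}{\epsilon/B}\right)^{d_\varphi} = \left(1 + \frac{2BW}{\epsilon}\right)^{d_\varphi}.
\end{align*}
This is the usual packing/volume comparison argument, e.g.\ Lemma 5.2 in \citet{wainwright2019high}, applied to the relevant ball.

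The third and final step is to transfer the net: given any $f = \theta^\top \varphi \in \mathcal{F}$, pick $\tilde\theta_i$ in the cover with $\|\theta - \tilde\theta_i\|_2 \leq \epsilon/B$, and conclude via the Lipschitz estimate that $\|f - \tilde\theta_i^\top \varphi\|_2 \leq \epsilon$. Hence $\{\tilde\theta_i^\top \varphi\}$ is an $\epsilon$-cover of $\mathcal{F}$ under $\|\cdot\|_2$, yielding the claimed bound on $\mathcal{N}(\mathcal{F}, \epsilon, \|\cdot\|_2)$.

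There is no serious obstacle; the only point requiring a moment of care is the norm bookkeeping, i.e.\ matching the operator/compatibility inequality in the first step to the definition of $\|\theta\|_2$ used in the parameter ball, so that the volumetric $(1 + 2r/\epsilon)^{\dim}$ estimate can be applied with the exponent $d_\varphi$ advertised in the statement. Once this is pinned down, the three steps above compose directly.
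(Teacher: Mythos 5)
Your proposal is correct and follows essentially the same route as the paper: both establish the Lipschitz/compatibility estimate $\|\theta^\top\varphi - \tilde\theta^\top\varphi\|_2 \le B\|\theta - \tilde\theta\|_2$ (the paper does it coordinate-wise over the $d$ output dimensions and sums the squares, which is your operator-norm compatibility step written out explicitly) and then reduce to the standard volumetric $(1+2W B/\epsilon)^{\dim}$ cover of the parameter ball. The norm-bookkeeping caveat you flag is real but is shared by the paper's own proof, since the parameter ball lives in $\mathbb{R}^{d_\varphi \times d}$ while the stated exponent is $d_\varphi$.
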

\begin{proof}
Notice that, by Cauchy-Schwartz inequality, we have that
\begin{align*}
    \max_{(s, a)\in\mathcal{S}\times\mathcal{A}} \|\varepsilon_i^\top \varphi(s, a)\|_2 \leq B \|\varepsilon_i\|_2, \quad \forall \varepsilon_i\in\mathbb{R}^{d_{\varphi}}.
\end{align*}
Thus, denote $\varepsilon = [\varepsilon_i]_{i\in [d]}$, we have that
\begin{align*}
    \max_{(s, a)\in\mathcal{S}\times\mathcal{A}} \|\varepsilon^\top \varphi(s, a)\|_2^2 = \max_{(s, a)\in\mathcal{S}\times\mathcal{A}} \sum_{i\in [d]}\|\varepsilon_i^\top \varphi(s, a)\|_2^2\leq B^2 \sum_{i\in [d]}\|\varepsilon_i\|_2^2 = B^2 \|\varepsilon\|_2^2.
\end{align*}
Hence, to find an $\epsilon$-cover for $\mathcal{F}$, we just need to find an $\epsilon/B$-cover of $\{\theta:\theta\in\mathbb{R}^{d_{\varphi}\times d}, \|\theta\|_2\leq W\}$. By standard argument on the covering number of Euclidean space (e.g. Lemma 5.7 in \citep{wainwright2019high}), we can conclude the desired result.
\end{proof}
\subsection{Eluder Dimension}
\begin{theorem}[Eluder Dimension Bound]
We have that
\begin{align*}
    \mathrm{dim}_{E}(\mathcal{F}, \epsilon) \leq \frac{3d_{\varphi} e}{e-1}\log \left(3 + \frac{12W^2 B^2}{\epsilon^2}\right) + 1.
\end{align*}
\end{theorem}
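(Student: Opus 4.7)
My plan is to upper-bound the length of any maximal $\epsilon'$-independent sequence via an elliptical potential argument on $\mathbb{R}^{d_\varphi}$, handling the vector-valued output of $f$ by lifting differences to PSD matrices. Throughout, let me abbreviate $\phi_i := \varphi(s_i,a_i) \in \mathbb{R}^{d_\varphi}$ and fix any sequence $\{(s_i,a_i)\}_{i\in [n]}$ in which each element is $\epsilon'$-independent of its predecessors for some common $\epsilon'\ge \epsilon$. By linearity, for every $i$ there exist $\theta,\tilde\theta$ in the bounded ball whose difference $\Delta_i := \theta - \tilde\theta \in \mathbb{R}^{d_\varphi \times d}$ satisfies $\|\Delta_i\|_F \le 2W$, $\sum_{j<i}\|\Delta_i^\top \phi_j\|_2^2 \le (\epsilon')^2$, and $\|\Delta_i^\top \phi_i\|_2 > \epsilon'$.

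The core idea is to reduce the vector-valued constraint to a scalar quadratic form by introducing the PSD matrix
\[
M_i := \Delta_i \Delta_i^\top \in \mathbb{R}^{d_\varphi \times d_\varphi}, \qquad \operatorname{tr}(M_i) = \|\Delta_i\|_F^2 \le 4W^2,
\]
noting the identity $\|\Delta_i^\top \phi\|_2^2 = \phi^\top M_i \phi$. Introducing the regularized Gram matrix $V_{i-1} := \lambda I + \sum_{j<i}\phi_j \phi_j^\top$ for $\lambda := \epsilon^2/(4W^2)$, the independence conditions become $\operatorname{tr}(M_i V_{i-1}) \le (\epsilon')^2 + 4\lambda W^2$ and $\phi_i^\top M_i \phi_i > (\epsilon')^2$. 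Then
\[
\phi_i^\top M_i \phi_i \;=\; (V_{i-1}^{-1/2}\phi_i)^\top \bigl(V_{i-1}^{1/2} M_i V_{i-1}^{1/2}\bigr)(V_{i-1}^{-1/2}\phi_i) \;\le\; \operatorname{tr}(M_i V_{i-1}) \cdot \|\phi_i\|_{V_{i-1}^{-1}}^2,
\]
because the operator norm of a PSD matrix is bounded by its trace. Combining the two inequalities and using $\epsilon'\ge\epsilon$,
\[
\|\phi_i\|_{V_{i-1}^{-1}}^2 \;>\; \frac{(\epsilon')^2}{(\epsilon')^2 + \epsilon^2} \;\ge\; \tfrac{1}{2}.
\]

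Next, I apply the elliptical potential (matrix determinant) identity $\log\det V_n - \log\det V_0 = \sum_{i=1}^n \log\bigl(1 + \|\phi_i\|_{V_{i-1}^{-1}}^2\bigr)$ together with the AM--GM bound $\log\det V_n \le d_\varphi \log\bigl(\lambda + nB^2/d_\varphi\bigr)$ coming from $\operatorname{tr}(V_n) \le d_\varphi \lambda + nB^2$, and the identity $\log\det V_0 = d_\varphi \log \lambda$. On the lower-bound side, the universal inequality $\min(1,x) \le \tfrac{e}{e-1}\log(1+x)$ (with $x = \|\phi_i\|_{V_{i-1}^{-1}}^2$, which exceeds $1/2$) shows that each summand is at least $\tfrac{1}{3}\cdot\tfrac{e-1}{e}$. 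Substituting $\lambda = \epsilon^2/(4W^2)$, straightforward arithmetic yields
\[
n \;\le\; \frac{3 d_\varphi\, e}{e-1}\,\log\!\left(3 + \frac{12 W^2 B^2}{\epsilon^2}\right),
\]
and taking the supremum over all $\epsilon'\ge \epsilon$ together with an additive $+1$ to cover the edge case of degenerate sequences of length one gives the stated bound.

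The main obstacle is avoiding a spurious factor of $d$ (the output dimension) in the final bound. A naive reduction that picks out a single coordinate of $\Delta_i^\top\phi_i$ would lose such a factor; the lift $M_i = \Delta_i \Delta_i^\top$ is the critical trick because $\operatorname{tr}(M_i)\le 4W^2$ is dimension-free in $d$, and the entire potential argument then proceeds on the $d_\varphi$-dimensional space where the features $\phi_i$ live. After this reduction, the remaining work is careful bookkeeping to match the paper's precise constants $\tfrac{3e}{e-1}$ and $12W^2B^2/\epsilon^2$, which follow from the specific choice of $\lambda$ and the use of $\min(1,x) \le \tfrac{e}{e-1}\log(1+x)$.
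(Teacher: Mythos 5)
Your proposal is correct and follows the same overall strategy as the paper's proof: both are elliptical-potential arguments on the $d_\varphi$-dimensional feature space that first convert the $\epsilon'$-independence condition into the statement $\|\varphi_i\|_{V_{i-1}^{-1}}^2 > 1/2$ for a regularized Gram matrix $V_{i-1}$ with $\lambda \asymp \epsilon^2/W^2$, and then count such indices via a determinant/log-det potential and the transcendental inequality $(3/2)^m \le 1+\alpha m$ with $\alpha = 4W^2B^2/\epsilon^2$. Where you differ is in how that key inequality is obtained: the paper sets up a Lagrangian for $\max_\theta \mathrm{Trace}(\theta^\top\varphi_k\varphi_k^\top\theta)$ subject to $\mathrm{Trace}(\theta^\top V_k\theta)\le 2\epsilon^2$ and argues via the stationarity condition and a rank count that the optimum equals $\sqrt{2}\,\epsilon'\|\varphi_k\|_{V_k^{-1}}$, whereas you lift the difference $\Delta_i$ to the PSD matrix $M_i = \Delta_i\Delta_i^\top$ and use $\phi^\top M\phi \le \mathrm{tr}(MV)\,\|\phi\|_{V^{-1}}^2$. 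Your route is cleaner and more self-contained (the paper's KKT argument is somewhat informal), and it makes explicit why no factor of the output dimension $d$ appears: $\mathrm{tr}(M_i)=\|\Delta_i\|_F^2\le 4W^2$ is the only place the output enters. Two small points of bookkeeping: your attribution of the final $\tfrac{e}{e-1}$ to the inequality $\min(1,x)\le\tfrac{e}{e-1}\log(1+x)$ is slightly off --- in the paper that factor arises from absorbing the residual $\log n$ term via $\log x \le x/e$ after lower-bounding each summand by $\log(3/2)\ge 1/3$, and your telescoped log-det inequality still has $n$ inside the logarithm, so you need that same absorption step to reach the closed-form $\log(3+12W^2B^2/\epsilon^2)$; this is recoverable but is where the ``straightforward arithmetic'' actually lives. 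Also note the bound must hold uniformly over $\epsilon'\ge\epsilon$, which your monotonicity remarks ($\tfrac{(\epsilon')^2}{(\epsilon')^2+\epsilon^2}\ge\tfrac12$ and $\alpha$ decreasing in $\epsilon'$) correctly handle.
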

\begin{proof}
Our proof follows the idea in \citep{russo2013eluder}. Define
\begin{align*}
    w_k := \sup\left\{(\theta_1 - \theta_2)^\top \varphi(s, a):\sqrt{\sum_{i\in[k-1]}\left((\theta_1 - \theta_2)^\top \varphi_i(s_i, a_i)\right)^2 }\leq \epsilon^\prime, \theta_1, \theta_2 \in \mathbb{R}^{d_{\varphi\times d}}, \|\theta_1\|\leq W, \|\theta_2\|\leq W\right\}.
\end{align*}
For notation simplicity, define $\varphi_k := \varphi(s_i, a_i)$, $\theta := \theta_1 - \theta_2$, and $\Phi_k := \sum_{i\in[k-1]}\varphi_i\varphi_i^\top$. Obviously, we have that $\|\theta\| \leq 2W$. Moreover, by straightforward calculation, we know
\begin{align*}
    \sum_{i\in [k-1]}\left((\theta_1 - \theta_2)^\top \varphi_i(s_i, a_i)\right)^2 = \mathrm{Trace}(\theta^\top \varphi_k \theta).
\end{align*}
Define $V_k:= \Phi_k + \frac{(\epsilon^\prime)^2}{4W^2} I$, we start from considering the problem
\begin{align*}
    \max_{\theta} \mathrm{Trace}(\theta^\top \varphi_k \varphi_k^\top \theta), \quad \text{subject to}\quad \mathrm{Trace}(\theta^\top V_k  \theta) \leq 2\epsilon^2.
\end{align*}
The Lagrangian can be formed as
\begin{align*}
    \mathcal{L}(\theta, \gamma) = - \mathrm{Trace}(\theta^\top \varphi_k \varphi_k^\top \theta) + \lambda (\mathrm{Trace}(\theta^\top V_k\theta) - 2\epsilon^2), \quad \lambda \geq 0.
\end{align*}
The optimality condition of $\theta$ is
\begin{align*}
    (\lambda V_k - \varphi_k\varphi_k^\top) \theta = 0.
\end{align*}
As $V_k$ is of full rank, $\lambda V_k - \varphi_k\varphi_k^\top$ has rank at least $d_{\varphi} - 1$ (as $\varphi_k \varphi_k^\top$ is of rank $1$). So the equation
\begin{align*}
     (\lambda V_k - \varphi_k\varphi_k^\top) \theta_i = 0, \quad \theta_i \in \mathbb{R}^{d_{\varphi}}
\end{align*}
only has one non-zero solution. Substitute back, we know that (define $\|x\|_{A} := \sqrt{x^\top A x}$):
\begin{align*}
    \sup\{\mathrm{Trace}(\theta^\top \varphi_k \varphi_k^\top \theta):  \mathrm{Trace}(\theta^\top V_k  \theta) \leq \epsilon^2\} = \sqrt{2}\epsilon^\prime \|\varphi_k\|_{V_k^{-1}}.
\end{align*}
With the conclusion above, we have that
\begin{align*}
    w_k \leq \sup\{\theta^\top \varphi_k: \mathrm{Trace}(\theta^\top \Phi_k  \theta) \leq \epsilon^2, \|\theta\|\leq 2W\}\leq \sup\{\theta^\top \varphi_k:  \mathrm{Trace}(\theta^\top V_k  \theta) \leq 2\epsilon^2\} = \sqrt{2}\epsilon^\prime \|\varphi_k\|_{V_k^{-1}}.
\end{align*}
Hence, if $w_{k}\geq \epsilon^\prime$, then $\varphi_k V_k^{-1}\varphi_k \geq 0.5$. Moreover, with Matrix Determinant Lemma, if $w_i\geq \epsilon^\prime$, $\forall i<k$, we have 
\begin{align*}
    \mathrm{det}(V_k) = \mathrm{det}(V_{k-1})(1 + \varphi_k^\top V_{k}^{-1} \varphi_k) \geq  \mathrm{det}(V_{k-1})\left(\frac{3}{2}\right)\geq \cdots \geq  \mathrm{det}\left(\frac{(\epsilon^\prime)^2}{4W^2} I\right) \left(\frac{3}{2}\right)^{k-1} = \frac{(\epsilon^\prime)^{2d}}{4W^{2d}}  \left(\frac{3}{2}\right)^{k-1}.
\end{align*}
Meanwhile,
\begin{align*}
    \mathrm{det}(V_k) \leq \left(\frac{\mathrm{Trace}(V_k)}{d}\right)^d \leq \left(\frac{B^2(k-1)}{d} + \frac{(\epsilon^\prime)^2}{4W^2}\right)^d.
\end{align*}
Hence, we know
\begin{align*}
    \left(\frac{3}{2}\right)^{(k-1)/d} \leq \frac{4W^2B^2}{(\epsilon^\prime)^2}\cdot \frac{k-1}{d} + 1.
\end{align*}
Now we only need to find the largest $k$ that can make this inequality hold. For notation simplicity, define $\alpha := \frac{4W^2B^2}{(\epsilon^\prime)^2}$, $n = \frac{k-1}{d}$. As $\log(1+x) \geq \frac{x}{1+x}$ and $\log x \leq x/e$, we have
\begin{align*}
    \frac{n}{3}\leq n \log 3/2 \leq \log(\alpha + 1) + \log n \leq \log(\alpha + 1) + \log 3 + \log (n/3)\leq \log(\alpha + 1) + \log 3 + \frac{n}{3e}.
\end{align*}
Substitute back, we can obtain the desired result.
\end{proof}

\newpage
\section{Experimental Details}
\label{sec:exp_details}
\subsection{Algorithm Summary}
Our algorithm is easily built on SAC. The only difference we make is we decouple the critic network into a representation network $\phi(\cdot)$ and a linear layer $l(\cdot)$ on top of the representation. The representation network is governed by the model dynamics loss in \algabb, and we train a linear layer to predict the $Q$-value as it lies in the linear space of the representation guaranteed by our analysis. We update the representation by a momentum factor and keep the policy update the same procedure as SAC.
\subsection{Full Experiments}\label{appendix:full_exp}
\begin{table*}[h]
\caption{Performance of \algabb on various MuJoCo control suite tasks. Our method achieve strong performance even comparing to pure empirical baselines. To be specific, in hard tasks like Humanoid-ET and Ant-ET, \algabb outperforms the baselines significantly. Results with $^*$ are directly adopted from MBBL~\citep{wang2019benchmarking}. We also provide the SoTA model-free RL method SAC as a reference.}
\vspace{0.3em}
\footnotesize
\setlength\tabcolsep{3.5pt}
\label{tab:MuJoCo_results_full}
\centering
\begin{tabular}{p{2.5cm}p{2.5cm}p{2.5cm}p{2.5cm}p{2.5cm}p{2.5cm}p{2cm}}
\toprule
& Swimmer & Ant-ET & Hopper-ET & Pendulum \\ 
\midrule  
ME-TRPO$^*$ & 30.1$\pm$9.7 & 42.6$\pm$21.1 & 4.9$\pm$4.0 & 177.3$\pm$1.9\\
PETS-RS$^*$  & 42.1$\pm$20.2 & 130.0$\pm$148.1 & 205.8$\pm$36.5 & 167.9$\pm$35.8\\
PETS-CEM$^*$  & 22.1$\pm$25.2 & 81.6$\pm$145.8 & 129.3$\pm$36.0 & 167.4$\pm$53.0\\
DeepSF & 25.5$\pm$13.5 & 768.1$\pm$44.1 & 548.9$\pm$253.3 & 168.6$\pm$5.1 \\
{\bf \algabb} & 42.6$\pm$4.2 & 806.2$\pm$60.2 & 732.2$\pm$263.9 & 169.5$\pm$0.6 \\
\midrule
SAC$^*$  & 41.2$\pm$4.6 & 2012.7$\pm$571.3 & 1815.5$\pm$655.1 & 168.2$\pm$9.5 \\
\bottomrule 
\end{tabular}
\centering
\begin{tabular}{p{2.5cm}p{2.5cm}p{2.5cm}p{2.5cm}p{2.5cm}p{2.5cm}p{2cm}}
\toprule
& Reacher & Cartpole & I-pendulum & Walker-ET \\ 
\midrule  
ME-TRPO$^*$ & -13.4$\pm$5.2 & 160.1$\pm$69.1 & -126.2$\pm$86.6 & -9.5$\pm$4.6\\
PETS-RS$^*$ & -40.1$\pm$6.9 &  195.0$\pm$28.0 & -12.1$\pm$25.1 & -0.8$\pm$3.2 \\
PETS-CEM$^*$ & -12.3$\pm$5.2 &  199.5$\pm$3.0 & -20.5$\pm$28.9 & -2.5$\pm$6.8 \\
DeepSF & -16.8$\pm$3.6 & 194.5$\pm$5.8 & -0.2$\pm$0.3 & 165.6$\pm$127.9\\
{\bf \algabb} & -7.2$\pm$1.1 & 138.2$\pm$39.5 & 0.0$\pm$0.0 & 501.58$\pm$204.0  \\
\midrule
SAC$^*$ & -6.4$\pm$0.5 & 199.4$\pm$0.4 & -0.2$\pm$0.1 & 2216.4$\pm$678.7\\
\bottomrule 
\end{tabular}
\centering
\begin{tabular}{p{2.5cm}p{2.5cm}p{2.5cm}p{2.5cm}p{2.5cm}p{2.5cm}p{2cm}}
\toprule
& MountainCar & Acrobot & SlimHumanoid-ET & Humanoid-ET \\ 
\midrule  
ME-TRPO$^*$  & -42.5$\pm$26.6 & 68.1$\pm$6.7 & 76.1$\pm$8.8 & 776.8$\pm$62.9\\
PETS-RS$^*$ & -78.5$\pm$2.1 & -71.5$\pm$44.6 & 320.7$\pm$182.2 & 106.9$\pm$102.6\\
PETS-CEM$^*$  & -57.9$\pm$3.6 & 12.5$\pm$29.0 & 355.1$\pm$157.1 & 110.8$\pm$91.0 \\
DeepSF & -17.0$\pm$23.4 & -74.4$\pm$3.2 & 533.8$\pm$154.9 & 241.1$\pm$116.6 \\
{\bf \algabb}  & 50.3$\pm$1.1 & -69.0$\pm$3.3 & 986.4$\pm$154.7 & 886.9$\pm$95.2 \\
\midrule
SAC$^*$  & 52.6$\pm$0.6 & -52.9$\pm$2.0 & 843.6$\pm$313.1 & 1794.4$\pm$458.3  \\
\bottomrule 
\end{tabular}
\end{table*}
\newpage

\subsection{Ablations}\label{appendix:ablations}
\begin{table*}[h]
\caption{Ablation Suty of \algabb on MuJoCo tasks. We see that a small momentum factor help stabilize the performance, especially in environments like  Huamoid and Hopper-ET.}
\footnotesize
\setlength\tabcolsep{3.5pt}
\label{tab:ablation}
\centering
\begin{tabular}{p{2.5cm}p{2.5cm}p{2.5cm}p{2.5cm}p{2.5cm}p{2cm}p{2cm}}
\toprule
& Hopper-ET & Ant-ET & S-Humanoid-ET & Humanoid-ET \\ 
\midrule  
\algabb-0.9  & 593.2$\pm$37.4 & \textbf{877.7$\pm$45.9} & 881.6$\pm$385.2 & 232.9$\pm$63.4 \\
\algabb-0.99 & 305.9$\pm$13.4 & 707.9$\pm$51.1 & 629.3$\pm$106.9 & 818.1$\pm$130.6 \\
\algabb-0.999 & \textbf{732.2$\pm$263.9} & 806.2$\pm$60.2 & \textbf{986.4$\pm$154.7} & \textbf{886.9$\pm$95.2} \\
\bottomrule 
\end{tabular}
\end{table*}
\paragraph{Momentum Update} Our ablation experiments are trying to study an important design choice of the practical algorithm: the momentum used to update the critic function. We summarize the results in Table~\ref{tab:ablation}. We can see that using a small large momentum factor such as 0.999 shows better performance. This is intuitively understandable: large momentum factor slows down the update speed of the representation of the critic function and thus stabilize the training. Such phenomenon illustrates the importance of slowly update the representation.

\begin{wrapfigure}[11]{R}{0.4\textwidth}
\centering
\vspace{-9mm}
\includegraphics[width=0.9\linewidth]{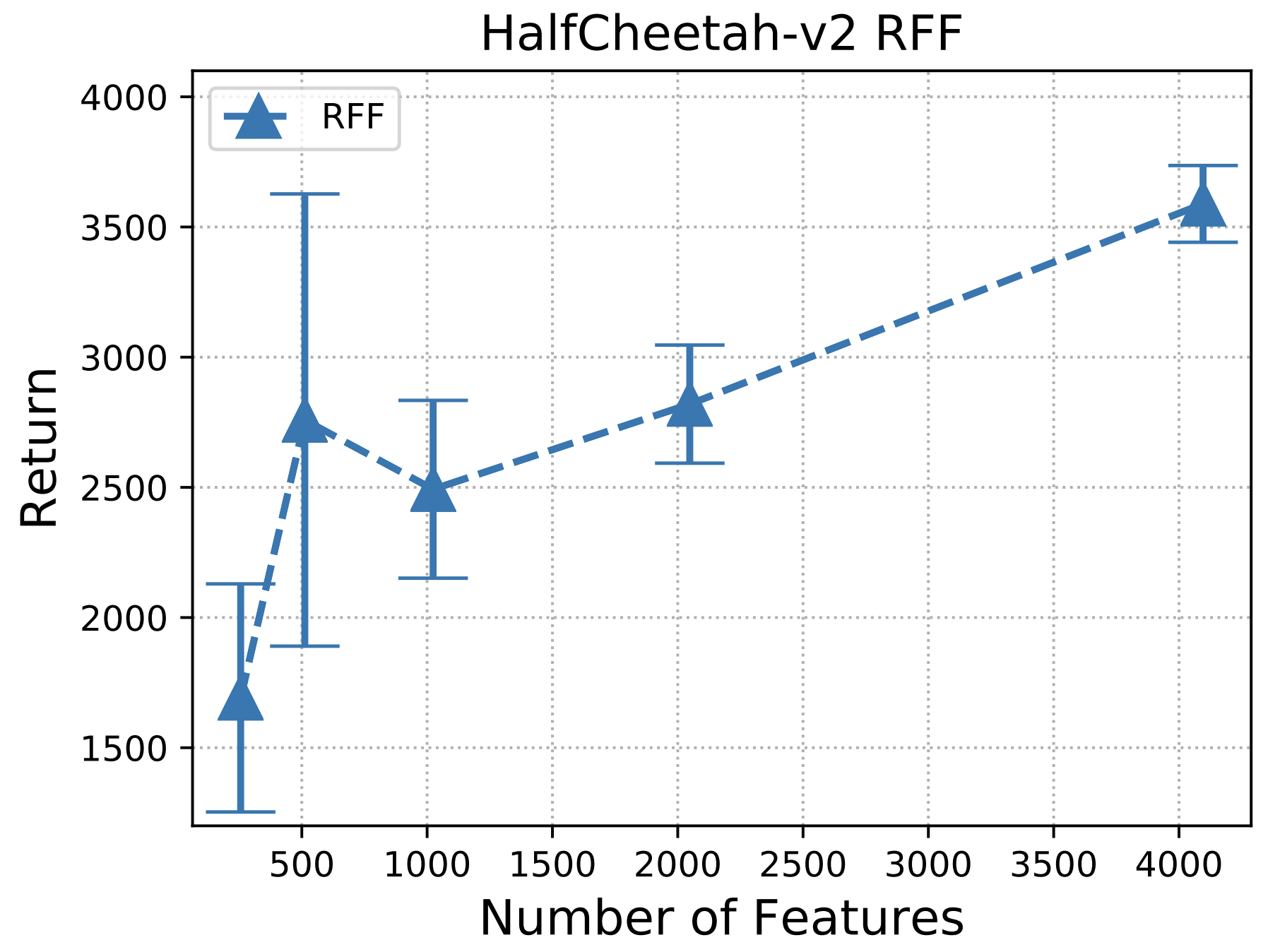}
\vspace{-2mm}
\caption{\small 
Increasing the number of random features can also lead to a performance gain.}
\label{fig:random_dim}
\end{wrapfigure}

\paragraph{Random Feature Dimension} We also conduct the experiments on how does the random feature dimension affect the final performance of the algorithm. We plot the results in HalfCheetah environment in Fig.~\ref{fig:random_dim}. We can see that when we increasing the random feature dimension, we see a performance gain on the final return. This suggests that using a larger number of feature dimension would help the performance. 

\paragraph{MLP Network for Critic Network} We also conduct an experiment to study whether adding a MLP network on top of our representation could work. We show such ablation in Tab.~\ref{tab:mlp}. From the results, we see that the performance of MLP network is in generally better than the Linear network.

\begin{table*}[h]
\centering
{\color{black} 
\caption{Comparison of \algabb linear critic network and critic network. Results show that in general MLP network will further improve the performance.}
\footnotesize
\setlength\tabcolsep{3.5pt}
\label{tab:mlp}
\centering
\begin{tabular}{p{2.5cm}p{2.5cm}p{2.5cm}p{2.5cm}p{2.5cm}p{2cm}p{2cm}}
\toprule
& Reacher & MountainCar & Cartpole & Acrobot\\ 
\midrule  
\algabb-Linear  & -7.2$\pm$1.1 & 50.3$\pm$1.1 & 138.2$\pm$39.5 & -69.0$\pm$3.3\\
\algabb-MLP & -6.8$\pm$0.4 & \textbf{53.8}$\pm$\textbf{1.1} & 171.9$\pm$31.0 & \textbf{-15.6}$\pm$\textbf{1.9}\\
SAC & \textbf{-6.4}$\pm$\textbf{0.5} & 52.6$\pm$0.6 & \textbf{199.4}$\pm$\textbf{0.4} & -52.9$\pm$2.0\\
\bottomrule 
\end{tabular}}

\setlength\tabcolsep{3.5pt}
\begin{tabular}{p{2.5cm}p{2.5cm}p{2.5cm}p{2.5cm}p{2.5cm}p{2cm}p{2cm}}
\toprule
& Pendulum & I-Pendulum & Walker-ET & S-Humanoid-ET\\ 
\midrule  
\algabb-Linear  & \textbf{169.5}$\pm$\textbf{0.6} & \textbf{0.0}$\pm$\textbf{0.0} & 501.6$\pm$204.0 & 986.4$\pm$154.7\\
\algabb-MLP & 165.9$\pm$4.2 & \textbf{0.0}$\pm$\textbf{0.0} & 1005.7$\pm$458.4 & \textbf{2521.1}$\pm$\textbf{420.8}\\
SAC & 168.2$\pm$9.5 & -0.2$\pm$0.1 & \textbf{2216.4}$\pm$\textbf{678.7} & 843.6$\pm$313.1\\
\bottomrule 
\end{tabular}
\end{table*}

{\color{black} 
\subsection{Comparison to LC3}\label{appendix:lc3}
We provide a comparison of empirical results with LC3~\citep{kakade2020information}, which is also an algorithm with rigorous theoretical guarantees. Despite the major difference that we are learning the representation while LC3 assumes a given feature, the performance of \algabb is much better than LC3 in tasks like Mountain Car and Hopper.}

\begin{table*}[h]
\centering
{\color{black} 
\caption{Comparison of \algabb with LC3 on MuJoCo tasks. LC3 only achieves good performance on relatively easy tasks like Reacher. However, their performance on Hopper and Mountain-Car is much worse than \algabb. }
\footnotesize
\setlength\tabcolsep{3.5pt}
\label{tab:lc3}
\centering
\begin{tabular}{p{2.5cm}p{2.5cm}p{2.5cm}p{2.5cm}p{2.5cm}p{2cm}p{2cm}}
\toprule
& Reacher & MountainCar & Hopper \\ 
\midrule  
\algabb  & -7.2$\pm$1.1 & \textbf{50.3}$\pm$\textbf{1.1} & \textbf{732.2}$\pm$\textbf{263.9} \\
LC3 & \textbf{-4.1}$\pm$\textbf{1.6} & 27.3$\pm$8.1 & -1016.5$\pm$607.4 \\
\bottomrule 
\end{tabular}}
\end{table*}

\newpage
\subsection{Performance Curves}
We provide an additional performance curve including ME-TRPO in Figure \ref{fig:MuJoCo1} for a reference.
\begin{figure*}[h]
    \centering
    \includegraphics[width=0.99\textwidth]{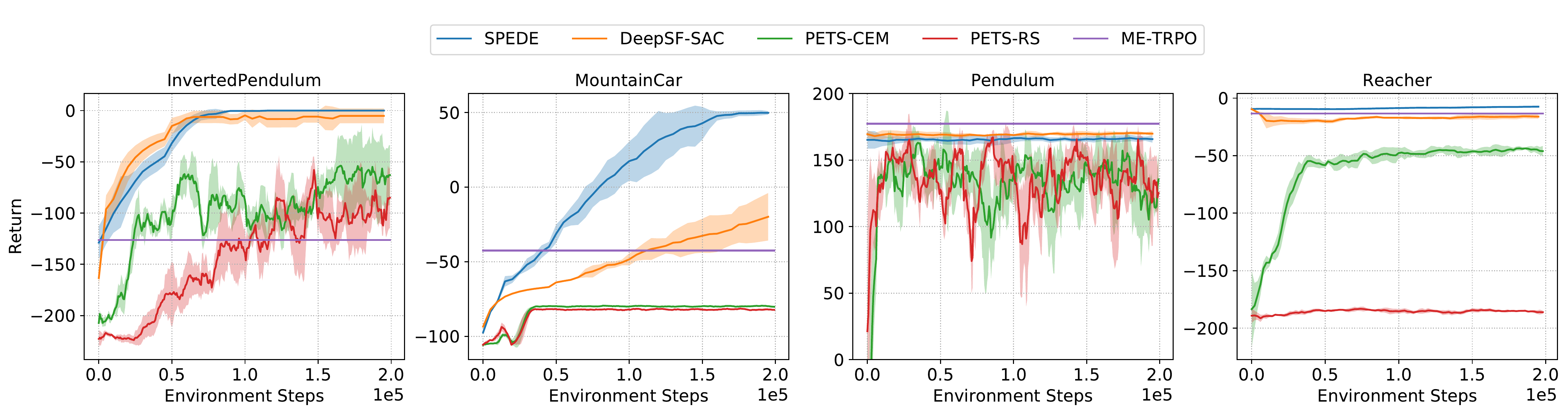}
    \caption{\footnotesize \textbf{Experiments on MuJoCo:} We show curves of the return versus the training steps for \algabb and model-based RL baselines. We also include the final performance of ME-TRPO from ~\citep{wang2019benchmarking} for reference.}
    \label{fig:MuJoCo1}
\end{figure*}

\subsection{Hyperparameters}
\label{appendix:hyperparam}
We conclude the hyperparameter we use in our experiments in the following.
\begin{table*}[h]
\caption{Hyperparameters used for \algabb in all the environments in MuJoCo.}
\footnotesize
\setlength\tabcolsep{3.5pt}
\label{tab:hyper}
\centering
\begin{tabular}{p{5cm}p{3cm}p{5cm}p{2.5cm}p{2.5cm}p{2cm}p{2cm}}
\toprule
& Hyperparameter Value \\ 
\midrule
Actor lr & 0.0003 \\
Model lr & 0.0001 \\
Actor Network Size & (1024, 1024, 1024) \\
Fourier Feature Size & 1024 \\
Discount & 0.99\\
Target Update Tau & 0.005 \\
Model Update Tau & 0.001 \\
Batch Size & 256 \\
\bottomrule 
\end{tabular}
\end{table*}

\end{document}